\newcommand{\fullFigGap}[0]{\vspace{-1.5\baselineskip}} 
\newcommand{\xxnote}[3]{}
  \renewcommand{\xxnote}[3]{\color{#2}{#1: #3}}
\newtheorem{problem}{Problem}
\newtheorem{definition}{Definition}
\newtheorem{proposition}{Proposition}
\newtheoremstyle{hypstyle}
{3pt} 
{3pt} 
{\itshape} 
{} 
{\bfseries} 
{.} 
{.5em} 
{} 
\theoremstyle{hypstyle} 
\newtheorem{observation}{O}
\newcommand{\eref}[1]{(\ref{#1})}
\newcommand{\sref}[1]{Section~\ref{#1}}
\newcommand{\figref}[1]{Fig.~\ref{#1}}
\newcommand{\algorithmStyle}[0]{\small}
\DeclareMathOperator*{\argmin}{arg\,min}
\DeclareMathOperator*{\argmax}{arg\,max}
\newcommand{\argmaxprob}[1]{\underset{#1}{\argmax}}
\newcommand{\argminprob}[1]{\underset{#1}{\argmin}}
\newcommand{\abs}[1]{\left|#1 \right|}
\newcommand{\expect}[2]{\mathbb{E}_{#1}\left[#2\right]}
\newcommand{\real}[0]{\mathbb{R}}
\newcommand{\bbm}{\begin{bmatrix}}
\newcommand{\ebm}{\end{bmatrix}}
\newcommand{\pair}[2]{\left( #1, #2\right)}
\newcommand{\set}[1]{\left\lbrace #1\right\rbrace}
\newcommand{\setst}[2]{\left\lbrace #1\;\middle|\;#2\right\rbrace}
\newcommand{\setInsert}[0]{\xleftarrow{\scriptscriptstyle +}}
\newcommand{\Normal}[0]{\mathcal{N}}
\newcommand{\stateSpace}[0]{\mathcal{X}}
\newcommand{\Dim}[0]{d}
\newcommand{\state}[0]{x}
\newcommand{\stateSet}[0]{X}
\newcommand{\obsStateSpace}[0]{\stateSpace_{\mathrm{obs}}}
\newcommand{\freeStateSpace}[0]{\stateSpace_{\mathrm{free}}}
\newcommand{\Path}[0]{\xi}
\newcommand{\ind}[0]{\tau}
\newcommand{\PathSet}[0]{\Xi}
\newcommand{\PathSetFeas}[0]{\PathSet_{\mathrm{feas}}}
\newcommand{\cost}[0]{c}
\newcommand{\costmax}[0]{c_{\mathrm{max}}}
\newcommand{\graph}[0]{G}
\newcommand{\vertexSet}[0]{V}
\newcommand{\edgeSet}[0]{E}
\newcommand{\weight}[0]{\cost}
\newcommand{\vertu}[0]{u}
\newcommand{\vertv}[0]{v}
\newcommand{\edge}[0]{e}
\newcommand{\connect}[0]{\mathtt{Link}}
\newcommand{\compose}[0]{\oplus}
\newcommand{\remove}[0]{\ominus}
\newcommand{\alg}[0]{\textsc{Alg}}
\newcommand{\denseGraph}[0]{\graph_{\mathrm{dense}}}
\newcommand{\denseVertexSet}[0]{\vertexSet_{\mathrm{dense}}}
\newcommand{\denseEdgeSet}[0]{\edgeSet_{\mathrm{dense}}}
\newcommand{\pp}[0]{\Lambda}
\newcommand{\start}[0]{\state_{s}}
\newcommand{\goal}[0]{\state_{g}}
\newcommand{\feat}[0]{y}
\newcommand{\featDim}[0]{m}
\newcommand{\policy}[0]{\pi}
\newcommand{\policyClass}[0]{\Pi}
\newcommand{\loss}[0]{\mathcal{L}}
\newcommand{\risk}[0]{\mathcal{R}}
\newcommand{\encoderParam}[0]{\phi}
\newcommand{\decoderParam}[0]{\theta}
\newcommand{\encoder}[0]{q_\encoderParam}
\newcommand{\decoder}[0]{p_\decoderParam}
\newcommand{\dataSet}[0]{\mathcal{D}}
\newcommand{\condVar}[0]{\feat}
\newcommand{\latentVar}[0]{z}
\newcommand{\latentDim}[0]{L}
\newcommand{\stateSetVar}[0]{\stateSet}
\newcommand{\stateVar}[0]{\state}
\newcommand{\inputSP}[0]{\stateSetVar_{\mathrm{sp}}}
\newcommand{\inputBottleneck}[0]{\stateSetVar_{\mathrm{bn}}}
\newcommand{\sparseGraph}[0]{\graph_{\mathrm{sparse}}}
\newcommand{\sparseEdgeSet}[0]{\edgeSet_{\mathrm{sparse}}}
\newcommand{\inflation}[0]{\eta}
\newcommand{\inflatedGraph}[0]{\graph_{\mathrm{inf}}}
\newcommand{\inflatedEdgeSet}[0]{\edgeSet_{\mathrm{inf}}}
\newcommand{\inflatedPath}[0]{\Path_{\mathrm{inf}}}
\newcommand{\densePath}[0]{\Path_{\mathrm{dense}}}
\newcommand{\vertexSetBN}[0]{\vertexSet_{\mathrm{bn}}}
\newcommand{\edgeSetBN}[0]{\edgeSet_{\mathrm{bn}}}
\newcommand{\PathBN}[0]{\Path_{\mathrm{bn}}}
\newcommand{\inputDiversity}[0]{\stateSetVar_{\mathrm{div}}}
\newcommand{\PathSetInv}[0]{\PathSet_{\mathrm{inv}}}
\newcommand{\PathSetDiv}[0]{\PathSet_{\mathrm{div}}}
\newcommand{\edgeSetSC}[0]{\edgeSet_{\mathrm{sc}}}
\newcommand{\inputLEGO}[0]{\stateSetVar_{\mathrm{lego}}}
\newcommand{\PathSetSparse}[0]{\PathSet_{\mathrm{sparse}}}
\newcommand{\vertexSetLEGO}[0]{\vertexSet_{\mathrm{lego}}}
\newcommand{\edgeSetInv}[0]{\edgeSet_{\mathrm{inv}}}
\newcommand{\algHalton}[0]{\textsc{Halton}\xspace}
\newcommand{\algGaussian}[0]{\textsc{Gaussian}\xspace}
\newcommand{\algRBB}[0]{\textsc{RBB}\xspace}
\newcommand{\algMAPRM}[0]{\textsc{MAPRM}\xspace}
\newcommand{\algWIS}[0]{\textsc{WIS}\xspace}
\newcommand{\algAbstract}[0]{\textsc{ExtractNodes}\xspace}
\newcommand{\algSP}[0]{\textsc{ShortestPath}\xspace}
\newcommand{\algBottleneck}[0]{\textsc{BottleneckNode}\xspace}
\newcommand{\algDiversity}[0]{\textsc{DiversePathSet}\xspace}
\newcommand{\algLEGO}[0]{\textsc{LEGO}\xspace}
\title{
LEGO: Leveraging Experience in Roadmap Generation for Sampling-Based Planning
}
\author{Rahul Kumar*$^{1,}$, Aditya Mandalika*$^{2}$, Sanjiban Choudhury*$^{2}$ and Siddhartha S.~Srinivasa*$^{2}$
\thanks{*This work was (partially) funded by the National Institute of Health R01 (\#R01EB019335), National Science Foundation CPS (\#1544797), National Science Foundation NRI (\#1637748), the Office of Naval Research, the RCTA, Amazon, and Honda.}
\thanks{$^{1}$Department of Computer Science, Indian Institute of Technology, Kharagpur {\tt\small \{vernwalrahul\}@iitkgp.ac.in}}%
\thanks{$^{2}$Paul G. Allen School of Computer Science and Engineering, University of Washington {\tt\small \{adityavk, sanjibac, siddh\}@cs.uw.edu}}
}
\begin{document}

\maketitle
\thispagestyle{empty}
\pagestyle{empty}


\begin{abstract}
We consider the problem of leveraging prior experience to generate roadmaps in sampling-based motion planning. A desirable roadmap is one that is sparse, allowing for fast search, with nodes spread out at key locations such that a low-cost feasible path exists. An increasingly popular approach is to learn a distribution of nodes that would produce such a roadmap. State-of-the-art is to train a conditional variational auto-encoder (CVAE) on the prior dataset with the shortest paths as target input. While this is quite effective on many problems, we show it can fail in the face of complex obstacle configurations or mismatch between training and testing.

We present an algorithm \algLEGO that addresses these issues by training the CVAE with target samples that satisfy two important criteria. Firstly, these samples belong only to \emph{bottleneck} regions along near-optimal paths that are otherwise difficult-to-sample with a uniform sampler. Secondly, these samples are spread out across \emph{diverse regions} to maximize the likelihood of a feasible path existing. We formally define these properties and prove performance guarantees for \algLEGO.
We extensively evaluate \algLEGO on a range of planning problems, including robot arm planning, and report significant gains over  heuristics as well as learned baselines. 
\end{abstract}

\section{Introduction}
\label{sec:introduction}

We examine the problem of leveraging prior experience in sampling-based motion planning. In this framework, the continuous configuration space of a robot is sampled to construct a graph or \emph{roadmap}~\cite{kavraki1996probabilistic,Lav06} where vertices represent robot configurations and edges represent
potential movements of the robot. A shortest path algorithm~\cite{hart1968formal} is then run to compute a path between any two vertices on the roadmap.
The main challenge is to place a \emph{small set of samples in key locations} such that the algorithm can find a high quality path with small computational effort as shown in Fig.~\ref{fig:graph_lego}. 

Typically, low dispersion samplers such as Halton sequences~\cite{Hal60} are quite effective in uniformly covering the space and thus bounding the solution quality~\cite{janson2015deterministic} (\figref{fig:graph_halton}). 
However, as they decrease dispersion uniformly in C-space, a narrow passage with $\delta$ clearance in a $d$-dimensional space requires $O((\frac{1}{\delta})^{d})$ samples to find a path. 
This motivates the need for biased sampling to \emph{selectively densify} in regions where there might be a narrow passage~\citep{holleman2000framework, hsu2005hybrid,burns2005sampling,hsu2006probabilistic,kurniawati2008workspace}. These techniques are applicable across a wide range of domains and perform quite well in practice. 

However, not all narrow passages are relevant to a given query. Biased sampling techniques, which do not have access to the likelihood of the optimal path passing through a region, can still drop samples in more regions than necessary.
Interestingly, the different environments that a robot operates in share a lot of structural similarity.
We can use information extracted from planning on one such environment to decide how to sample on another; we can \emph{learn} sampling distributions using tools such as a conditional variational auto-encoder (CVAE). \citet{ichter2017learning} propose a useful approximation to train a learner to sample along the \emph{predicted} shortest path: given a training dataset of worlds, compute shortest paths, and train a model to independently predict nodes belonging to the path. After all, the best a generative model can do is to sample only along the true shortest path. However, this puts \emph{all of the burden} on the learner. Any amount of prediction error, due to approximation or train-test mismatch, results in failure to find a feasible path. 

We argue that a sampler, instead of trying to predict the shortest path, needs to only identify key regions to focus sampling at, and let the search algorithm determine the shortest path. Essentially, we ask the following question:
\begin{quote}
How can we share the responsibility of finding the shortest path between the sampler and search?
\end{quote}

\begin{figure}[!t]
  \centering
  \begin{subfigure}[b]{0.45\linewidth}
    \centering
    \includegraphics[width=\linewidth]{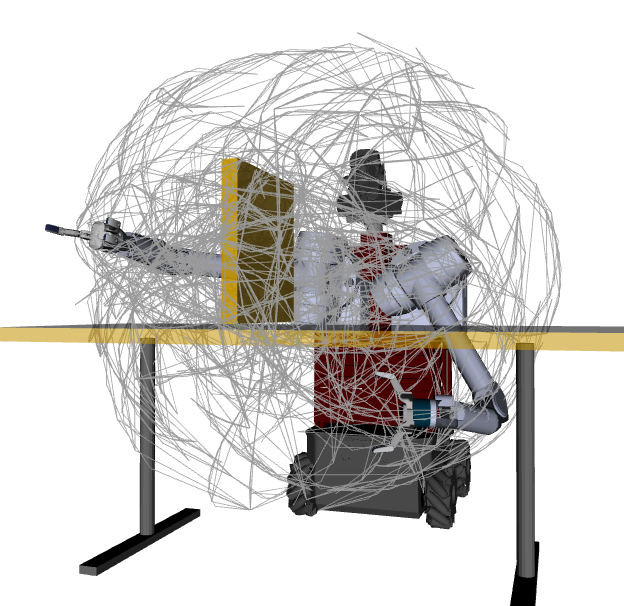}
    \caption{}
    \label{fig:graph_halton}
  \end{subfigure}
  \begin{subfigure}[b]{0.45\linewidth}
    \centering
    \includegraphics[width=\linewidth]{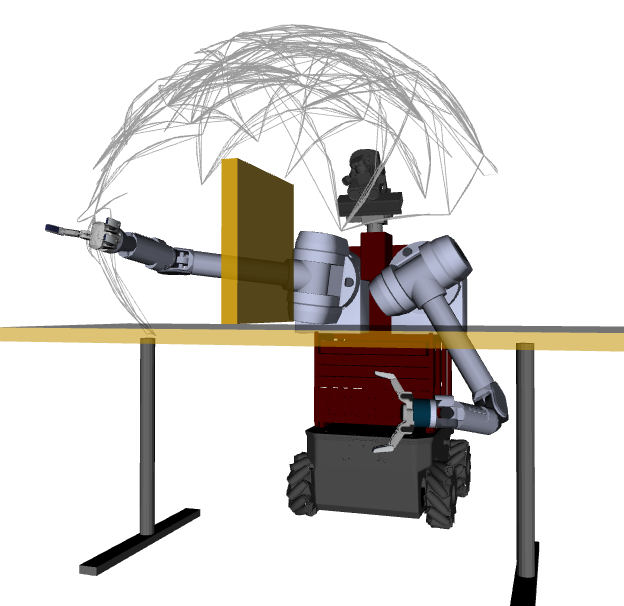}
    \caption{}
    \label{fig:graph_lego}
  \end{subfigure}

  \caption{
Comparison of roadmaps generated from (a) uniform Halton sequence sampler and (b) from a generative model trained using \algLEGO. The task is to plan from the shown configuration over the table and obstacle to the other side. The graph is visualized by end effector traces of the edges. \fullFigGap
  }
  \label{fig:intro}
\end{figure}


\begin{figure*}[!t]
    \centering
    \includegraphics[width=1.0\textwidth]{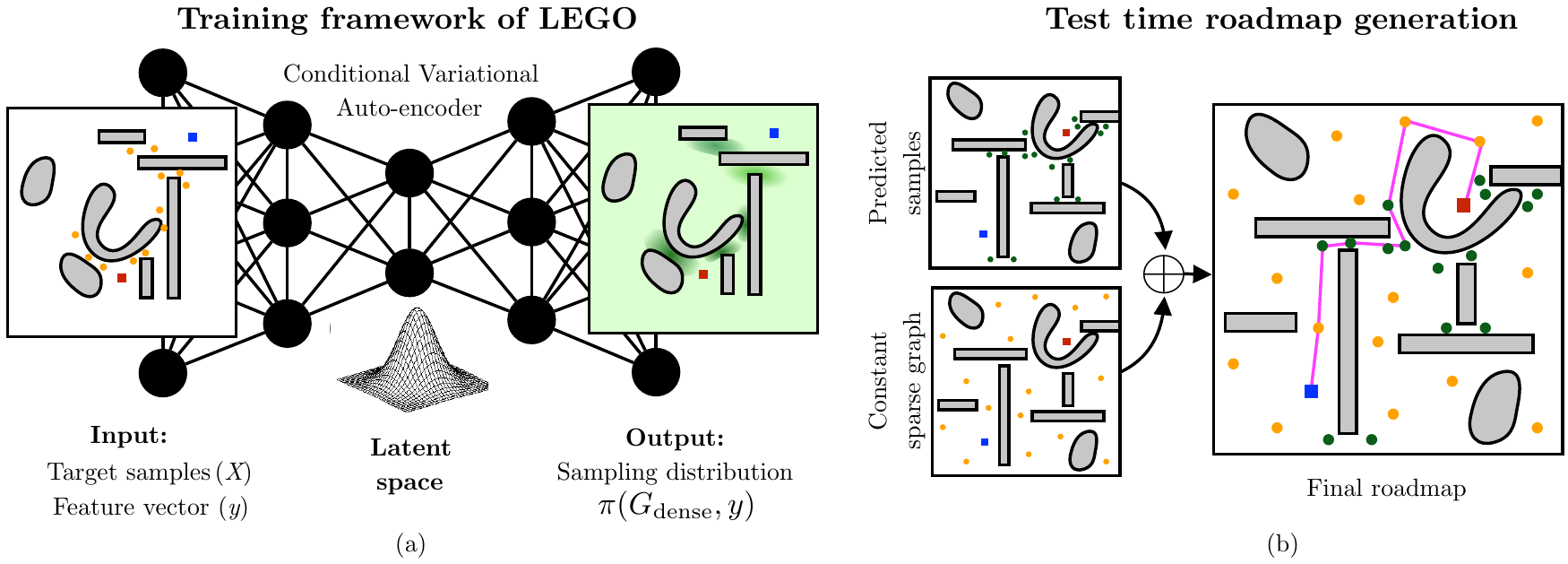}
    \caption{%
    \label{fig:overall_framework}
    The \algLEGO framework for training a CVAE to predict a roadmap. (a) The training process for learning a generative sampling distribution using a CVAE. The input is a pair of candidate samples and feature vector. (b) At test time, the model is sampled to get vertices which are then composed with a constant sparse graph to get a final roadmap. \fullFigGap
    }
\end{figure*}%

Our key insight is for the sampler to predict not the shortest path, but samples that possess two characteristics: (a) samples in bottleneck regions; these are regions containing near-optimal paths, but are difficult for a uniform sampler to reach, and (b) samples that exhibit diversity; train-test mismatch is common and to be robust to it we need to sample nodes belonging to a diverse set of alternate paths. The search algorithm can then operate on a sparse graph containing useful but diverse samples to compute the shortest path. 
We present an algorithmic framework, \textbf{L}everaging \textbf{E}xperience with \textbf{G}raph \textbf{O}racles (\algLEGO{}) summarized in \figref{fig:overall_framework}, for training a CVAE on a prior database of worlds to learn a generative model that can be used for roadmap construction. During training (\figref{fig:overall_framework}a), \algLEGO processes a uniform dense graph to identify a sparse subset of vertices. These vertices are a \emph{diverse} set of \emph{bottleneck} nodes through which a near-optimal path must pass. These are then fed into a CVAE~\cite{kingma2013auto} to learn a generative model. At test time (\figref{fig:overall_framework}b), the model is sampled to get a set of vertices which is additionally composed with a sparse uniform graph to get a final roadmap. This roadmap is then used by the search algorithm to find the shortest path.

We make the following contributions:
\begin{enumerate}
	\item A framework for training a CVAE to predict a roadmap with different target inputs. We identify two main shortcomings of the state-of-the-art \cite{ichter2017learning} that uses the shortest path as the target input - failures in approximation, and failures due to train-test mismatch (\sref{sec:framework}).
	\item $\algLEGO$, an algorithm that tackles both of these issues. It first generates multiple diverse shortest paths, and then extracts bottleneck nodes along such paths to use as the target input for the CVAE (\sref{sec:approach}).
	\item We show that \algLEGO outperforms several learning and heuristic sampling baselines on a set of $\mathbb{R}^2$, $\mathbb{R}^5$, $\mathbb{R}^7$, $\mathbb{R}^8$ and $\real^9$ problems. In particular, we show that it is robust to changes in training and test distribution (\sref{sec:results}).
\end{enumerate}

\section{Related Work}
\label{sec:relatedWork}

The seminal work of \citet{HsuLatMot99} provides a crisp analysis of the shortcomings of uniform sampling techniques in the presence of artifacts such as narrow passages. This has led to a plethora of non-uniform sampling approaches that densify selectively~\citep{holleman2000framework, hsu2005hybrid,burns2005sampling,hsu2006probabilistic,kurniawati2008workspace}. 

Adaptive sampling in the context of roadmaps aims to exploit structure of the environment to place samples in promising areas. A number of works exploited structure of the workspace to achieve this. While some of them attempt to sample between regions of collision to identify narrow passages~\citep{wilmarth1999maprm, holleman2000framework, hsu2003bridge, kurniawati2004workspace, yang2004adapting, van2005using}, others sample near or on the obstacles \cite{amato1998obprm, boor1999gaussian}. There are approaches that divide the configuration space into regions and either select different region-specific planning strategies~\citep{morales2005machine} or use entropy of samples in a particular region to refine sampling~\citep{rodriguez2008resampl}. Other methods try to model the free space to speed up planning~\cite{dalibard2011linear,pan2013faster,choudhury2016pareto}. While these techniques are quite successful in a large set of problems, they can place samples in regions where an optimal path is unlikely to traverse. 

A different class of solutions look at adapting sampling distributions online during the planning cycle. This requires a trade-off between exploration of the configuration space and exploitation of the current best solution. Preliminary approaches define a utility function to do so~\citep{tang2006obstacle,burns2005toward} or use online learning~\citep{kurniawati2008workspace}; however these are not amenable to using priors. \citet{diankov2007randomized} employs statistical techniques to sample around a search tree. \citet{zucker2008adaptive,kuo2018deep} formalize sampling as a model-free reinforcement learning problem and learn a parametric distribution. Since these problems are non i.i.d learning problems, they do require interactive learning and do not enjoy the strong guarantees of supervised learning. 

There has been a lot of recent effort on finding low dimensional structure in planning~\citep{vernaza2011learning}. In particular, generative modeling tools like variational autoencoders~\citep{doersch2016tutorial} have been used to great success~\citep{chen2016dynamic,ha2018approximate,ichter2018robot,zhang2018learning,qureshi2018deeply}. We base our work on \citet{ichter2017learning} where a CVAE is trained to learn the shortest path distribution.

\section{Problem Formulation}
\label{sec:problem_formulation}

Given a database of prior worlds, the overall goal is to learn a policy that predicts a roadmap which in turn is used by a search algorithm to efficiently compute a high quality feasible path.
Let $\stateSpace$ denote a $\Dim-$dimensional configuration space. Let $\obsStateSpace$ be the portion in collision and $\freeStateSpace = \stateSpace \setminus \obsStateSpace$ denote the free space. Let a path $\Path : [0, 1] \rightarrow \stateSpace$ be a continuous mapping from index to configurations. A path $\Path$ is said to be collision-free if $\Path(\ind) \in \freeStateSpace$ for all $\ind \in [0, 1]$. 
Let $\cost(\Path)$ be a cost functional that maps $\Path$ to a bounded non-negative cost $[0, \costmax]$. Moreover, we set $\cost(\emptyset) = \costmax$.  We define a \emph{motion planning problem} $\pp = \set{\start, \goal, \freeStateSpace}$ as a tuple of start configuration $\start \in \freeStateSpace$, goal configuration $\goal \in \freeStateSpace$ and free space $\freeStateSpace$. Given a problem, a path $\Path$ is said to be \emph{feasible} if it is collision-free, $\Path(0)=\start$ and $\Path(1)=\goal$. Let $\PathSetFeas$ denote the set of all feasible paths. We wish to solve the \emph{optimal} motion planning problem by finding a feasible path $\Path^*$ that minimizes the cost functional $\cost(.)$, i.e. $\cost(\Path^*) = \inf_{\Path \in \PathSetFeas} \cost(\Path)$.

We now embed the problem on a graph $\graph = \pair{\vertexSet}{\edgeSet}$ such that each vertex $\vertv \in \vertexSet$ is an element of $\vertv \in \stateSpace$. The graph follows a connectivity rule expressed as an indicator function $\connect: \stateSpace \times \stateSpace \rightarrow \{0, 1\}$ to denote if two configurations should have an edge\footnote{Note this does not involve collision checking. We consider undirected graphs for simplicity. However, it easily extends to directed graphs.}. The weight of an edge $\weight(\vertu, \vertv)$ is the cost of traversing the edge. We reuse $\Path$ to denote a path on the graph.

Let $\abs{\graph}$ denote the cardinality of the graph, i.e. the size of $\abs{\vertexSet}$\footnote{Alternatively we can also use the size of $\abs{\edgeSet}$}. We introduce a graph operation with the notation $\graph \setInsert \stateSet$ to compactly denote insertion of a new set of vertices $\stateSet$, i.e. $\vertexSet \leftarrow \vertexSet \cup \stateSet$, and edges, $\edgeSet \leftarrow \edgeSet \cup \setst{\pair{\vertu}{\vertv}}{\vertu \in \stateSet,~\vertv \in \graph{},~\connect\pair{\vertu}{\vertv}=1 }$. 

A graph search algorithm $\alg$ is given a graph $\graph$ and a planning problem $\pp$. First, it adds the start-goal pair to the graph, i.e $\graph' = \graph \setInsert \set{\start, \goal}$. It then collision checks edges against $\freeStateSpace$ till it finds and returns the shortest feasible path $\Path^*$. The cost of such a path can hence be found by evaluating $\cost(\alg(\graph, \pp))$. If $\alg$ is unable to find any feasible path, it returns $\emptyset$ which corresponds to $\costmax$. 

\begin{definition}[Dense Graph]
We assume we have a \emph{dense graph} $\denseGraph = \pair{\denseVertexSet}{\denseEdgeSet}$ that is sufficiently large to connect the space i.e. for any plausible planning problem, it contains a sufficiently low-cost feasible path. 
\end{definition}
Henceforth, we care about competing with $\denseGraph$. We reiterate that searching this graph, $\alg(\denseGraph, \pp)$, is too computationally expensive to perform online. 

We wish to learn a mapping from features extracted from the problem to a sparse subgraph of $\denseGraph$. Let $\feat \in \real^\featDim$ be a feature representation of the planning problem. Let $\policy(\denseGraph, \feat)$ be a \emph{subgraph predictor oracle} that maps the feature vector to a subgraph $\graph \subset \denseGraph$, $\abs{\graph} \leq \abs{\denseGraph{}}$. We wish to solve the following optimization problem:

\begin{problem}[Optimal Subgraph Prediction] 
\label{prob:optimal_subgraph}
Given a joint distribution $P(\pp, \feat)$ of problems and features, and a dense graph $\denseGraph$, compute a subgraph predictor oracle $\policy$ that minimizes the ratio of the costs of the shortest feasible paths in the subgraph and the dense graph:
\begin{equation}
	\small
	\policy^* = \argminprob{\policy \in \policyClass} \; \expect{ \pair{\pp}{\feat} \sim P(\pp,\feat) }{ \frac{\cost(\alg(\policy(\denseGraph,\feat), \pp))}{\cost(\alg(\denseGraph, \pp))} }
\end{equation}
\end{problem}

\section{Framework for Predicting Roadmaps}
\label{sec:framework}

\begin{figure*}[!t]
    \centering
    \includegraphics[width=1.0\textwidth]{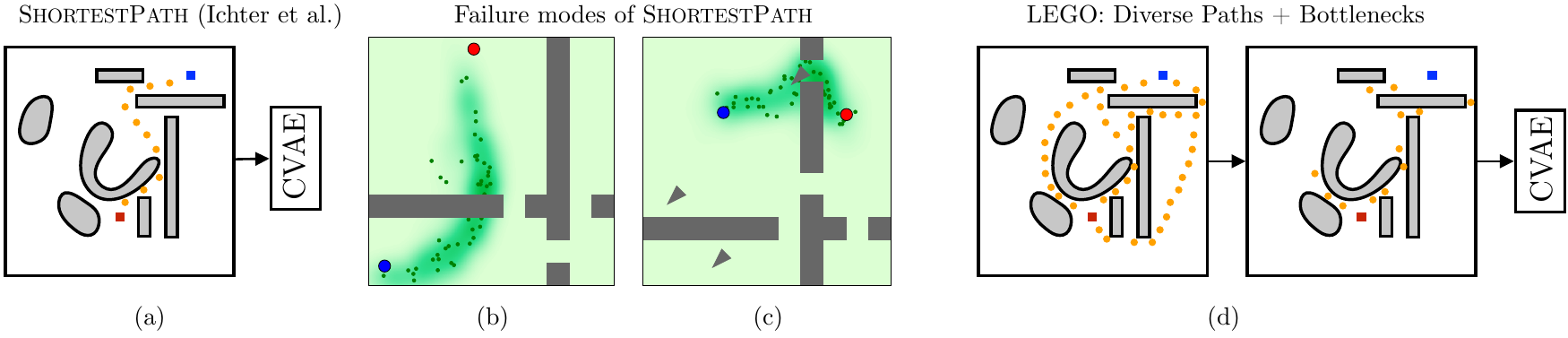}
    \caption{%
    \label{fig:cvae_input}
    (a) The training input generated by \algSP. (b) Failure of \algSP model to route through gaps (c) Another failure of the model due to unexpected obstacles (d) The training input generated by \algLEGO. Diverse shortest paths are generated followed by extraction of bottleneck nodes. \fullFigGap
    }
\end{figure*}%

We now present a framework for training graph predicting oracles as illustrated in \figref{fig:overall_framework}(a). This is a generalization of the approach presented in \cite{ichter2017learning}. The framework applies three main approximations. First, instead of predicting a subgraph $\graph \subset \denseGraph$, we learn a mapping that directly predicts states $\state \in \stateSpace$ in continuous space.\footnote{For cases where a subgraph is preferred, e.g. $\denseGraph$ lies on a constraint manifold, one can design a projection operator $\mathcal{P} : \stateSpace \rightarrow \denseVertexSet$} Secondly, instead of solving a structured prediction problem, we learn an i.i.d sampler that will be invoked repeatedly to get a set of vertices. These vertices are then connected according to an underlying connection rule, such as $k$-NN, to create a graph. Thirdly, we compose the sampled graph with a \emph{constant sparse graph} $\sparseGraph \subset \denseGraph, \abs{\sparseGraph} \leq N$. This ensures that the final predicted graph has some minimal coverage. \footnote{Since $\denseGraph$ is a Halton graph, we use the first $N$ Halton sequences.} 

The core component of the framework is a Conditional Variational Auto-encoder (CVAE)~\cite{sohn2015learning} which is used for approximating the desired sample distribution. CVAE is an extension of a traditional variational auto-encoder~\cite{kingma2013auto} which is a directed graphical model with low-dimensional Gaussian latent variables. CVAE is a \emph{conditional} graphical model which makes it relevant for our application where conditioning variables are features of the planning problem. We provide a high level description for brevity, and refer the reader to~\cite{doersch2016tutorial} for a comprehensive tutorial.

Here $\state \in \stateSpace$ is the output random variable, $\latentVar \in \real^\latentDim$ is the latent random variable and $\condVar \in \real^\featDim$ is the conditioning variable. We wish to learn two \emph{deterministic mappings} - an \emph{encoder} and a \emph{decoder}. An encoder maps $(\stateVar, \condVar)$ to a mean and variance of a Gaussian $\encoder(\latentVar | \stateVar, \condVar)$ in latent space, such that it is ``close'' to an isotropic Gaussian $\Normal(0, I)$. The decoder maps this Gaussian and $y$ to a distribution in the output space $\decoder(\stateVar | \latentVar, \condVar)$. This is achieved by maximizing the following objective $\loss \left( \stateVar, \condVar; \decoderParam, \encoderParam \right)$:
\begin{equation}
\label{eq:cvae_loss}
	  -D_{KL}\left( \encoder(\latentVar | \stateVar, \condVar) \, || \, \Normal(0, I) \right) + \frac{1}{\latentDim} \sum_{l=1}^{\latentDim} \log \decoder(\stateVar | \condVar, \latentVar^{(l)})
\end{equation}
Note that the encoder is needed only for training. At test time, only the decoder is used to map samples from an isotropic Guassian in the latent space to samples in the output space.

We train the CVAE by passing in a dataset $\dataSet = \{ \stateSetVar_i, \feat_i \}_{i=1}^{D}$. $\feat_i$ is the feature vector (conditioning variable) extracted from the planning problem $\pp_i$. $\stateSetVar_i$ is the desired set of nodes extracted from the dense graph $\denseGraph$ that we want our learner to predict. Hence we train the model by maximizing the following objective. 
\begin{equation}
	\risk(\dataSet; \decoderParam, \encoderParam) = \frac{1}{\abs{\dataSet}} \sum_{i=1}^{\abs{\dataSet}} \sum_{j=1}^{\abs{\stateSetVar_i}} \loss \left( \stateVar_j, \condVar_i; \decoderParam, \encoderParam \right)
\end{equation}

\subsection{General Train and Test Procedure}

To summarize, the overall training framework is as follows:
\begin{enumerate}
	\item Load a database of planning problems $\pp_i$ and corresponding feature vectors $\feat_i$.
	\item For each $\pp_i$, extract relevant nodes from the dense graph $\denseGraph$ by invoking $\stateSetVar_i = \algAbstract(\pp_i, \denseGraph)$.
	\item Feed dataset $\dataSet = \{ \stateSetVar_i, \feat_i \}_{i=1}^D$ as input to CVAE.
	\item Train CVAE and return learned decoder $\decoder(\stateVar | \condVar, \latentVar)$.
\end{enumerate}

At test time, given a planning problem $\pp$, the graph predicting oracle $\policy(\denseGraph, \feat)$ performs the following set of steps:
\begin{enumerate}
	\item Extract feature vector $\feat$ from planning problem $\pp$.
	\item Sample $N$ nodes using decoder $\decoder(\stateVar | \condVar, \latentVar)$.
	\item Connect nodes to create a graph $\graph$. Compose sampled graph with a constant sparse graph $\graph \leftarrow \graph \compose \sparseGraph$. 
\end{enumerate}

The focus of this work is on examining variants of the the node extraction function $\stateSetVar = \algAbstract(\pp, \denseGraph)$. While the parameters of the CVAE are certainly relevant (discussed in Appendix \ref{sec:appendix_framework}), in this paper we ask the question:
\begin{quote}
What is a good input $\stateSetVar$ to provide to the CVAE?
\end{quote}
To that end, we explore the following schemes:
\begin{enumerate}
	\item \algSP: Extract nodes $\inputSP$ belonging to the shortest path. This is the baseline approach (\sref{sec:shortest_path}).
	\item \algBottleneck: Extract nodes $\inputBottleneck$ that correspond to bottlenecks along the shortest path (\sref{sec:approach:bottleneck}).
	\item \algDiversity: Extract nodes $\inputDiversity$ belonging to multiple diverse shortest paths (\sref{sec:approach:diversity}).
	\item \algLEGO: Extract nodes $\inputLEGO$ that correspond to bottlenecks along multiple diverse diverse shortest paths. This is our proposed approach (\sref{sec:approach:lego}).
\end{enumerate}

\section{The \algSP (\citet{ichter2017learning}) procedure}
\label{sec:shortest_path}

We examine the scheme applied in~\cite{ichter2017learning} of using nodes belonging to the shortest path on the dense graph as input for training the CVAE. 
The rationality for this scheme is that the distribution of states belonging to the shortest path might lie on a manifold that can be captured by the latent space of the CVAE. This hypothesis is validated across many high-dimensional planning domains. 

We argue that the presented results should not be entirely surprising. The intrinsic difficulty of a planning problem stems from having to search in multiple potential homotopy classes to find a feasible high quality solution. This often manifests in problems involving mazes, bugtraps or narrow passages where the search has to explore and backtrack frequently. Simply increasing the dimension of the problem does not necessarily render it difficult. On the contrary, since the volume of free space increases substantially, there is often an abundance of feasible paths. The challenge, of course, is to find a manifold on which such paths lie with high probability. This is where we found the CVAE to be critical - it learns to \emph{interpolate} between the start and goal along a low dimensional manifold. 


However, we are interested in more \emph{difficult} problems where such interpolations would break down. Based on extensive evaluations of this \algSP scheme, we were able to identify two concrete vulnerabilities: 
\begin{enumerate}
  \item \emph{Failure to route through gaps}: \figref{fig:cvae_input}(b) shows the output of the CVAE when there is a gap through which the search has to route to get to the goal. The model gets stuck in a poor local minimum between linearly interpolating start-goal and routing through the gap since the network is not expressive enough to map the feature vector to such a path. This is tantamount to burdening the sampler to solve the planning problem. 
  \item \emph{Presence of unexpected obstacles in test data}: \figref{fig:cvae_input}(c) shows the output of the CVAE when there are small, unexpected obstacles in the test data which were not present in the training data. The learned distribution samples over this obstacle as it only predicts what it thinks is the shortest path. Even if we were to have such examples in the training data, unless the feature extractor detects such obstacles, the problem remains.
\end{enumerate}

\section{Approach}
\label{sec:approach}


In this section, we present \algLEGO (Leveraging Experience with Graph Oracles), an algorithm to train a CVAE to predict sparse yet high quality roadmaps. We do so by tackling head-on the challenges identified in in~\sref{sec:framework}. Firstly, we recognize that the learner does not have to directly predict the shortest path. Instead, we train it to predict only \emph{bottleneck nodes} that can assist the underlying search in finding a near-optimal solution. Secondly, the roadmap must be robust to prediction errors of the learner. We safeguard against this by training the learner to predict a \emph{diverse set of paths} with the hope that at-least one of them is feasible. 

\subsection{Bottleneck Nodes}
\label{sec:approach:bottleneck}


We begin by noting that $\sparseGraph$ has a uniform coverage over the entire configuration space. Hence, the learner only has to contribute a critical set of nodes that allow $\sparseGraph$ to represent paths that are near-optimal with respect to the path in $\denseGraph$. We call these \emph{bottleneck nodes} as they correspond to regions that are difficult for a uniform sampler to cover. We define $\inputBottleneck = \algBottleneck(\pp, \denseGraph)$ as:

\begin{definition}[Bottleneck Nodes]
\label{def:bottleneck_nodes}
Given a dense graph $\denseGraph$, find the smallest set of nodes which in conjunction with a sparse subgraph $\sparseGraph$ contains a near-optimal path, i.e.
\begin{equation}
\label{eq:bottleneck_nodes}
\begin{aligned}
\argmin_{\vertexSet \subset \denseVertexSet} & & & \abs{\vertexSet} \\
\text{s.t.} & & & \frac{\cost(\alg( \sparseGraph \compose \vertexSet , \pp))}{\cost(\alg(\denseGraph, \pp))} \leq 1 + \epsilon  
\end{aligned}
\end{equation}
\end{definition}
Here $\graph \compose \vertexSet'$ represents a merge operation, i.e. $\vertexSet \leftarrow \vertexSet \cup \vertexSet'$, $\edgeSet \leftarrow \edgeSet \cup \setst{\pair{\vertu}{\vertv}}{\vertu \in \vertexSet', \pair{\vertu}{\vertv} \in \denseEdgeSet }$. 

\begin{algorithm}[t]
  \caption{ \algBottleneck }\label{alg:bottleneck_node_generation}
  \algorithmStyle
  \SetKwInOut{Input}{Input}
  \SetKwInOut{Output}{Output}

  \Input{ Planning problem $\pp$, Bottleneck tolerance $\epsilon$ \\ \ Dense path $\densePath^*$, Sparse graph $\sparseGraph$}
  \Output{ Bottleneck nodes $\vertexSetBN$}

  $\inflatedGraph \gets \sparseGraph \compose \densePath^*$ \Comment*[r]{Add to sparse graph}
  $\eta \gets 1$\;
  \While{$ \cost(\alg( \inflatedGraph , \pp)) \leq (1 + \epsilon) \cost(\densePath^* ) $ \label{line:bn:iteration_start}}
  {
    $\eta \gets \eta + \delta \eta$ \Comment*[r]{Increase inflation}
    \For{$ (\vertu, \vertv) \in \inflatedEdgeSet \setminus \sparseEdgeSet$}
    {
      $\weight(\vertu, \vertv) \gets \eta \weight(\vertu, \vertv)$ \label{line:bn:inflate} \Comment*[r]{Inflate added edges}
    }
  }
  $\inflatedPath^* \gets \alg( \inflatedGraph , \pp)$ \Comment*[r]{Shortest inflated path}
  $\vertexSetBN \gets \densePath^* \cap \inflatedPath^*$ \Comment*[r]{Bottleneck nodes}
  \Return{$\vertexSetBN$}\;
\end{algorithm}

The optimization~\sref{eq:bottleneck_nodes} is combinatorially hard. We present an approximate solution in Algorithm~\ref{alg:bottleneck_node_generation}. We use the optimal path $\densePath^*$ on the dense graph and create an \emph{inflated graph} $\inflatedGraph(\inflation)$ by composing $\sparseGraph \compose \densePath^*$ and inflating weights of newly added edges by $\inflation$ (Line~\ref{line:bn:inflate}). The idea is to disincentivize the search from using any of the newly added edges. This inflation factor is increased till a near-optimal path can no longer be found (Lines~\ref{line:bn:iteration_start}-\ref{line:bn:inflate}). At this point, the additional vertices that the shortest path on this inflated path pass through are essential to achieve near-optimality. This is formalized by the following guarantee:

\begin{proposition}[Bounded bottleneck edge weights]
Let $\edgeSetBN \gets \inflatedPath^* \setminus \sparseEdgeSet$ be the chosen bottleneck edges, $\edgeSetBN^*$ be the optimal bottleneck edges and $\densePath^*$ be the optimal path on $\denseGraph$.
\begin{equation}
\sum_{\edge_i \in \edgeSetBN} \weight(\edge_i) \leq \sum_{\edge_i \in \edgeSetBN^*} \weight(\edge_i) + \frac{(1 + \epsilon) \cost(\densePath^*)}{\inflation}
\end{equation}
\end{proposition}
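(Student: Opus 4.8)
The plan is to work entirely inside the graph $\inflatedGraph = \sparseGraph \compose \densePath^*$ that Algorithm~\ref{alg:bottleneck_node_generation} searches, and to lean on the one structural fact that $\inflatedPath^*$ is, by definition of $\alg$, a \emph{minimum-cost} feasible path in $\inflatedGraph$ under the inflated weights. For a feasible path $\Path$ in $\inflatedGraph$ let $\cost(\Path)$ denote its cost under the original weights, let $\cost_\inflation(\Path)$ denote its cost after Line~\ref{line:bn:inflate} (every edge of $\inflatedEdgeSet \setminus \sparseEdgeSet$ scaled by $\inflation$), and let $W(\Path) = \sum_{\edge \in \Path \setminus \sparseEdgeSet} \weight(\edge)$ be the original weight of the ``bottleneck'' (non-sparse) edges used by $\Path$. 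Since only the non-sparse edges are rescaled, $\cost_\inflation(\Path) = \cost(\Path) + (\inflation - 1)\,W(\Path)$. Write $W = W(\inflatedPath^*) = \sum_{\edge \in \edgeSetBN}\weight(\edge)$, the quantity to be bounded, and let $\Path^*$ be the $\epsilon$-near-optimal feasible path inside $\inflatedGraph$ whose bottleneck-edge set $\edgeSetBN^* = \Path^* \setminus \sparseEdgeSet$ has the smallest total weight $W^* = \sum_{\edge \in \edgeSetBN^*}\weight(\edge)$; such a path exists because $\densePath^* \subseteq \inflatedGraph$ is itself $\epsilon$-near-optimal.

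The first step is a lower bound: since all weights are non-negative and $\edgeSetBN \subseteq \inflatedPath^*$, we get $\cost(\inflatedPath^*) \ge W$, hence from the identity $\cost_\inflation(\inflatedPath^*) = \cost(\inflatedPath^*) + (\inflation-1)W \ge \inflation W$. In words, the inflated cost of $\inflatedPath^*$ is at least the contribution of its inflated edges alone.

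The second step is an upper bound from optimality. Since $\Path^*$ is a feasible path in $\inflatedGraph$ and $\inflatedPath^*$ minimizes the inflated cost among such paths, $\cost_\inflation(\inflatedPath^*) \le \cost_\inflation(\Path^*) = \cost(\Path^*) + (\inflation - 1)W^* \le (1+\epsilon)\cost(\densePath^*) + (\inflation - 1)W^*$, using that $\Path^*$ is $\epsilon$-near-optimal, i.e.\ $\cost(\Path^*) \le (1+\epsilon)\cost(\densePath^*)$ (cf.\ Definition~\ref{def:bottleneck_nodes}). Chaining the two steps gives $\inflation W \le (1+\epsilon)\cost(\densePath^*) + (\inflation-1)W^*$, which rearranges to $\inflation(W - W^*) \le (1+\epsilon)\cost(\densePath^*) - W^* \le (1+\epsilon)\cost(\densePath^*)$ after dropping the non-positive term $-W^*$. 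Dividing by $\inflation$ — valid because $\inflation$ is initialized to $1$ and only increases, so $\inflation \ge 1 > 0$ — yields $W \le W^* + (1+\epsilon)\cost(\densePath^*)/\inflation$, exactly the claim.

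The step I expect to need the most care is not the algebra but fixing the meaning of ``the optimal bottleneck edges'' $\edgeSetBN^*$ so that the comparison $\cost_\inflation(\inflatedPath^*) \le \cost_\inflation(\Path^*)$ is legitimate: the comparison path $\Path^*$ must live inside $\inflatedGraph = \sparseGraph \compose \densePath^*$, not inside the full $\denseGraph$ of Definition~\ref{def:bottleneck_nodes}. I would therefore read $\edgeSetBN^*$ as the minimum-weight set of added edges whose union with $\sparseGraph$ still admits an $\epsilon$-near-optimal feasible path — the bottleneck-edge optimum relative to the subgraph the algorithm actually works on. It is also worth noting that the argument never invokes the while-loop termination test; that test only controls how small the residual term $(1+\epsilon)\cost(\densePath^*)/\inflation$ becomes, by trading it off against whether $\inflatedPath^*$ still retains any bottleneck edges at all.
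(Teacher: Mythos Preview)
Your proof is correct and follows essentially the same route as the paper's sketch: compare the inflated cost of $\inflatedPath^*$ to that of the optimal reference path, drop non-negative terms, and divide by $\inflation$. Your extra care in insisting that the comparison path $\Path^*$ (and hence $\edgeSetBN^*$) live inside $\inflatedGraph = \sparseGraph \compose \densePath^*$ is well-placed --- the paper's sketch uses the optimality of $\inflatedPath^*$ against $\PathBN^*$ without making this containment explicit.
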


\begin{proof}(Sketch)
Let $\vertexSetBN^*$ be the optimal bottleneck nodes and $\edgeSetBN^*$ be the optimal bottleneck edges. Let $\PathBN^*$ be the path returned by $\alg( \sparseGraph \compose \vertexSetBN^* , \pp)$. From Definition~\ref{def:bottleneck_nodes}, the following holds:

\begin{equation*}
\begin{aligned}
\sum_{\edge_i \in \edgeSetBN^*} \weight(\edge_i) + \sum_{\edge_i \in \PathBN^* \setminus \edgeSetBN^*} \weight(\edge_i) &\leq (1+\epsilon)  \cost(\densePath^*) \\
\sum_{\edge_i \in \PathBN^* \setminus \edgeSetBN^*} \weight(\edge_i) &\leq (1+\epsilon)  \cost(\densePath^*) \\
\end{aligned}
\end{equation*} 

Since $\inflatedPath^*$ is the shortest path on the inflated graph, we have:
\begin{equation*}
\begin{aligned}
\sum_{\edge_i \in \edgeSetBN} \weight(\edge_i) &+ \inflation \sum_{\edge_i \in \inflatedPath^* \setminus \edgeSetBN} \weight(\edge_i) \\
&\leq \inflation \sum_{\edge_i \in \edgeSetBN^*} \weight(\edge_i) + \sum_{\edge_i \in \PathBN^* \setminus \edgeSetBN^*} \weight(\edge_i) \\
\end{aligned}
\end{equation*} 

Putting the two inequalities together we have:
\begin{equation*}
\begin{aligned}
\inflation \sum_{\edge_i \in \inflatedPath^* \setminus \edgeSetBN} \weight(\edge_i)  &\leq \inflation \sum_{\edge_i \in \edgeSetBN^*} \weight(\edge_i) + \sum_{\edge_i \in \PathBN^* \setminus \edgeSetBN^*} \weight(\edge_i) \\
\sum_{\edge_i \in \inflatedPath^* \setminus \edgeSetBN} \weight(\edge_i)  &\leq \sum_{\edge_i \in \edgeSetBN^*} \weight(\edge_i) + \frac{\sum_{\edge_i \in \PathBN^* \setminus \edgeSetBN^*} \weight(\edge_i)}{\inflation} \\
\sum_{\edge_i \in \inflatedPath^* \setminus \edgeSetBN} \weight(\edge_i)  &\leq \sum_{\edge_i \in \edgeSetBN^*} \weight(\edge_i) + \frac{(1 + \epsilon) \cost(\densePath^*)}{\inflation} \\
\end{aligned}
\end{equation*} 

\end{proof}

\figref{fig:bd_experiments} illustrates the samples generated by (a) \algSP{} and (b) \algLEGO{} trained with samples from \algBottleneck{}; and the successful routing through narrow passages using samples from \algLEGO.

\subsection{Diverse PathSet}
\label{sec:approach:diversity}

In this training scheme, we try to ensure the roadmap is \emph{robust} to errors introduced by the learner. 
One antidote to this process is \emph{diversity} of samples. Specifically, we want the roadmap to have enough diversity such that if the predicted shortest path is in fact infeasible, there are low cost alternates. 

We set this up as a two player game between a planner and an adversary. The role of the adversary is to invalidate as many shortest paths on the dense graph $\denseGraph$ as possible with a fixed budget of edges that it is allowed to invalidate. The role of the planner is to find the shortest feasible path on the invalidated graph and add this to the set of diverse paths $\PathSetDiv$. The function $\inputDiversity = \algDiversity(\pp, \denseGraph)$ then returns nodes belonging $\PathSetDiv$. We formalize this as:
\begin{definition}[Diverse PathSet]
\label{def:diverse_path_set}
 We begin with a graph $\graph^0 = \denseGraph$. At each round  $i$ of the game, the adversary chooses a set of edges to invalidate:
\begin{equation}
\label{eq:cover_ksp}
\edgeSet^*_i = \argmaxprob{\edgeSet_i \subset \edgeSet, \abs{\edgeSet_i} \leq \ell} \cost(\alg( \graph^{i-1} \remove \edgeSet_i , \pp))  
\end{equation}
and the graph is updated $\graph^i = \graph^{i-1} \remove \edgeSet^*_i$. The planner choose the shortest path $\Path_i = \alg( \graph^{i} , \pp))$ which is added to the set of diverse paths $\PathSetDiv$.
\end{definition}

The optimization problem~\eref{eq:cover_ksp} is similar to a set cover problem (NP-Hard~\cite{feige1998threshold}) where the goal is to select edges to cover as many paths as possible. If we knew the exact set of paths to cover, it is well known that a greedy algorithm will choose a near-optimal set of edges~\cite{feige1998threshold}. We have the inverse problem - we do not know how many consecutive shortest paths can be covered with a budget of $\ell$ edges. 

\begin{algorithm}[t]
  \caption{ \algDiversity }\label{alg:diverse_pathset_generation}
  \algorithmStyle
  \SetKwInOut{Input}{Input}
  \SetKwInOut{Output}{Output}

  \Input{ Planning problem $\pp$, Pathset size $k$, \\ \ Dense graph $\denseGraph$, Sparse graph $\sparseGraph$}
  \Output{ Diverse pathset $\PathSetDiv$}

  $\graph \gets \denseGraph$\;
  $\PathSetDiv \gets \emptyset$\;
  \For{$i = 1, \cdots, k$}
  {
    $\PathSet \gets \alg^{L}( \graph , \pp)$ \Comment*[r]{L-shortest paths}
    $\edgeSet_i \gets \emptyset, \PathSetInv \gets \emptyset$\;
    \While{ $\abs{\edgeSet_i} < \ell$}
    {
      $\PathSet \gets \setst{\Path}{\Path \in \PathSet, \Path \cap \edgeSet_i = \emptyset}$\;
      \For{$j = \abs{\edgeSet_i}, \cdots, \ell$\label{line:diverse:greedy_start}}
      {
        $\edge_j \gets \argmaxprob{\edge \in \edgeSet} \underset{\Path \in \PathSet, \edge \notin \Path}{\min} \cost(\Path)$ \Comment*[r]{Greedy}
        $\edgeSet_i \gets \edgeSet_i \cup \{ \edge_j \}$ \Comment*[r]{Add edge to set}
        $\PathSet_j \gets \setst{\Path \in \PathSet}{\edge_j \in \Path}$\;
        $\PathSetInv \gets \PathSetInv \cup \PathSet_j$ \Comment*[r]{Invalidate paths}
        $\PathSet \gets \PathSet \setminus \PathSet_j$\label{line:diverse:greedy_end}\;
      }
      $\edgeSetSC \gets \textsc{SetCover}(\PathSetInv)$ \label{line:diverse:set_cover} \Comment*[r]{Greedy cover}
      \If{$\abs{\edgeSetSC} \leq \abs{\edgeSet_i}$}
      {
        $\edgeSet_i \gets \edgeSetSC$ \Comment*[r]{Use better cover}
      }
    }
    $\graph \gets \graph \remove \edgeSet_i $ \Comment*[r]{Remove edges}
    $\Path_i \gets \alg( \graph , \pp)$ \Comment*[r]{New shortest path}
    $\PathSetDiv \gets \PathSetDiv \cup \Path_i$ \Comment*[r]{Add to diverse pathset}
  } 
\Return{$\PathSetDiv$}\;
\end{algorithm}

Algorithm~\ref{alg:diverse_pathset_generation} describes the procedure. We greedily choose a set of edges to invalidate as many consecutive shortest paths till we exhaust our budget (Lines~\ref{line:diverse:greedy_start}-\ref{line:diverse:greedy_end}). We then apply greedy set cover (Line~\ref{line:diverse:set_cover}). If it leads to a better solution, we continue repeating the process. At termination, we ensure:
\begin{proposition}[Near-optimal Invalidated EdgeSet]
Let $\PathSetInv$ be the contiguous set of shortest paths invalidated by Algorithm~\ref{alg:diverse_pathset_generation} using a budget of $\ell$. Let $\ell^*$ be the size of the optimal set of edges that could have invalidated $\PathSetInv$.
\begin{equation}
\ell \leq (1 + \log \abs{\PathSetInv}) \ell^*
\end{equation}
\end{proposition}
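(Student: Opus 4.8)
The plan is to read optimization~\eref{eq:cover_ksp}, restricted to the concrete family of paths $\PathSetInv$ that Algorithm~\ref{alg:diverse_pathset_generation} actually invalidates, as an instance of (unit-cost) \textsc{SetCover}, and then invoke the classical greedy approximation guarantee. Concretely, for each edge $\edge \in \edgeSet$ define $S_\edge = \setst{\Path \in \PathSetInv}{\edge \in \Path}$, the invalidated paths through $\edge$. A set $F \subseteq \edgeSet$ invalidates every path in $\PathSetInv$ precisely when $\bigcup_{\edge \in F} S_\edge = \PathSetInv$, i.e.\ $F$ is a set cover of the universe $\PathSetInv$ under the family $\{S_\edge\}_{\edge \in \edgeSet}$ (equivalently, a transversal of the path hypergraph). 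The edge set $\edgeSet_i$ built by the greedy inner loop is, by construction, exactly such a cover of the paths it knocked out, so a feasible cover of $\PathSetInv$ exists and $\ell^*$ (the minimum cover size over all of $\edgeSet$) is well defined and finite.

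Next I would establish that at termination the committed budget $\ell = \abs{\edgeSet_i}$ is at most the size of the greedy set cover of $\PathSetInv$. In the final pass of the outer \textbf{while} loop the algorithm runs $\edgeSetSC \gets \textsc{SetCover}(\PathSetInv)$ on exactly this final family, and replaces $\edgeSet_i \gets \edgeSetSC$ whenever $\abs{\edgeSetSC} \le \abs{\edgeSet_i}$; in either branch the edge set it keeps has cardinality at most $\abs{\edgeSetSC}$ (equal to it if the replacement fires, strictly below it otherwise). Hence $\ell \le \abs{\edgeSetSC}$. Note the greedy-$L$-shortest-path invalidation steps only influence \emph{which} paths land in $\PathSetInv$, not this final comparison, so they are harmless for the bound.

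Finally I would plug in the textbook analysis of greedy \textsc{SetCover} (Johnson / Lov\'asz / Chv\'atal): repeatedly taking the edge that covers the most still-uncovered paths yields a cover of size at most $H_{\abs{\PathSetInv}}\,\ell^* \le (1 + \log\abs{\PathSetInv})\,\ell^*$, where $H_n = \sum_{t=1}^n 1/t$. The one-line reason: when $r$ paths remain uncovered, some edge of an optimal $\ell^*$-edge cover hits at least $r/\ell^*$ of them, so the greedy choice does too, and $r$ shrinks by a factor $(1-1/\ell^*)$ per step; the sharper fractional-charging argument gives the exact $H_n$ factor. Chaining $\ell \le \abs{\edgeSetSC} \le (1+\log\abs{\PathSetInv})\,\ell^*$ completes the proof.

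I expect the only real friction to be the bookkeeping in the second step: carefully pinning down that the returned $\edgeSet_i$ has size at most $\abs{\textsc{SetCover}(\PathSetInv)}$ given the interplay between the greedy edge-invalidation inner loop, the budget test $\abs{\edgeSet_i} < \ell$, and the conditional replacement, and making sure the "$\PathSetInv$" in the statement is identified with the family on which the last \textsc{SetCover} call runs. Once that bridge is in place, the remainder is the standard greedy set-cover inequality, which I would cite rather than reprove in full.
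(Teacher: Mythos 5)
Your proposal is correct and follows essentially the same route as the paper's own sketch: reduce invalidation of $\PathSetInv$ to a set-cover instance where each edge covers the paths containing it, bound the returned budget by the size of the greedy cover computed in the final iteration of the outer loop, and invoke the standard $(1+\log n)$ greedy set-cover guarantee. Your bookkeeping of the termination condition (that $\ell = \abs{\edgeSet_i} \leq \abs{\edgeSetSC}$ at the last pass) is in fact slightly more explicit than the paper's, which argues the same point by noting that remaining budget only lets the loop invalidate more paths.
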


\begin{proof} (Sketch)
We briefly explain the equivalence to a set cover problem. Each path in $\PathSetInv$ corresponds to an element that has to be covered. Each edge $\edge \in \denseEdgeSet$ corresponds to a set of paths in $\PathSetInv$, where each path in the set contains the edge. Invalidating the edge invalidates all paths in the set. 

Line~\ref{line:diverse:set_cover} invokes a greedy set cover algorithm which at every iteration chooses the edge which covers the largest number of uncovered paths. Let $\ell_{\mathrm{greedy}}$ be the number of edges selected by the greedy algorithm, and $\ell^*$ be the optimal. From~\cite{feige1998threshold}, we have the following near-optimality guarantee:
\begin{equation*}
\ell_{\mathrm{greedy}} \leq (1 + \log \abs{\PathSetInv}) \ell^*
\end{equation*}

If $\ell_{\mathrm{greedy}} \leq \ell$, i.e. we have budget remaining, we continue adding edges that can only invalidate more paths in Lines~\ref{line:diverse:greedy_start}-\ref{line:diverse:greedy_end}. This continues till the budget is exhausted. 

\end{proof}

\figref{fig:bd_experiments} illustrates the samples generated by (a) \algSP{} and (b) \algLEGO trained with samples from \algDiversity{}; and the robustness to unexpected obstacles exhibited by \algLEGO{}.

\subsection{Combining Diversity with Bottleneck Extraction}
\label{sec:approach:lego}

\begin{algorithm}[t]
  \caption{ \algLEGO }\label{alg:lego_node_generation}
  \algorithmStyle
  \SetKwInOut{Input}{Input}
  \SetKwInOut{Output}{Output}

  \Input{ Planning problem $\pp$, Bottleneck tolerance $\epsilon$, Pathset \\ \ size $k$, Dense graph $\denseGraph$, Sparse graph $\sparseGraph$}
  \Output{ \algLEGO nodes $\vertexSetLEGO$}

  $\PathSetDiv \gets \algDiversity(\pp, k, \denseGraph, \sparseGraph)$\label{line:lego:diversity}\;
  $\vertexSetLEGO \gets \emptyset$\;
  \While{$\PathSetDiv \neq \emptyset$}
  {
  	$\Path^* \gets \argminprob{\Path \in \PathSetDiv} \; \cost(\Path)$\label{line:lego:iterate_start} \Comment*[r]{Pick diverse path}
  	$\PathSetSparse \gets  \alg^{L}( \sparseGraph , \pp)$\;
  	$\PathSetSparse \gets \setst{\Path}{\Path \in \PathSetSparse, \cost(\Path) \leq (1 + \epsilon) \cost(\Path^*)}$\;
  	$\edgeSetInv \gets \textsc{SetCover}(\PathSetSparse)$\Comment*[r]{Edges to remove}
  	$\sparseGraph \gets \sparseGraph \remove \edgeSetInv$\label{line:lego:iterate_end} \Comment*[r]{Invalidate paths}
  	$\vertexSetBN \gets \algBottleneck(\pp, \epsilon, \Path^*, \sparseGraph)$\label{line:lego:bottleneck}\;
  	$\vertexSetLEGO \gets \vertexSetLEGO \cup \vertexSetBN$ \Comment*[r]{Add to \algLEGO nodes}
  }
  \Return{$\vertexSetLEGO$}\;
\end{algorithm}

We present \algLEGO in Algorithm~\ref{alg:lego_node_generation}  which combines the characteristics of \algBottleneck and \algDiversity to extract a set of diverse bottleneck nodes. We first find a set of diverse paths on the dense graph (Line~\ref{line:lego:diversity}). We then iterate over each path, and adversarially invalidate edges of the sparse graph to ensure it does contain a feasible shorter path (Line~\ref{line:lego:iterate_start}-\ref{line:lego:iterate_end}). The bottleneck nodes for this path are extracted and added to the set of nodes to be returned (Line~\ref{line:lego:bottleneck}).

\begin{figure}[!t]
\centering
  \begin{subfigure}[b]{0.24\columnwidth}
    \centering
    \includegraphics[height=6.1em]{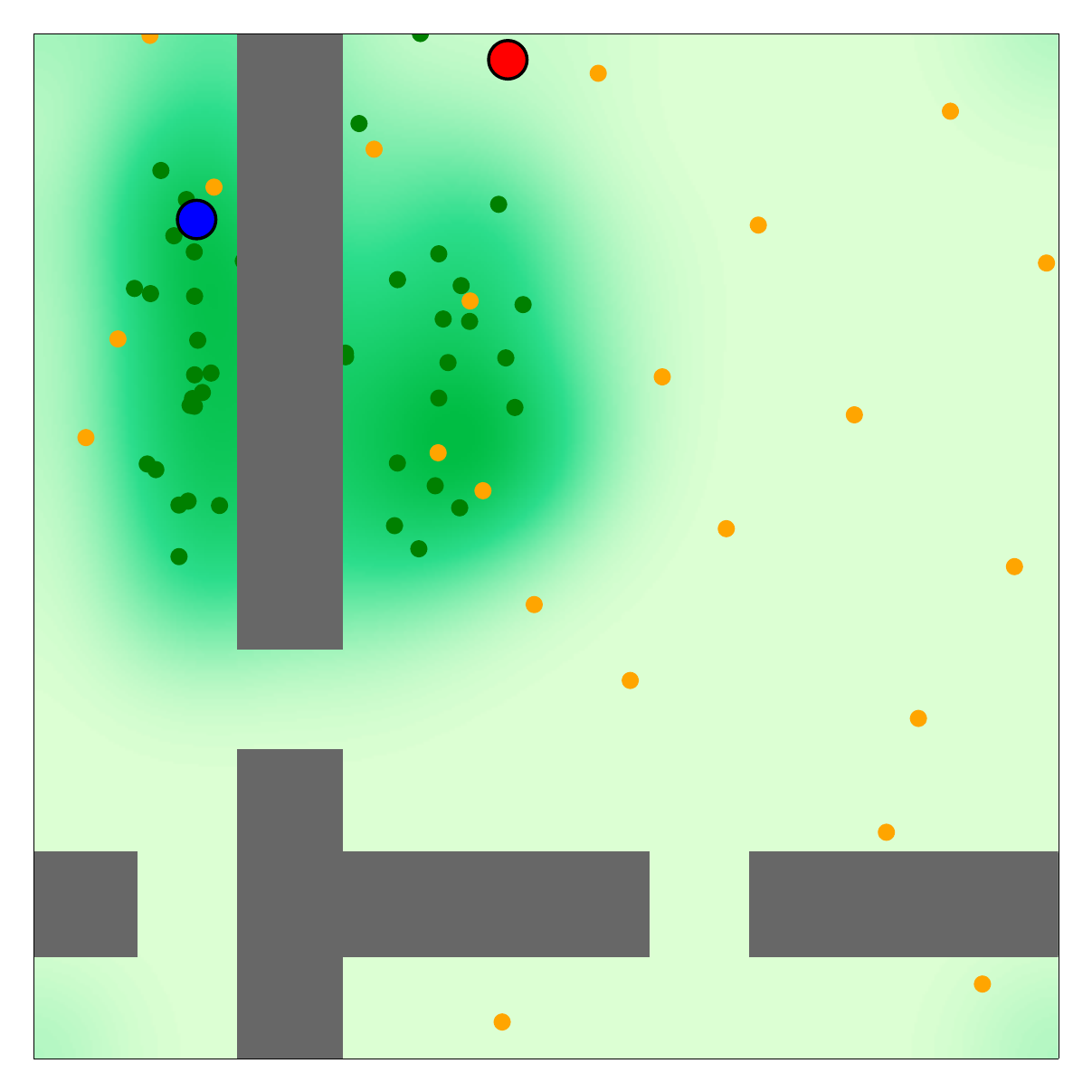}
    \caption{}
    \label{fig:bottleneck_sp}     
    \vspace*{1em}
  \end{subfigure}
  \hfill
  \begin{subfigure}[b]{0.24\columnwidth}
    \centering
    \includegraphics[height=6.1em]{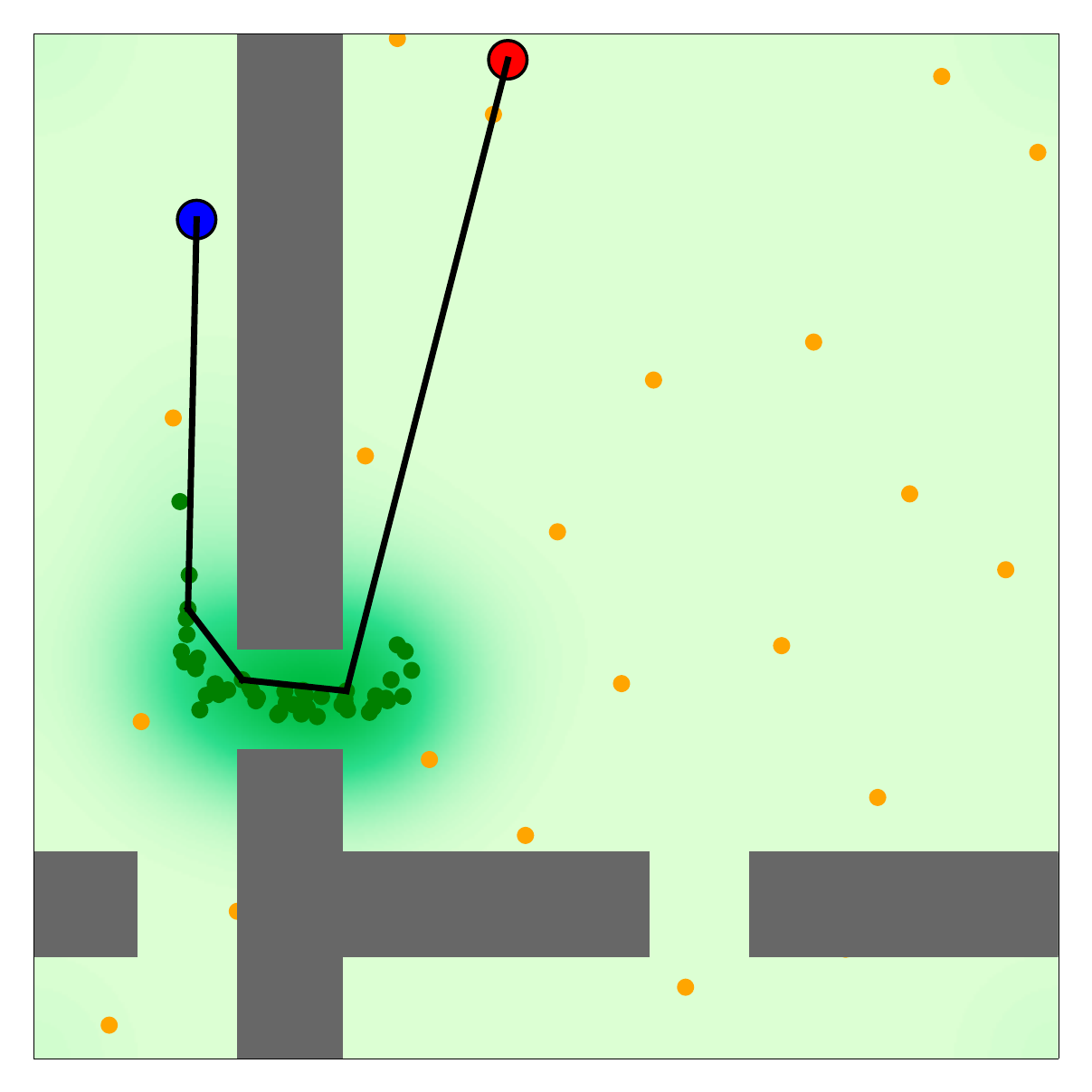}
    \caption{}
    \label{fig:bottleneck_bottleneck}     
    \vspace*{1em}
  \end{subfigure}
  \hfill
  \begin{subfigure}[b]{0.24\columnwidth}
    \centering
    \includegraphics[height=6.1em]{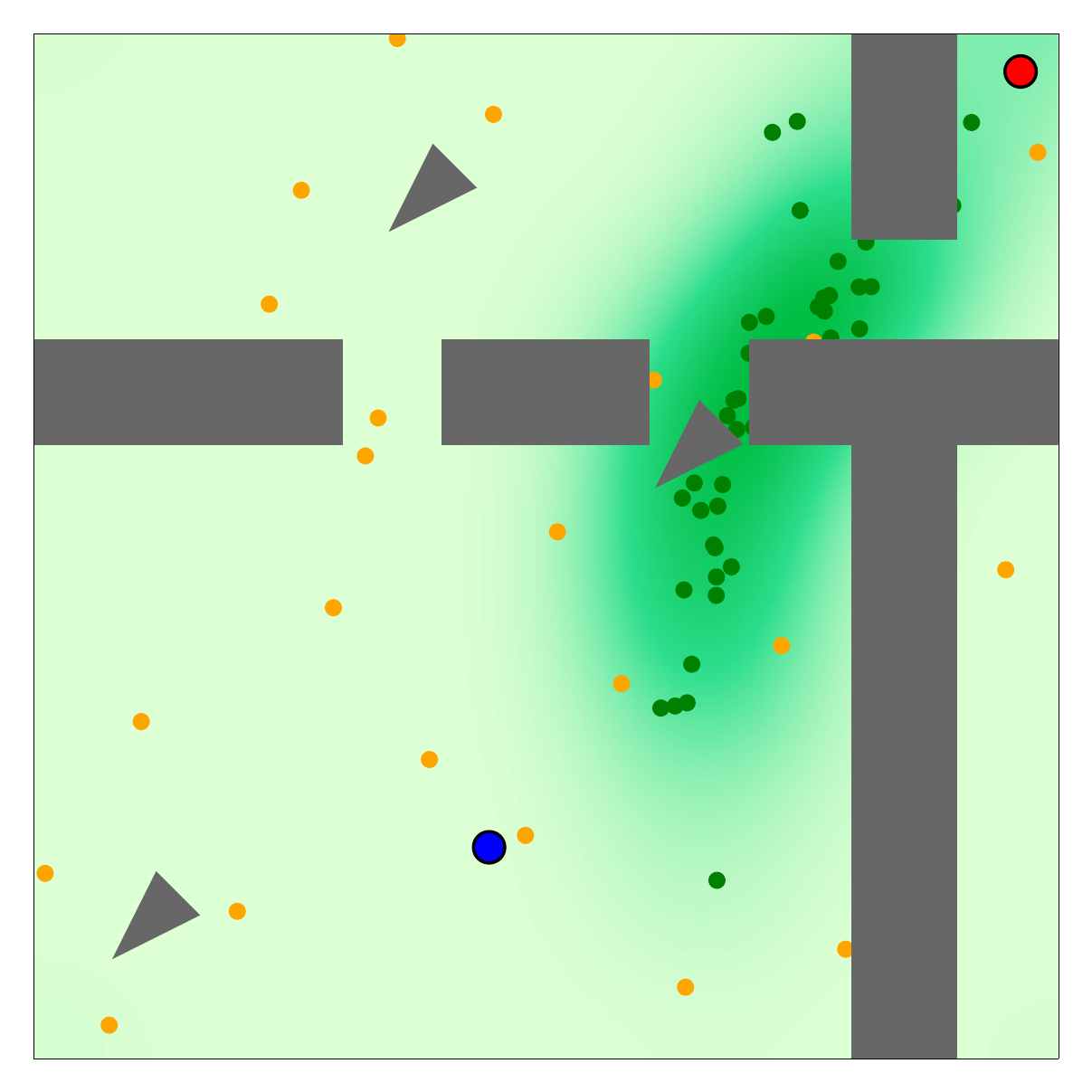}
    \caption{}
    \label{fig:diverse_sp}     
    \vspace*{1em}
  \end{subfigure}  
  \hfill
  \begin{subfigure}[b]{0.24\columnwidth}
    \centering
    \includegraphics[height=6.1em]{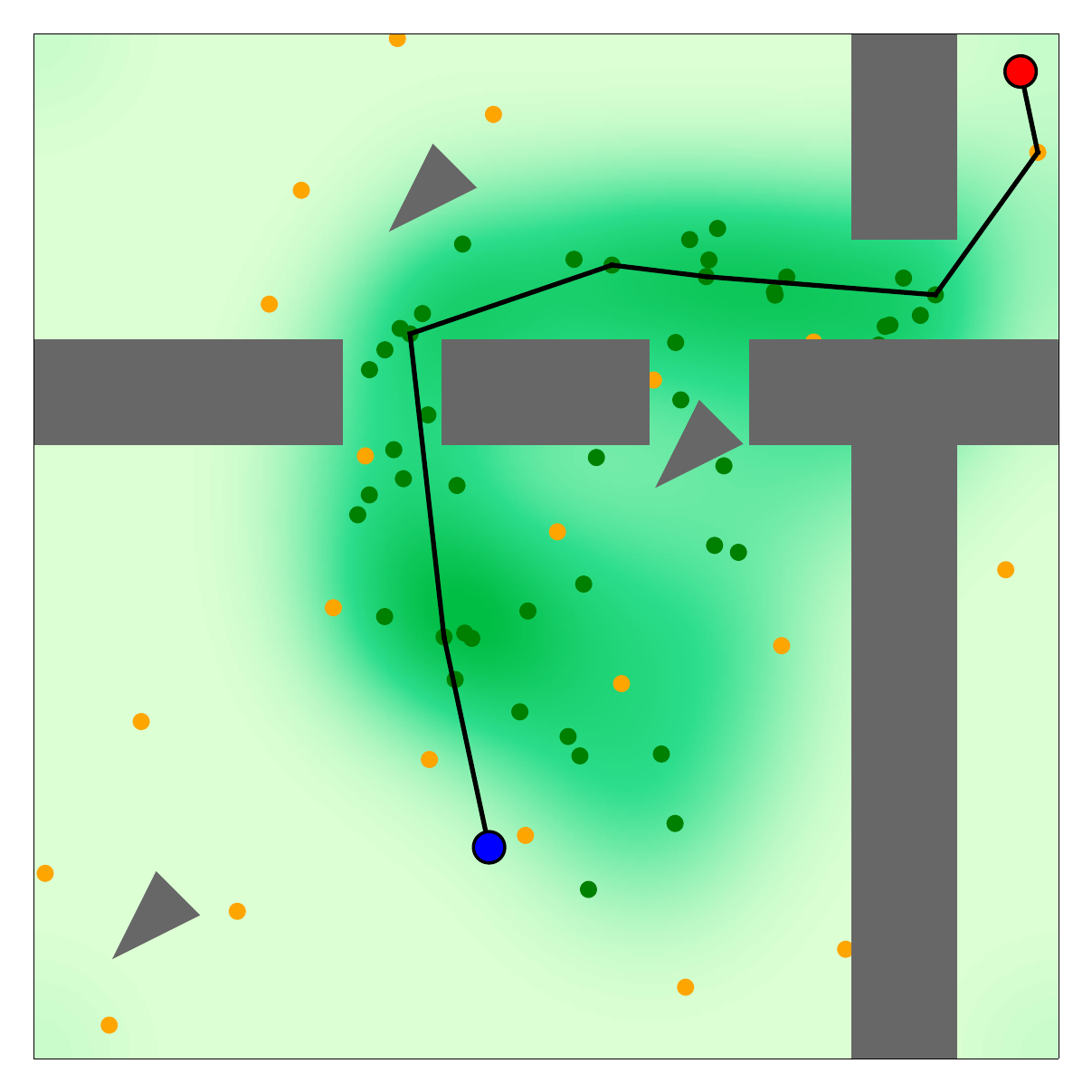}
    \caption{}
    \label{fig:diverse_diverse}     
    \vspace*{1em}
  \end{subfigure}
\caption{Comparison of samples (distribution illustrated as heatmap) generated by \algSP{} (a, c) and \algDiversity{} (b, d) in different environments. In the first environment (left), \algLEGO{} (b) is trained with samples from \algBottleneck{}. In the second environment (right), \algLEGO{} (d) is trained with samples from \algDiversity{}. In both instances, \algSP fails to find a solution.}


\label{fig:bd_experiments}
\end{figure}

\section{Experimental Results}
\label{sec:results}

In this section we evaluate the performance of \algLEGO on various problem domains and compare it against other samplers. 
We consider samplers that do not assume offline computation or learning such as Medial-Axis PRM (\algMAPRM) \cite{wilmarth1999maprm, holleman2000framework},~Randomized Bridge Sampler (\algRBB)~\cite{hsu2003bridge}, Workspace Importance Sampler (\algWIS{})~\cite{kurniawati2004workspace}, a Gaussian sampler, \algGaussian{} \cite{boor1999gaussian}, and a uniform Halton sequence sampler, \algHalton{} \cite{Hal60}. 
Additionally, we also compare our framework against the state-of-the-art learned sampler \algSP \cite{ichter2018robot} upon which our work is based.

\paragraph{Evaluation Procedure}
For a given sampler and a planning problem, we invoke the sampler to generate a fixed number of samples. 
We then evaluate the performance of the samplers on three metrics: a) sampling time b) success rate in solving shortest path problem and c) the quality of the solution obtained, on the graph constructed with the generated samples. 

\paragraph{Problem Domains}
To evaluate the samplers, we consider a spectrum of problem domains.
The $\real^2$ problems have random rectilinear walls with random narrow passages (Fig. \ref{fig:experiments}(a)). These passages can be small, medium or large in width. The n-link arms are a set of $n$ line-segments fixed to a base moving in an uniform obstacle field (Fig. \ref{fig:experiments}(b)). The n-link snakes are arms with a free base moving through random rectilinear walls with passages (Fig. \ref{fig:experiments}(c)). Finally, the manipulator problem has a 7DoF robot arm~\citep{Srinivasa_2012_7081} manipulating a stick in an environment with varying clutter (Fig. \ref{fig:experiments}(d)). Two variants are considered - constrained ($\real^7$), when the stick is welded to the hand, and unconstrained, when the stick can slide along the hand ($\real^8$). 



\begin{table*}[!htpb]
\renewcommand*{\arraystretch}{1}
\small
\centering
\caption{Average time (sec.) by sampling algorithms to generate 200 samples over 100 planning problems}
\begin{tabulary}{\textwidth}{LCCCCCCC}\toprule
	  & \multicolumn{5}{c}{ {\bf Non-Learned Samplers} } & \multicolumn{2}{c}{ {\bf Learned Samplers} } \\
      & {\bf \algHalton} & {\bf \algMAPRM} & {\bf \algRBB} & {\bf \algGaussian} & {\bf \algWIS} & {\bf \algSP} & {\bf \algLEGO} \\ \midrule
Point Robot (2D)   & $0.0036$ & $0.53$ & $0.22$ & $0.02$ & $0.25$ & $0.006$ & $0.006$\\
N-link Arm (3D)    & $0.0058$ & $-$ & $23.96$ & $1.95$ & $0.36$ & $0.016$ & $0.016$\\  
N-link Arm (7D)    & $0.0071$ & $-$ & $37.24$ & $3.77$ & $1.12$ & $0.017$ & $0.017$\\  
Snake Robot (5D)   & $0.0069$ & $39.56$ & $142.21$ & $3.43$ & $0.54$ & $0.013$ & $0.013$\\ 
Snake Robot (9D)   & $0.0074$ & $40.01$ & $180.43$ & $8.71$ & $2.11$ & $0.017$ & $0.017$\\
Manipulator (7D)   & $0.0072$ & $-$ & $-$ & $3.24$ & $-$ & $0.018$ & $0.018$ \\ 
Manipulator (8D)   & $0.0078$ & $-$ & $-$ & $3.33$ & $-$ & $0.018$ & $0.018$ \\ 
\bottomrule
\end{tabulary}
\label{tab:timing_results}
\end{table*}


\begin{table*}[!htpb]
\renewcommand*{\arraystretch}{1}
\small
\centering
\caption{Success Rates of different algorithms on 100 trials over different datasets (reported with a $95\%$ C.I.)}
\begin{tabulary}{\textwidth}{LCCCCCCC}\toprule
	  & \multicolumn{5}{c}{ {\bf Non-Learned Samplers} } & \multicolumn{2}{c}{ {\bf Learned Samplers} } \\
      & {\bf \algHalton} & {\bf \algMAPRM} & {\bf \algRBB} & {\bf \algGaussian} & {\bf \algWIS} & {\bf \algSP} & {\bf \algLEGO} \\ \midrule
\multicolumn{8}{c}{ {\bf 2D Point Robot Planning} }   \\
2D Large (easy)   & $0.73 \pm 0.08$ & $0.73 \pm 0.08$ & $0.74 \pm 0.09$	& $0.65 \pm 0.09$ & $0.78 \pm 0.08$ & $0.86 \pm 0.07$	& $\mathbf{0.97 \pm 0.03}$\\ 
2D Medium   & $0.48 \pm 0.08$ & $0.63 \pm 0.09$ & $0.61 \pm 0.09$	& $0.48 \pm 0.10$ & $0.63 \pm 0.09$ & $0.69 \pm 0.09$ & $\mathbf{0.89 \pm 0.06}$\\ 
2D Small (hard)  	& $0.36 \pm 0.09$ & $0.53 \pm 0.09$ & $0.48 \pm 0.08$ & $0.32 \pm 0.09$ & $0.52 \pm 0.09$ & $0.59 \pm 0.09$ & $\mathbf{0.83 \pm 0.07}$\\ 
\multicolumn{8}{c}{ {\bf N-Link Arm} }   \\
3D    & $0.39 \pm 0.09$ & $-$ & $0.54 \pm 0.09$ & $0.46 \pm 0.10$ & $0.52 \pm 0.10$ & $0.61 \pm 0.09$ & $\mathbf{0.74 \pm 0.08}$\\
7D    & $0.29 \pm 0.09$ & $-$ & $0.46 \pm 0.09$ & $0.41 \pm 0.09$ & $0.46 \pm 0.09$ & $0.57 \pm 0.10$ & $\mathbf{0.71 \pm 0.08}$\\
\multicolumn{8}{c}{ {\bf N-Link Snake Robot} }   \\
5D    & $0.41 \pm 0.09$ & $0.42 \pm 0.09$ & $0.48 \pm 0.10$ & $0.41 \pm 0.09$ & $0.50 \pm 0.10$ & $0.77 \pm 0.08$ & $\mathbf{0.84 \pm 0.07}$\\ 
9D    & $0.49 \pm 0.09$ & $0.45 \pm 0.09$ & $0.52 \pm 0.10$ & $0.51 \pm 0.10$ & $0.53 \pm 0.09$ & $0.82 \pm 0.07$ & $\mathbf{0.86 \pm 0.07}$\\ 
\multicolumn{8}{c}{ {\bf Manipulator Arm Planning} }   \\
Unconstrained (8D)  & $0.24 \pm 0.09$ & $-$ & $-$	& $-$ & $-$ & $0.81 \pm 0.08$ & $\mathbf{0.82 \pm 0.07}$ \\ 
Constrained (7D)   & $0.09 \pm 0.05$ & $-$ & $-$	& $-$ & $-$ & $0.58 \pm 0.09$ & $\mathbf{0.70 \pm 0.09}$ \\ 
\bottomrule
\end{tabulary}
\label{tab:benchmark_results}
\end{table*}

\paragraph{Experiment Details}
For the learned samplers \algSP and \algLEGO, we use $4000$ training worlds and $100$ test worlds. Dense graph is an $r-$disc Halton graph~\citep{janson2015deterministic}: $2000$ vertices in $\real^2$ to $30,000$ vertices in $\real^8$. 
The CVAE was implemented in TensorFlow~\cite{tensorflow2015-whitepaper} with 2 dense layers of 512 units each.
Input to the CVAE is a vector encoding source and target locations and an occupancy grid. Training time over 4000 examples ranged from 20 minutes in $\real^2$ to 60 minutes in $\real^8$ problems. At test time, we time-out samplers after $5$ sec. The code is open sourced\footnote{\url{https://github.com/personalrobotics/lego}} with more details in \cite{vernwal2018roadmaps}.

\subsection{Performance Analysis}

\paragraph{Sampling time}
Table~\ref{tab:timing_results} reports the average time each sampler takes for $200$ samples across $100$ test instances. \algSP and \algLEGO are the fastest. \algMAPRM and \algRBB both rely on heavy computation with multiple collision checking steps. \algWIS, by tetrahedralizing the workspace and identifying narrow passages, is relatively faster but slower than the learners. Unfortunately, some of the baselines time-out on manipulator planning problem due to expense of collision checking. 


\paragraph{Success Rate}

Table~\ref{tab:benchmark_results} reports the success rates ($95\%$ confidence intervals) on $100$ test instances when sampling $500$ vertices. Success rate is the fraction of problems for which the search found a feasible solution. \algLEGO has the highest success rate. The baselines are competitive in $\real^2$, but suffer for higher dimensional problems. 
\paragraph{Normalized Path Cost}
This is the ratio of cost of the computed solution w.r.t. the cost of the solution on the dense graph. Fig.~\ref{fig:experiments} shows the normalized cost for \algHalton, \algSP and \algLEGO - these were the only baselines that consistently had bounded $95\%$ confidence intervals (i.e. when success rate is $\geq 60\%$). \algSP has the lowest cost, however \algLEGO is within $10\%$ bound of the optimal. 



\begin{figure*}[!ht]
\centering
  \begin{subfigure}[b]{0.16\textwidth}
    \centering
    \includegraphics[height=7.56em]{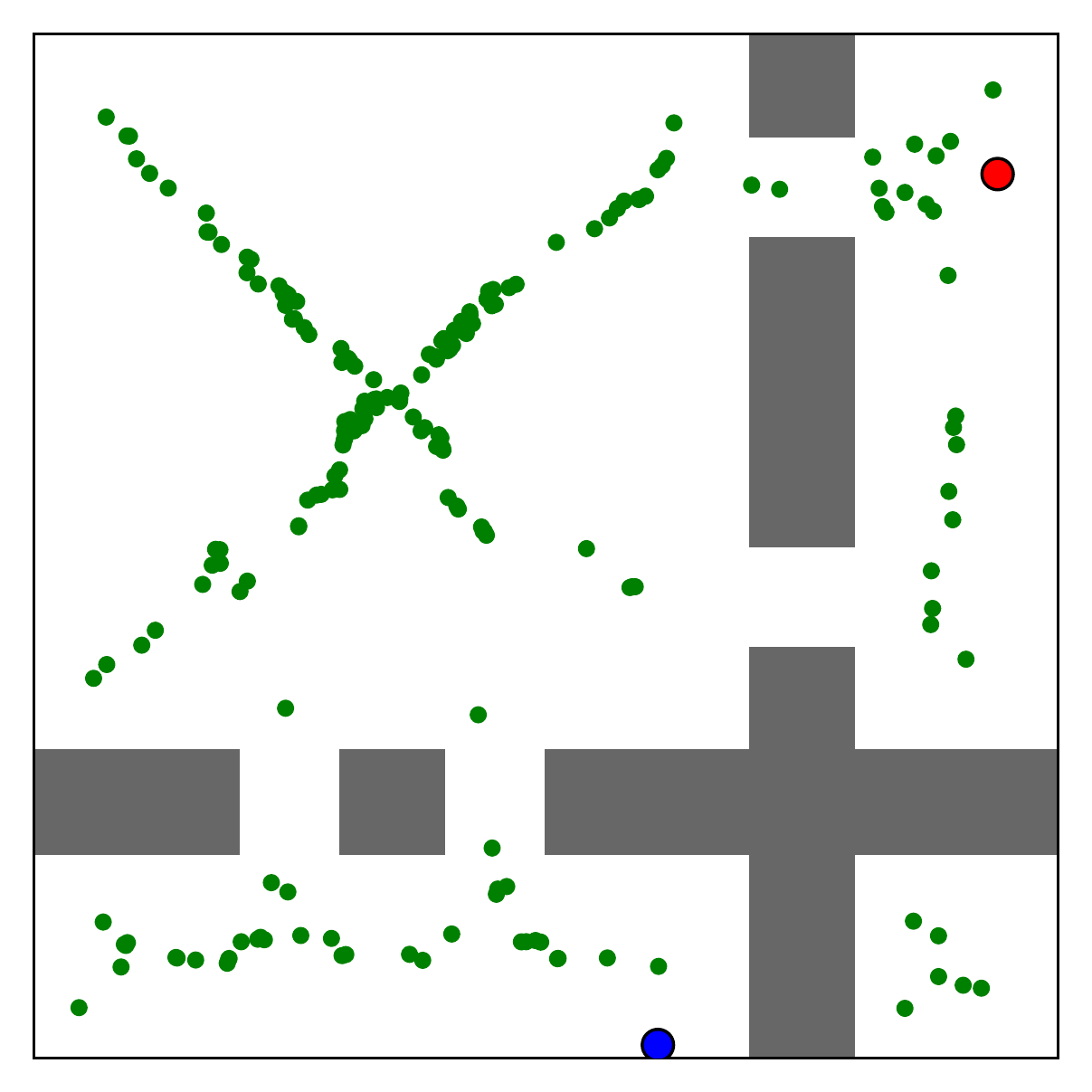}
    \caption{\bf \algMAPRM}
    \label{fig:baseline:maprm}
  \end{subfigure}
    \begin{subfigure}[b]{0.16\textwidth}
    \centering
    \includegraphics[height=7.56em]{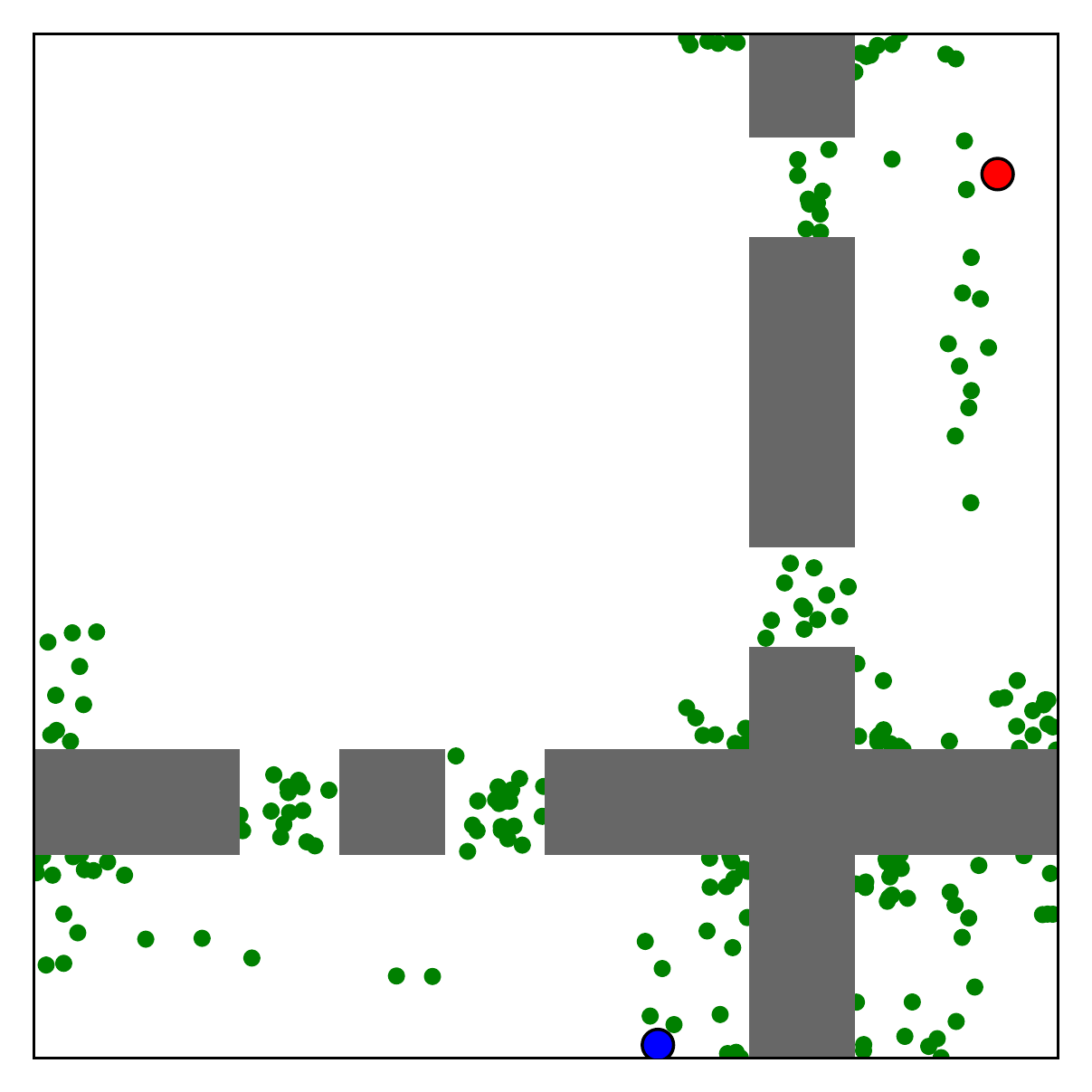}
    \caption{\bf \algRBB}
    \label{fig:baseline:rbb}
  \end{subfigure}
    \begin{subfigure}[b]{0.16\textwidth}
    \centering
    \includegraphics[height=7.56em]{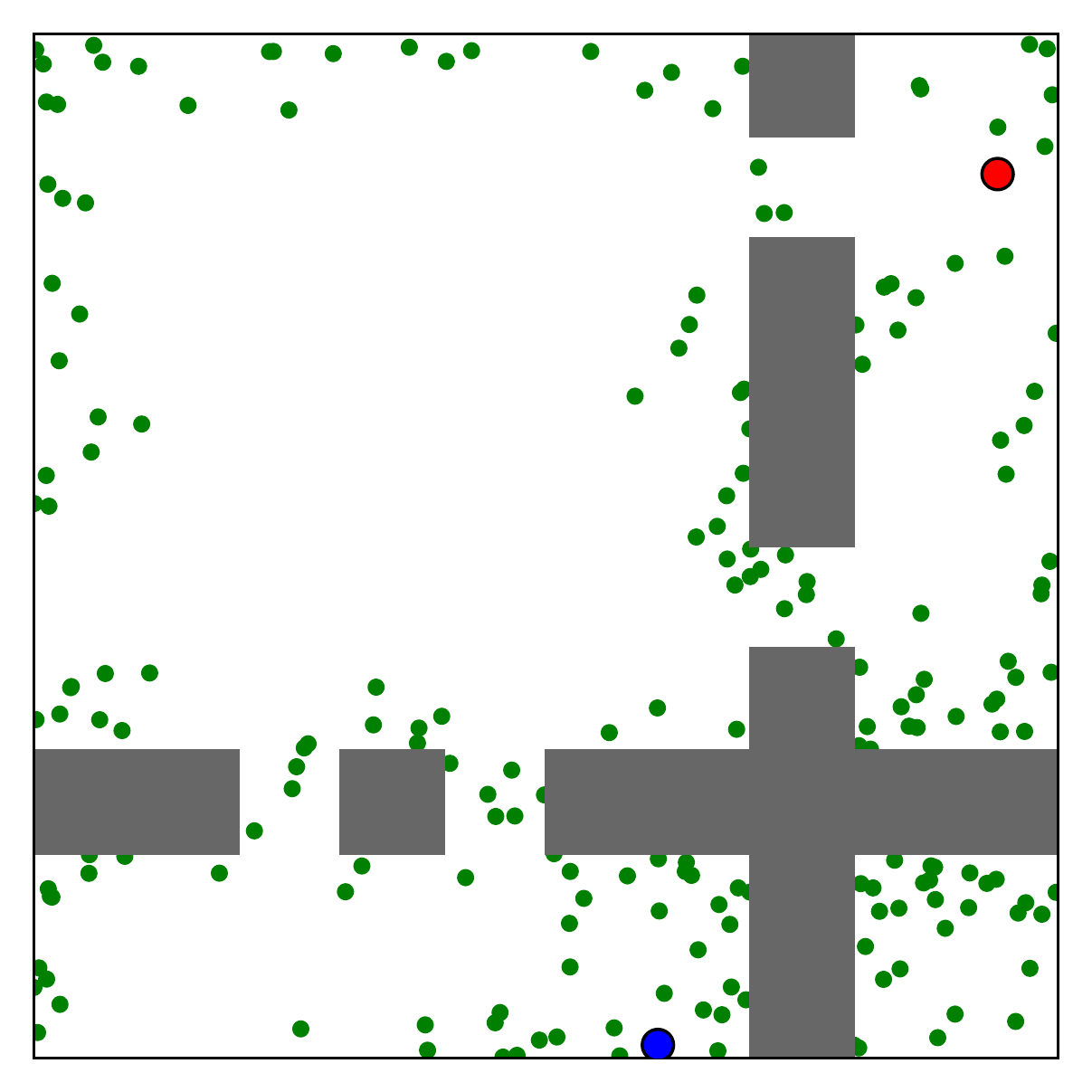}
    \caption{\bf \algGaussian}
    \label{fig:baseline:gaussian}
  \end{subfigure}
    \begin{subfigure}[b]{0.16\textwidth}
    \centering
    \includegraphics[height=7.56em]{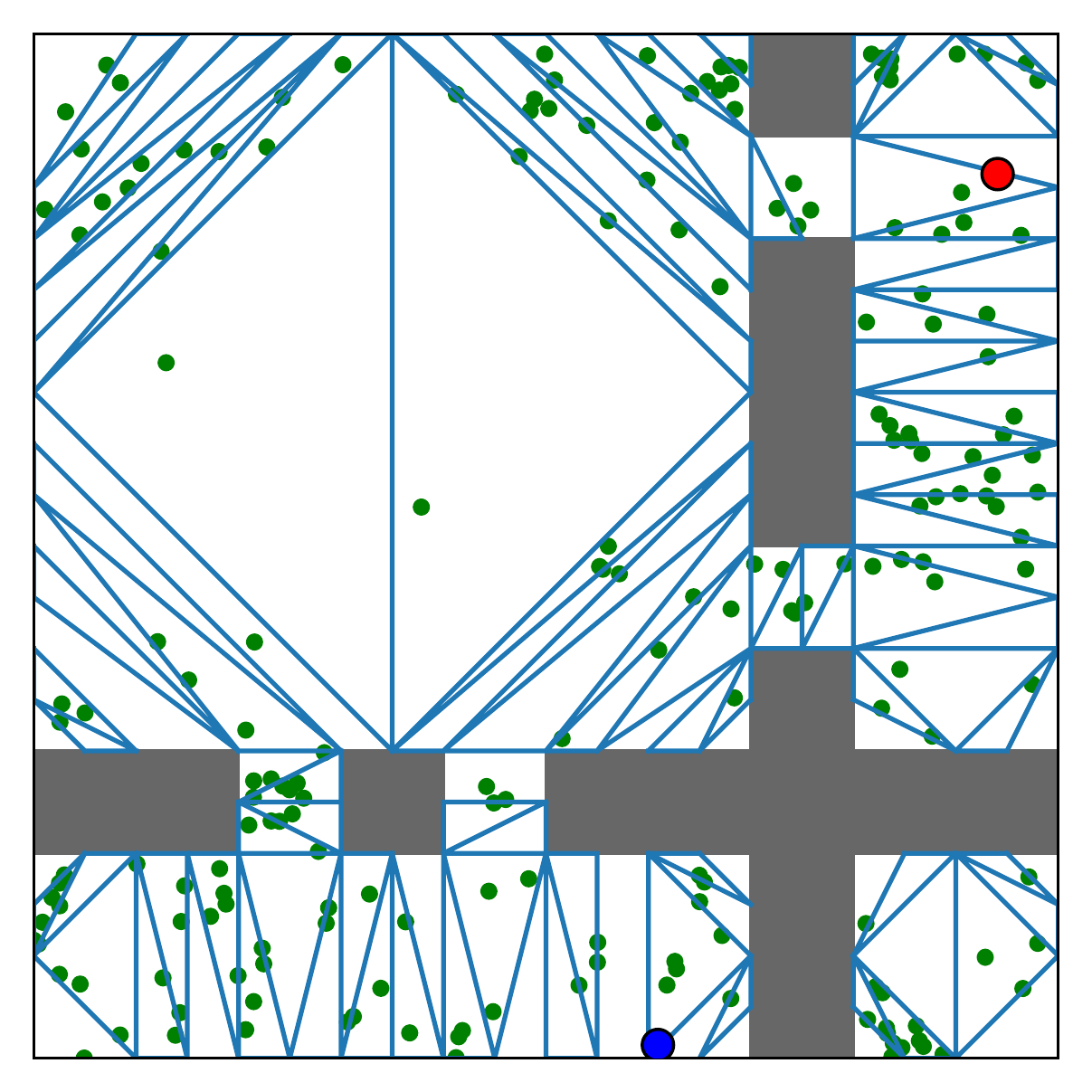}
    \caption{\bf \algWIS}
    \label{fig:baseline:wis}
  \end{subfigure}
    \begin{subfigure}[b]{0.16\textwidth}
    \centering
    \includegraphics[height=7.56em]{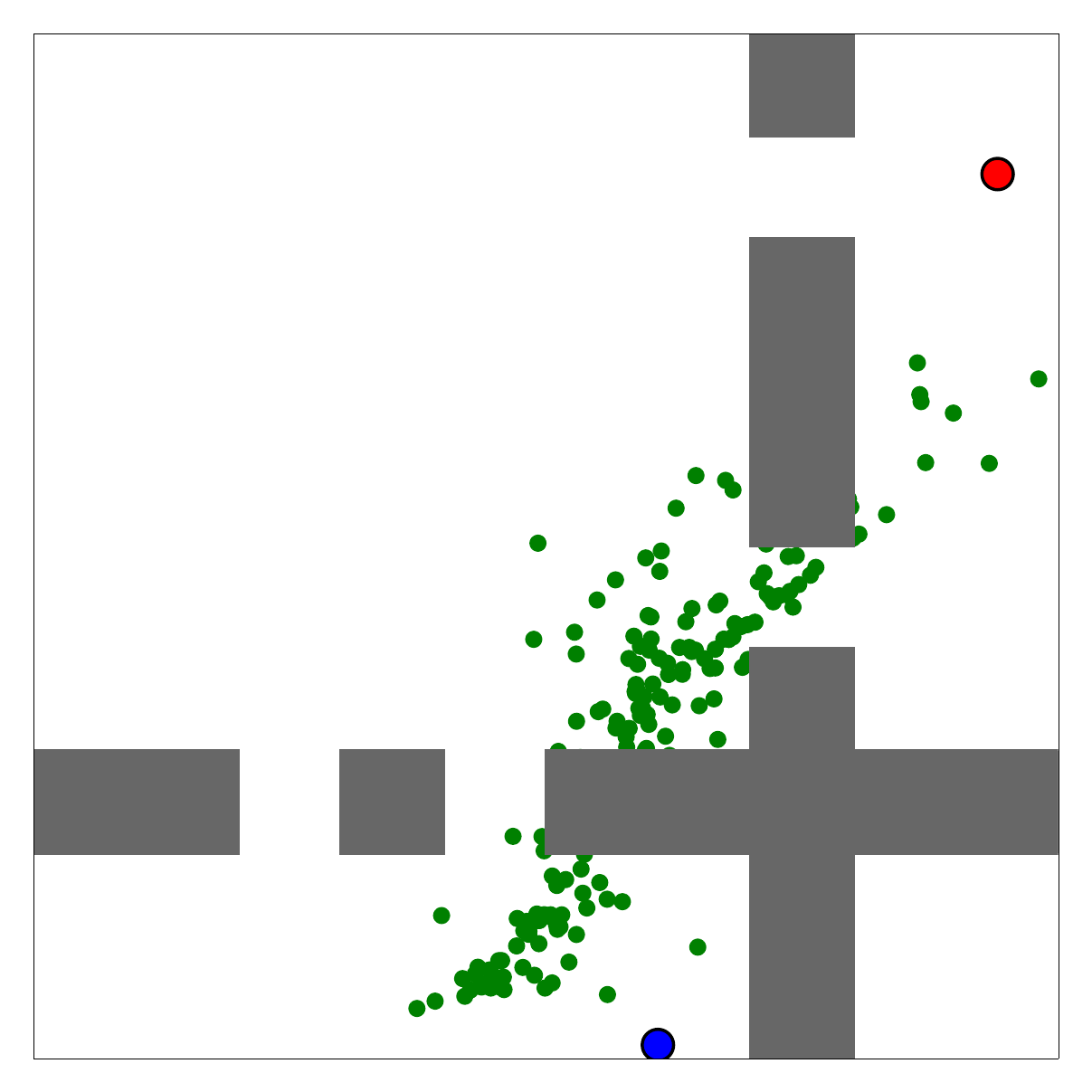}
    \caption{\bf \algSP}
    \label{fig:baseline:sp}
  \end{subfigure}
    \begin{subfigure}[b]{0.16\textwidth}
    \centering
    \includegraphics[height=7.56em]{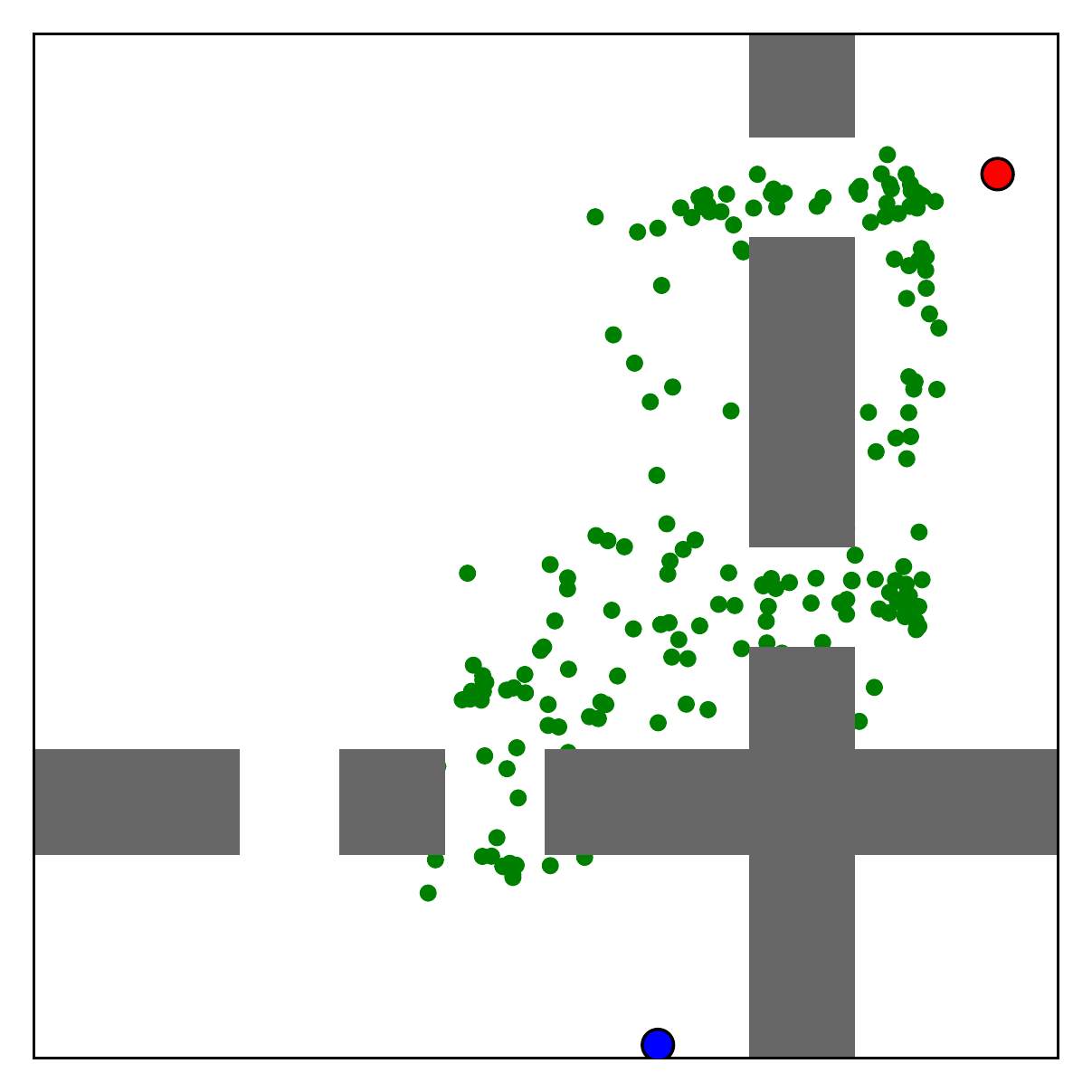}
    \caption{\bf \algLEGO}
    \label{fig:baseline:lego}
  \end{subfigure}
  \vspace{0.25em}
\caption{Comparison of samples (green) generated by all baseline algorithms on a 2D problem, planning from start (blue) to goal (red).}
\label{fig:baselines}
\end{figure*}

\begin{figure*}[!t]
    \centering
    \includegraphics[width=1.0\textwidth]{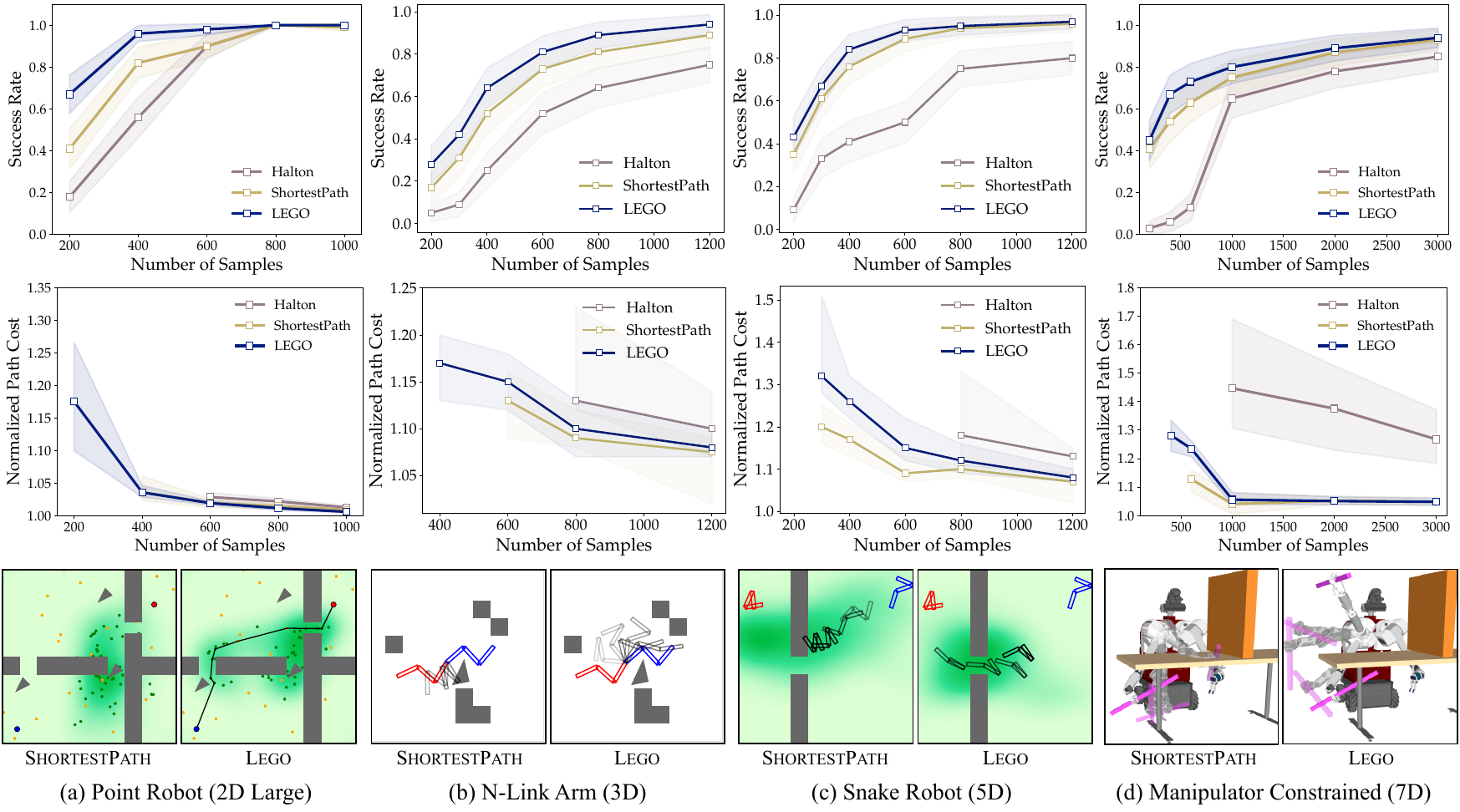}
    \caption{%
    \label{fig:experiments}
    Comparisons of \algSP against \algLEGO on different problem domains. In each problem domain (column), success rate (top), normalized path lengths (middle) and solutions determined on the roadmaps constructed using samples generated by the two samplers (bottom) are shown. \fullFigGap
    }
\end{figure*}%


\begin{figure}[!ht]
\centering
  \begin{subfigure}[b]{0.48\columnwidth}
    \centering
    \includegraphics[width=\columnwidth]{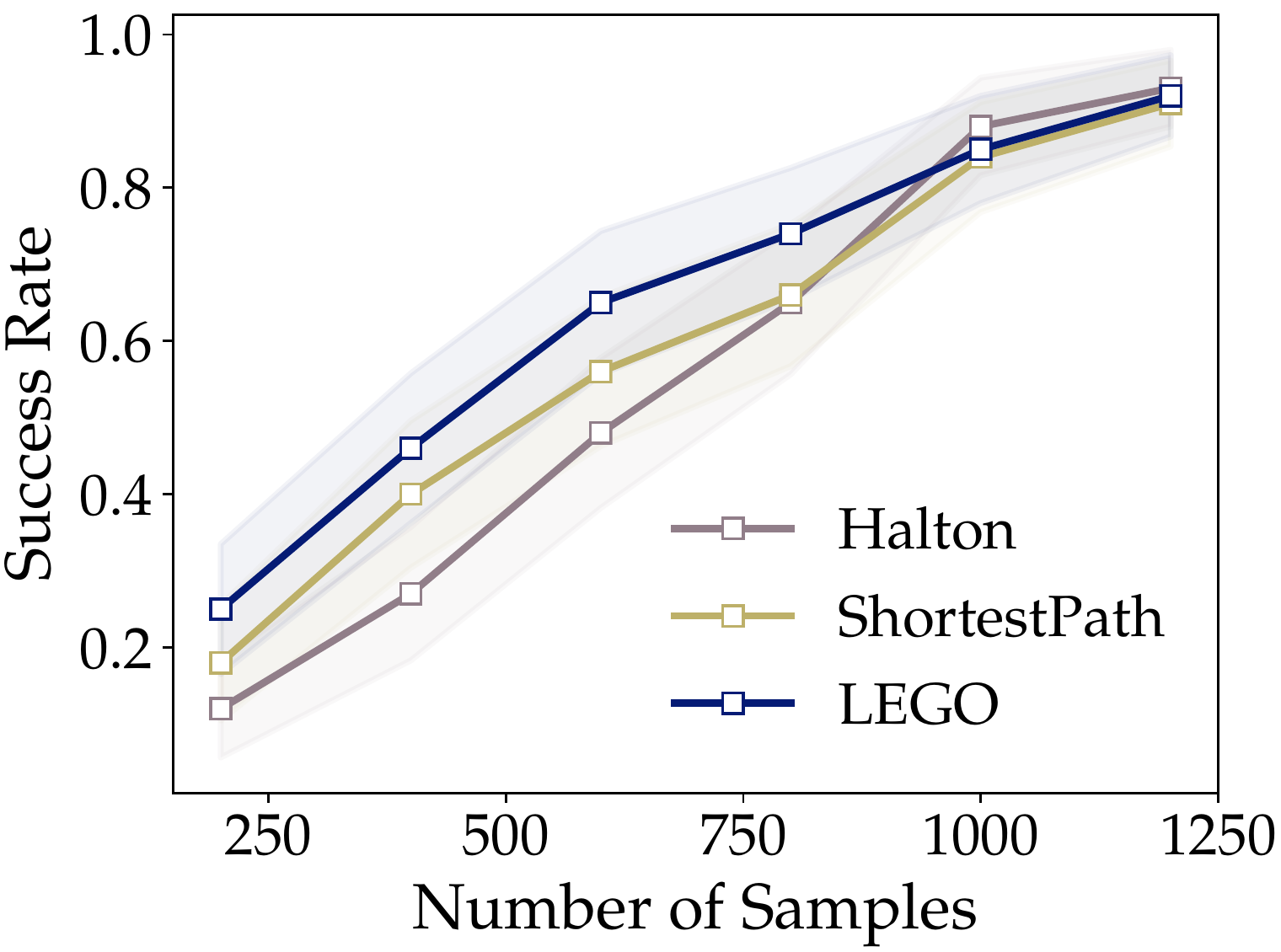}
    \caption{Corrupted Environment 1}
    \label{fig:mismatch_result:env2_sr}
  \end{subfigure}
  \begin{subfigure}[b]{0.48\columnwidth}
    \centering
    \includegraphics[width=\columnwidth]{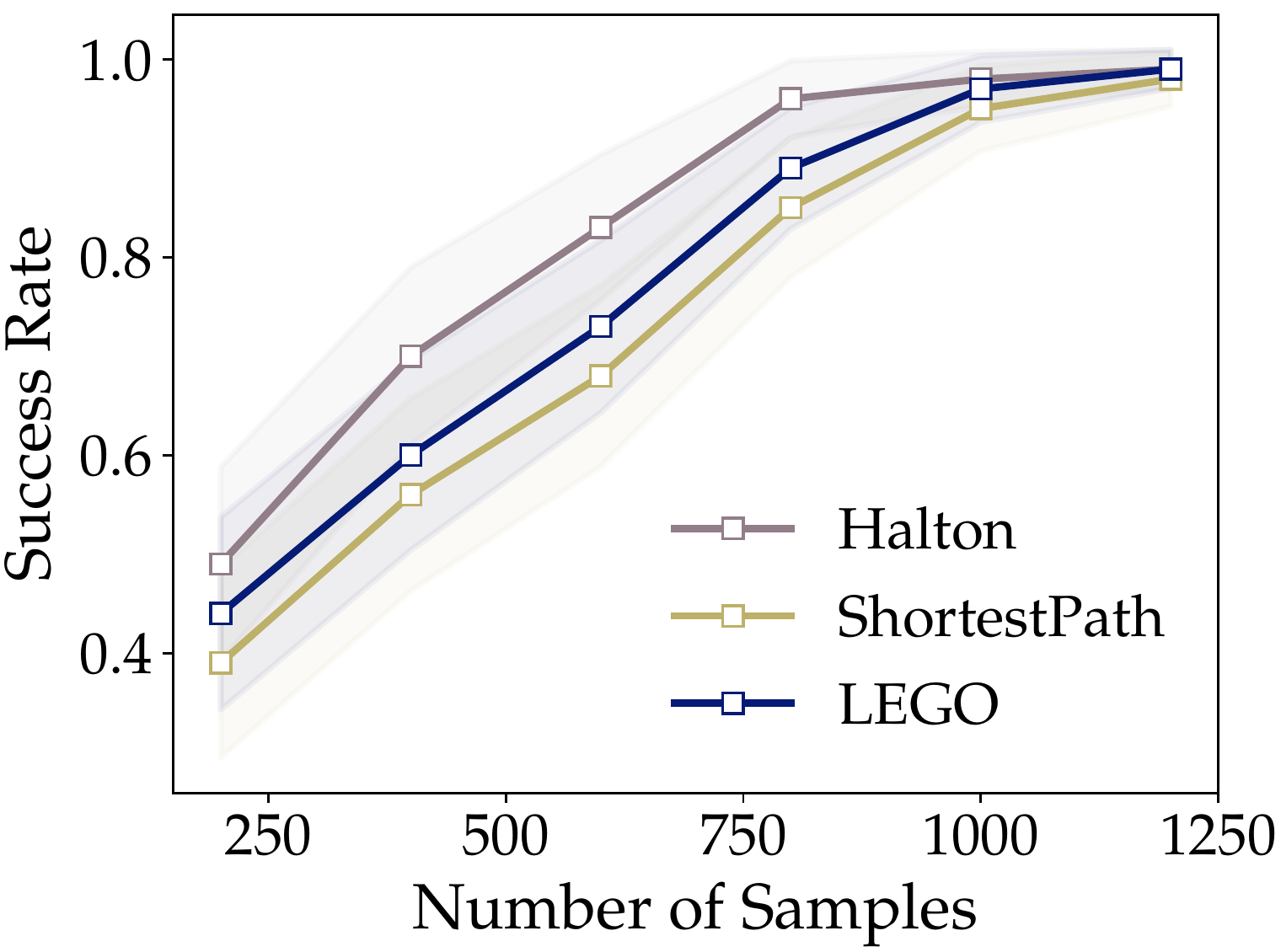}
    \caption{Corrupted Environment 2}
    \label{fig:mismatch_result:env2_sr}
  \end{subfigure}
  \begin{subfigure}[b]{0.24\columnwidth}
    \centering
    \includegraphics[width=\columnwidth]{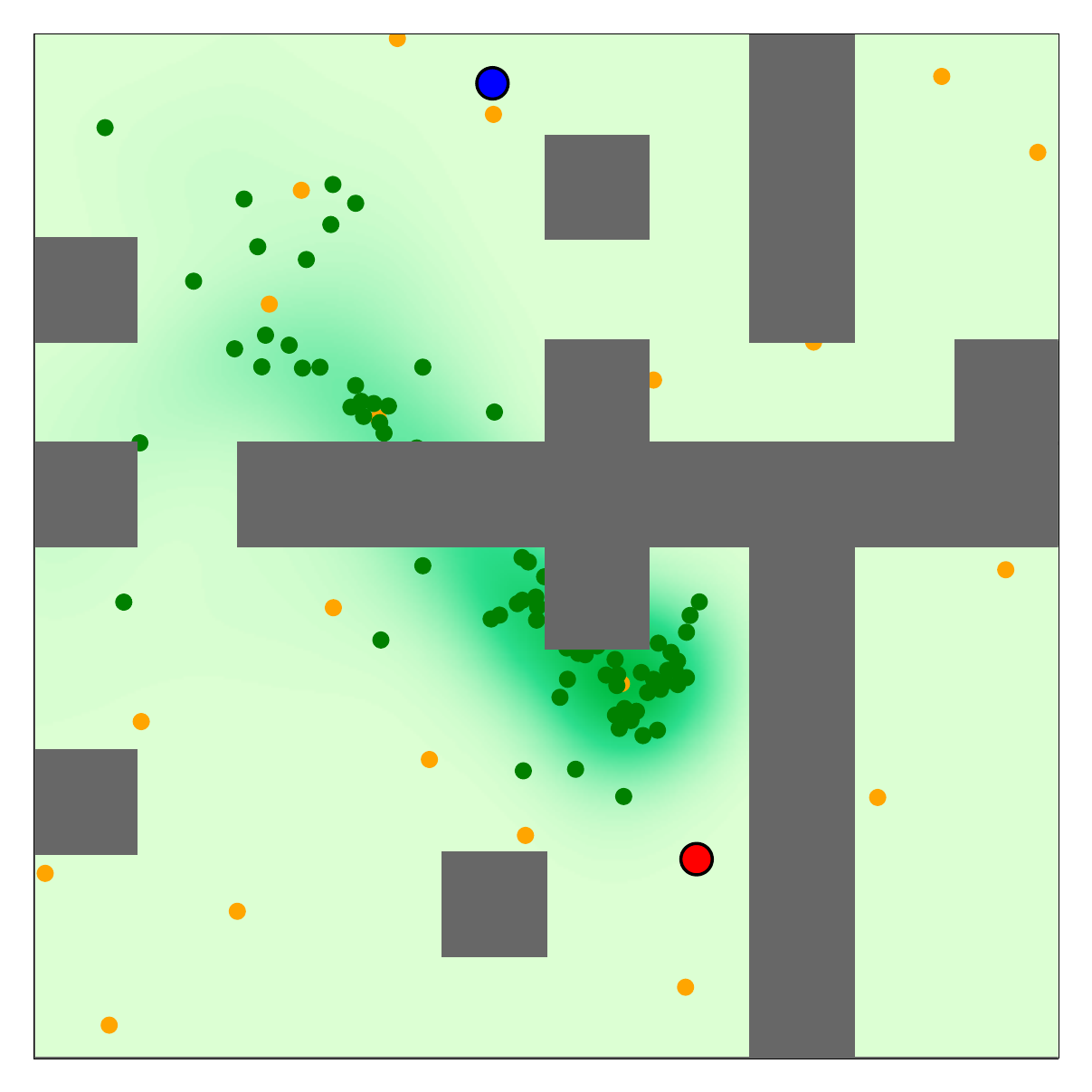}
    \caption{\scriptsize \algSP}
    \label{fig:mismatch_result:env2_sp}
  \end{subfigure}
  \begin{subfigure}[b]{0.24\columnwidth}
    \centering
    \includegraphics[width=\columnwidth]{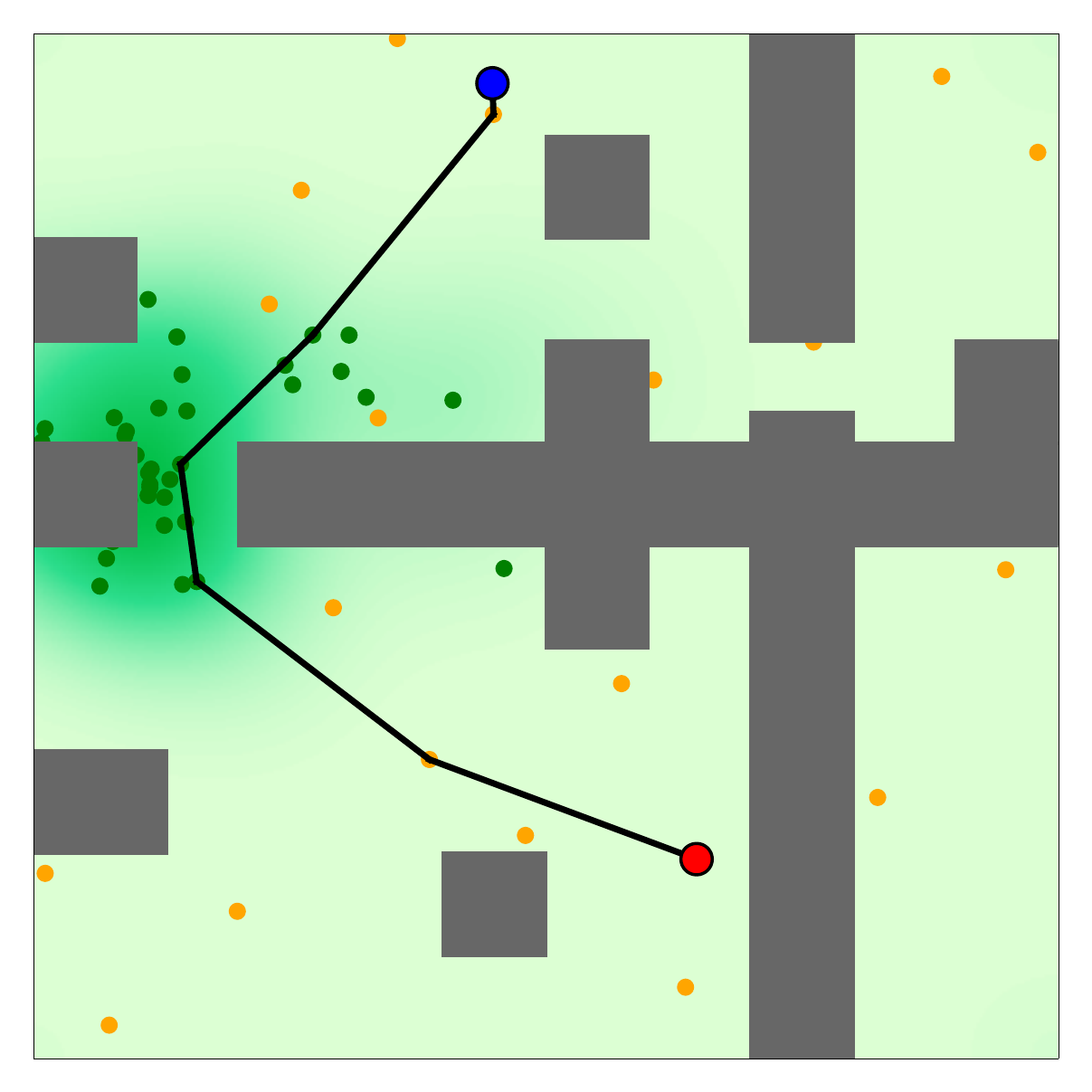}
    \caption{\scriptsize \algLEGO}
    \label{fig:mismatch_result:env2_sp}
  \end{subfigure}
    \begin{subfigure}[b]{0.24\columnwidth}
    \centering
    \includegraphics[width=\columnwidth]{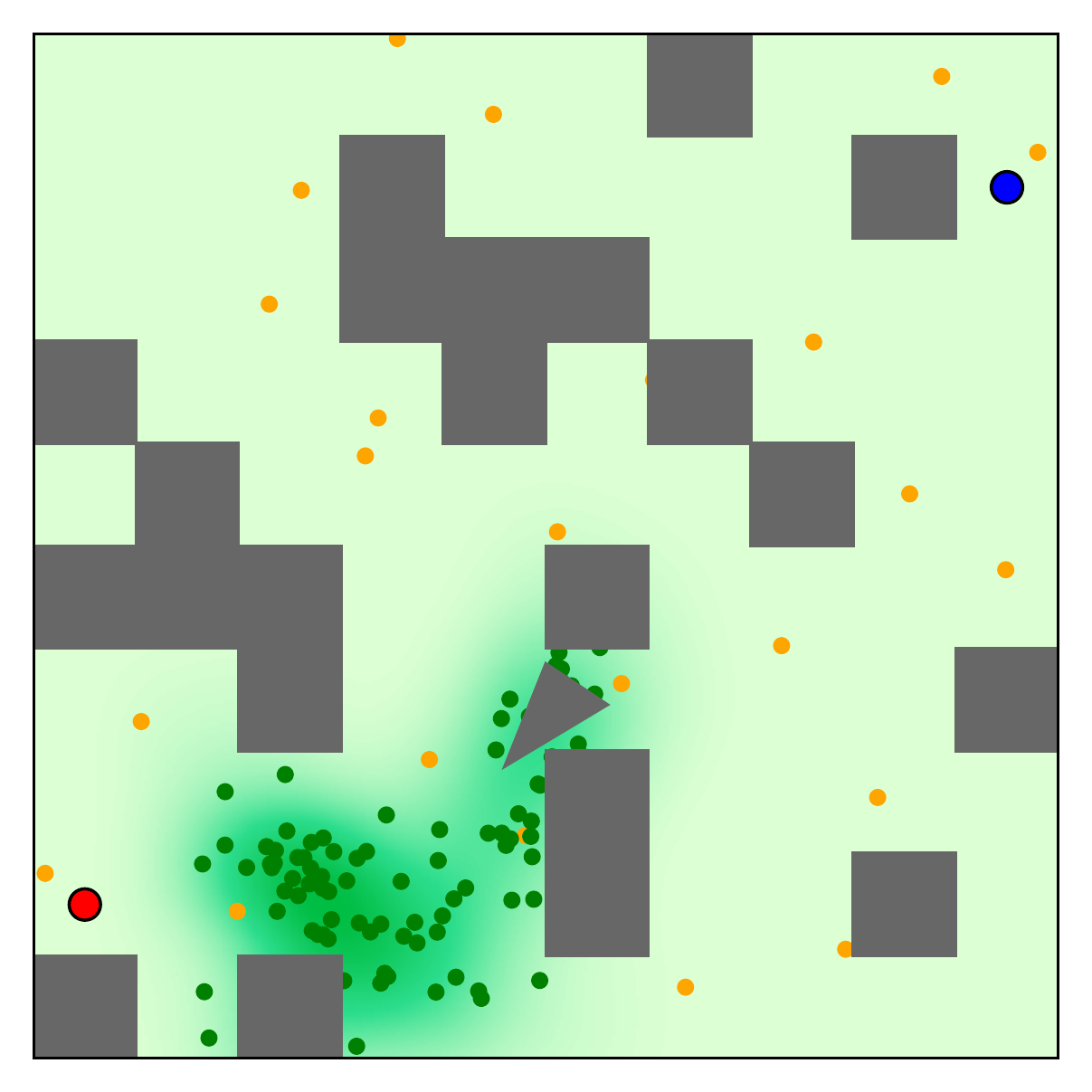}
    \caption{\scriptsize \algSP}
    \label{fig:mismatch_result:env2_sp}
  \end{subfigure}
    \begin{subfigure}[b]{0.24\columnwidth}
    \centering
    \includegraphics[width=\columnwidth]{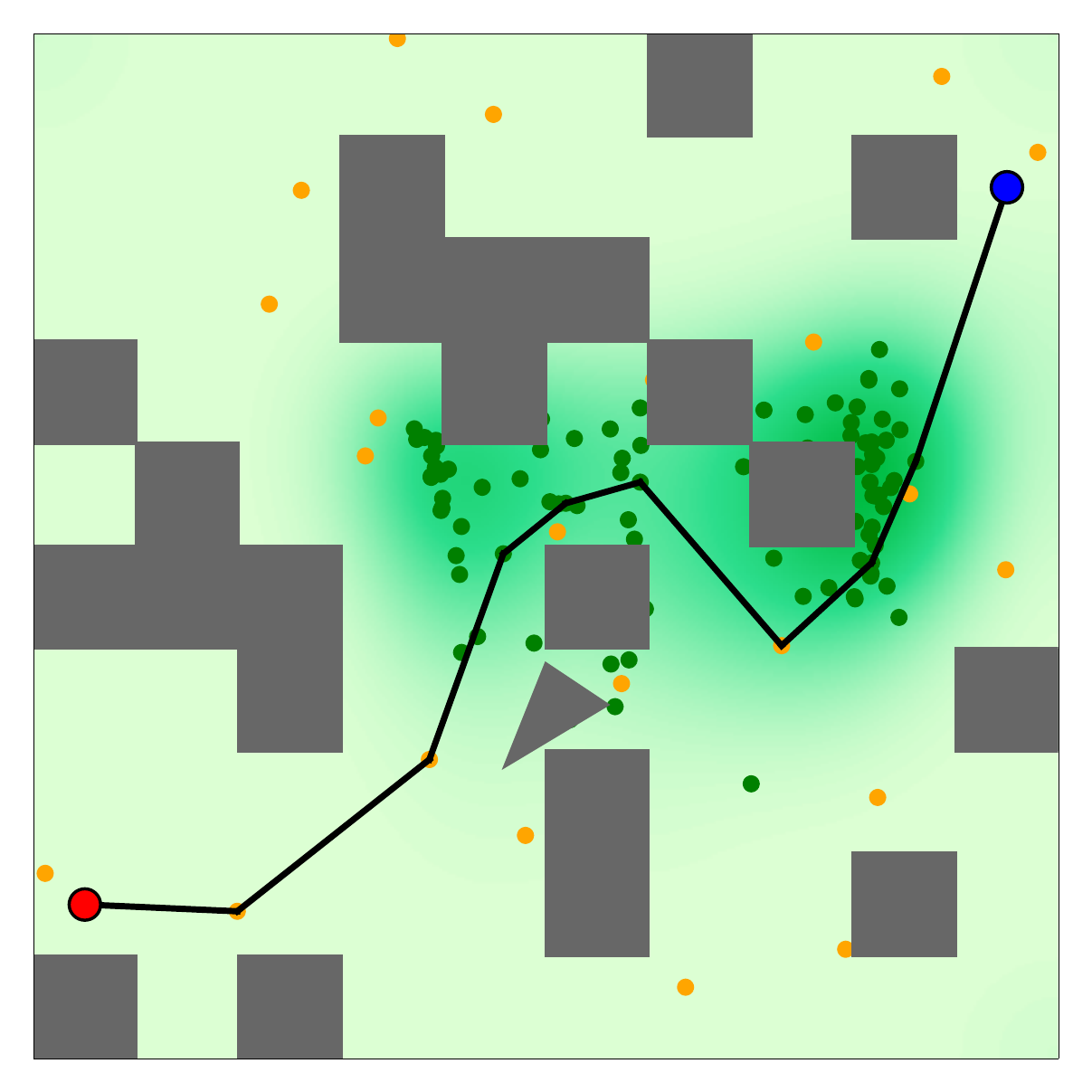}
    \caption{\scriptsize \algLEGO}
    \label{fig:mismatch_result:env2_sp}
  \end{subfigure}
\caption{Comparison of samplers on corrupted environments, i.e., different from training dataset. Success rate on (a) less corrupted environment 1: mixture of walls and random squares and (b) more corrupted environment 2: only squares. Output of \algSP and \algLEGO on environment 1 (c,d) and environment 2 (e,f).}
\label{fig:mismatch_result}
\end{figure}


\subsection{Observations}
We report on some key observations from Table~\ref{tab:benchmark_results} and \figref{fig:experiments}.
\begin{observation}
\algLEGO consistently outperforms all baselines
\end{observation}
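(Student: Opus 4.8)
The plan is to treat this as an empirical claim and establish it by a coordinated reading of the three metrics introduced in \sref{sec:results} --- sampling time (Table~\ref{tab:timing_results}), success rate (Table~\ref{tab:benchmark_results}), and normalized path cost (\figref{fig:experiments}) --- together with the corrupted-environment study (\figref{fig:mismatch_result}). The logical skeleton is: (i) \algLEGO is never dominated on sampling time, (ii) it strictly dominates on success rate wherever the problem is hard, and (iii) the success-rate gain costs essentially nothing in solution quality.

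For (i): both learned samplers are a single forward pass through the trained decoder, so Table~\ref{tab:timing_results} shows \algLEGO and \algSP tie for fastest in every domain and are one-to-four orders of magnitude faster than \algMAPRM, \algRBB, \algGaussian and \algWIS, all of which perform repeated collision checks per sample (several even time out on the manipulator). For (ii): walking down the \algLEGO column of Table~\ref{tab:benchmark_results}, its point estimate is the maximum in every row; to claim a \emph{strict} improvement I would verify that the gap to the runner-up exceeds the sum of the two reported $95\%$ half-widths. This holds with room to spare on the hard rows --- 2D~Medium, 2D~Small, both N-link arms, the constrained manipulator --- and on all non-learned comparisons; on 2D~Large and the unconstrained manipulator the intervals overlap \algSP, so there one can only assert ``at least as good.''

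For (iii): \figref{fig:experiments} shows \algLEGO's normalized path cost stays within roughly $10\%$ of the dense-graph optimum in every domain --- marginally above \algSP but still near-optimal --- so the large success-rate win is not bought with bad paths. The robustness half of ``all baselines'' is then closed by \figref{fig:mismatch_result}: on corrupted test environments \algSP's success rate collapses while \algLEGO's degrades gracefully, which is exactly the behavior the diversity construction (\algDiversity, with its near-optimal invalidated-edge-set guarantee) and the bottleneck construction (\algBottleneck, with its bounded-bottleneck-edge-weight guarantee) were designed to produce.

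The main obstacle is that this is a statistical rather than a deductive statement, so ``consistently'' has to be read as ``dominant where it matters, tied where it does not.'' The weak points are the two rows (2D~Large, 8D unconstrained manipulator) where \algLEGO's and \algSP's confidence intervals overlap, so there the claim is only non-inferiority; a fully rigorous version would report a paired per-instance test (e.g.\ a sign test on which sampler solved each world) or larger sample sizes to separate those intervals.
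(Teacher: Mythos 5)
Your reading matches the paper's own support for this observation, which simply cites the success rates in Table~\ref{tab:benchmark_results} (best in every row at $500$ samples) and the normalized path cost in \figref{fig:experiments} (within $10\%$ of optimal); your additional use of the timing table and the mismatch study is material the paper reserves for other observations, and your caveat that the $95\%$ intervals overlap \algSP{} on 2D~Large and the unconstrained manipulator is a fair qualification the paper glosses over. In short: correct, same empirical approach, slightly more statistically careful than the original.
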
\vspace{-0.5em}
As shown in Table~\ref{tab:benchmark_results}, \algLEGO has the best success rate (for $500$ samples) on all datasets. 
The second row in~\figref{fig:experiments} shows that \algLEGO is within $10\%$ bound of the optimal path.
\begin{observation}
\algLEGO places samples only in regions where the optimal path may pass.
\end{observation}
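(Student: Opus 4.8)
The plan is to first replace the informal wording ``regions where the optimal path may pass'' with a precise set and then prove an inclusion. For a fixed problem $\pp$, I would define the \emph{relevant region} $R(\pp) \subseteq \denseVertexSet$ as the union of the node sets of $\alg(\denseGraph', \pp)$ taken over all subgraphs $\denseGraph' = \denseGraph \remove \edgeSet'$ with $\abs{\edgeSet'} \le k\ell$; by the Dense Graph assumption, deleting such a bounded set of dense edges is a faithful proxy for inserting small obstacles (each blocking only a few edges), so $R(\pp)$ is exactly the set of configurations through which the dense-graph-optimal path could be forced to pass under some plausible obstacle configuration consistent with the query. The statement to establish is then $\vertexSetLEGO \subseteq R(\pp)$: every node \algLEGO emits lies on an optimal path of some plausible variant of $\pp$ --- unlike, say, obstacle- or medial-axis-based heuristics, which densify near obstacle features irrespective of whether any near-optimal path for the query goes there.

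The containment itself decomposes into two monotone steps. First, bottleneck extraction only prunes: the line $\vertexSetBN \gets \densePath^* \cap \inflatedPath^*$ in Algorithm~\ref{alg:bottleneck_node_generation} returns a subset of the nodes of its input path (which, when invoked from Algorithm~\ref{alg:lego_node_generation}, is a diverse path $\Path^* \in \PathSetDiv$), so $\vertexSetLEGO = \bigcup_{\Path^* \in \PathSetDiv} \vertexSetBN(\Path^*) \subseteq \bigcup_{\Path^* \in \PathSetDiv} \Path^*$. Second, every diverse path is itself a dense-graph-optimal path of a plausible variant: by Definition~\ref{def:diverse_path_set} and the construction in Algorithm~\ref{alg:diverse_pathset_generation}, each $\Path^* \in \PathSetDiv$ equals $\alg(\graph^i, \pp)$ where $\graph^i = \denseGraph \remove (\edgeSet^*_1 \cup \dots \cup \edgeSet^*_i)$ has at most $i\ell \le k\ell$ edges deleted; hence $\Path^* \subseteq R(\pp)$. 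Chaining the two inclusions gives $\vertexSetLEGO \subseteq R(\pp)$, which is the claim.

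The step I expect to be the main obstacle is pinning down the edge-deletion/obstacle correspondence so that $R(\pp)$ really is ``the region where the optimal path may pass'' and not an artifact of the graph discretization: one must argue that the dense graph is fine enough that (i) any obstacle small enough to be ``plausible'' invalidates only $O(\ell)$ dense edges per round, so inserting it keeps us inside the family defining $R(\pp)$, and (ii) conversely each adversarial edge set $\edgeSet^*_i$ chosen by Algorithm~\ref{alg:diverse_pathset_generation} is the footprint of some realizable obstacle. A secondary wrinkle is that the adversary in Algorithm~\ref{alg:diverse_pathset_generation}, as well as the \textsc{SetCover} step inside Algorithm~\ref{alg:lego_node_generation}, are only near-optimal (the Near-optimal Invalidated EdgeSet proposition); to keep the argument rigorous I would inflate the budget in the definition of $R(\pp)$ by the associated $(1 + \log\abs{\PathSetInv})$ factor, after which both steps above carry over verbatim. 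Finally, the conclusion can be sharpened: since $\vertexSetBN = \densePath^* \cap \inflatedPath^*$ retains only nodes still used by the optimal path once the sparse-graph shortcuts are penalized by the inflation factor $\inflation$, the Bounded bottleneck edge weights proposition shows the retained set is within an additive $(1+\epsilon)\cost(\densePath^*)/\inflation$ of the \emph{minimal} set achieving $(1+\epsilon)$-optimality --- so \algLEGO in fact concentrates its samples on the essential bottleneck sub-regions of $R(\pp)$, a strictly stronger statement than mere membership in $R(\pp)$.
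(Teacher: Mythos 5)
There is a genuine mismatch between what you prove and what the observation asserts. O2 is not a theorem in the paper: it is an empirical observation about the samples emitted by the \emph{trained CVAE} at test time, and the paper's entire justification is the qualitative comparison in \figref{fig:baselines} (the heuristic baselines densify near obstacle features irrespective of the query, \algSP is query-aware but misses the gaps, \algLEGO finds the right gaps). Your containment argument, by contrast, is about the training targets $\vertexSetLEGO$ produced by Algorithm~\ref{alg:lego_node_generation}. That part is correct as far as it goes, and is a reasonable formalization of why the \emph{training data} concentrates on query-relevant regions: $\vertexSetBN = \densePath^* \cap \inflatedPath^*$ is indeed a subset of the nodes of the diverse path it is handed, and each diverse path is, by Definition~\ref{def:diverse_path_set}, a shortest path on $\denseGraph$ with a bounded set of edges removed. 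But nothing in that chain constrains where the decoder $\decoder(\stateVar \mid \condVar, \latentVar)$ places probability mass after training. The CVAE is a continuous generative model fit by maximizing \eref{eq:cvae_loss}; its approximation and train--test-mismatch error is precisely the failure mode the paper documents in \sref{sec:shortest_path} and the reason \algDiversity exists at all. So the inclusion $\vertexSetLEGO \subseteq R(\pp)$ does not transfer to the test-time sampler, which is the object O2 describes; the observation remains empirical, supported by the figures rather than by a proof.

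Two smaller points. First, your identification of $R(\pp)$ with ``regions where the optimal path may pass'' rests on an edge-deletion-to-obstacle correspondence that you yourself flag as the main obstacle and never establish; the paper makes no such claim, and the more natural reading of O2 is simply that samples concentrate on near-optimal alternatives for the \emph{given} free space, which is what $\PathSetDiv$ is designed to capture. Second, the closing ``sharpening'' misreads the Bounded bottleneck edge weights proposition: that result bounds the total \emph{weight} of the chosen bottleneck edges against that of the optimal bottleneck edges; it says nothing about the cardinality or location of the retained node set, so it does not show that $\vertexSetBN$ is close to the minimizer in Definition~\ref{def:bottleneck_nodes}.
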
\vspace{-0.5em}
\figref{fig:baselines} shows samples generated by various baseline algorithms on a 2D problem. The heuristic baselines use various strategies to identify important regions - \algMAPRM finds medial axes, \algRBB finds bridge points, \algGaussian samples around obstacles, \algWIS divides up space non-uniformly and samples accordingly. However, these methods places samples everywhere irrespective of the query. \algSP takes the query into account but fails to find the gaps. \algLEGO does a combination of both -- it finds the right gaps.
\begin{observation}
\algLEGO has a higher performance gain on harder problems (narrow passages) as it focuses on bottlenecks.
\end{observation}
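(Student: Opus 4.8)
The plan is to combine a worst-case sampling-complexity bound for the baselines with a $\delta$-independent lower bound for \algLEGO, and then corroborate the resulting monotonicity with the numbers already in Table~\ref{tab:benchmark_results}. First I would make ``harder'' precise by parametrizing a family of problems by the clearance $\delta$ of the narrowest passage that every near-optimal path must traverse (the 2D Large/Medium/Small instances being a concrete realization with decreasing $\delta$). Recall from the introduction that a low-dispersion sampler such as \algHalton needs $\Omega((1/\delta)^{\Dim})$ samples before any vertex lands in such a passage, so for a fixed sample budget $N$ its success probability $\to 0$ as $\delta \to 0$; the same holds for the other query-agnostic heuristics, since none of them conditions on which passage the optimal path uses, and \algSP degrades for a different reason (its interpolation local minimum, \sref{sec:shortest_path}).

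Next I would show that \algLEGO's success probability is bounded below by a quantity that does not depend on $\delta$. There are two pieces. (i) By Definition~\ref{def:bottleneck_nodes} and the bounded-bottleneck-edge-weights guarantee, the target set $\inputLEGO$ produced by \algBottleneck along each diverse path contains vertices that, together with $\sparseGraph$, certify a $(1+\epsilon)$-optimal path; in particular it contains at least one vertex inside every passage the dense-graph solution uses. (ii) If the trained decoder reproduces the empirical distribution of $\inputLEGO$ up to total-variation error $\alpha$, then $N$ i.i.d.\ draws place a vertex in each such passage with probability at least $(1-(1-p_{\min})^{N}) - N\alpha$, where $p_{\min}>0$ is the smallest mass the training distribution assigns to any bottleneck region; crucially $p_{\min}$ scales with the number of bottlenecks, not with $\delta$. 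Composing with $\sparseGraph$ yields a feasible path, so the \algLEGO success rate stays bounded away from $0$. Subtracting the two bounds gives a gap that is $\Theta(1)$ for \algLEGO versus $o(1)$ for the baselines, i.e.\ the gain grows as the passage narrows; the diversity component enters only to protect against the decoder missing one of several candidate passages.

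Finally I would close the loop empirically: instantiating the family with the 2D walls domain and reading off Table~\ref{tab:benchmark_results}, the \algLEGO-minus-\algHalton success-rate gap is $0.24$, $0.41$, $0.47$ on Large, Medium, Small, and the \algLEGO-minus-\algSP gap widens similarly ($0.11 \to 0.20 \to 0.24$), matching the predicted monotonicity; scanning the higher-dimensional n-link and manipulator rows shows the effect persists.

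The hard part will be item (ii): it hinges on the CVAE actually learning the bottleneck distribution well enough — the bounds on $\alpha$ and $p_{\min}$ are modeling assumptions rather than theorems — and on excluding decoder mode collapse that would concentrate mass on a subset of the bottlenecks. A secondary difficulty is that ``performance gain'' must be pinned to a single metric (success rate at a fixed budget) for the comparison to be well posed; normalized path cost behaves differently and would only admit a weaker, separate statement.
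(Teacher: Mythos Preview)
Your proposal takes a genuinely different route from the paper. The paper treats this observation as purely empirical: its entire justification is to point at Table~\ref{tab:benchmark_results} (the 2D Large/Medium/Small rows, noting that \algLEGO outperforms more dominantly as gaps narrow), to invoke the \algBottleneck component and \figref{fig:bottleneck_bottleneck}, and to contrast the constrained versus unconstrained manipulator rows (the constrained problem is harder, and there \algLEGO pulls further ahead of \algSP). No asymptotic sampling-complexity argument, no decoder-fidelity bound, no formal parametrization by $\delta$ appears. Your empirical closing paragraph is therefore essentially the whole of what the paper offers, and on that part you are aligned (your computed gaps $0.24/0.41/0.47$ and $0.11/0.20/0.24$ are exactly the kind of reading the paper intends).

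The theoretical scaffolding you add is more ambitious, and it buys a principled explanation of \emph{why} the gap should widen rather than a mere tabulation that it does. But it has a real weak point beyond the one you flag. Your claim that $p_{\min}$ is $\delta$-independent conflates the \emph{training} distribution (discrete mass on bottleneck vertices of $\denseGraph$) with the \emph{decoder's} output distribution. Even if the CVAE matches the training distribution to total-variation $\alpha$, the decoder emits continuous samples with nonzero reconstruction variance; for a sample to be useful it must land inside a passage of width $\delta$, so the effective hit probability can still shrink with $\delta$ unless you additionally assume the decoder's per-mode spread is $o(\delta)$. Without that, step (ii) does not yield a $\delta$-free lower bound, and the monotone-gap conclusion does not follow from the theory alone. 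The paper sidesteps all of this by never claiming more than the experiments show.
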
\vspace{-0.5em}
Table~\ref{tab:benchmark_results} shows how success rates vary in 2D problems with small / medium / large gaps. As the gaps gets narrower, \algLEGO outperforms more dominantly. The \algBottleneck component in \algLEGO seeks the bottleneck regions (\figref{fig:bottleneck_bottleneck}).

For manipulator planning $\real^8$ problems, when stick is unconstrained, \algLEGO and \algSP are almost identical. We attribute this to such problems being easier, i.e. the shortest path simply slides the stick out of the way and plans to the goal. When the stick is constrained, \algLEGO does far better. \figref{fig:experiments}(d) shows that \algLEGO is able to sample around the table while \algSP cannot find this path. 

\begin{observation}
\algLEGO is robust to a certain degree of train-test mismatch as it encourages diversity. 
\end{observation}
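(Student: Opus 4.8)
The plan is to recast ``train--test mismatch'' as a bounded test-time adversary and then cash in the guarantees already proved for \algDiversity{} and \algBottleneck{}. Model the mismatch as a set $\edgeSet_{\mathrm{test}} \subseteq \denseEdgeSet$ of dense-graph edges that become infeasible at test time (unforeseen obstacles), with $\abs{\edgeSet_{\mathrm{test}}} \le m$. The statement to prove then becomes: there is a threshold $m_0 = m_0(k,\ell)$ such that whenever $m \le m_0$, the roadmap $\sparseGraph \compose \vertexSetLEGO$ returned by \algLEGO{} still contains a feasible path, and indeed one of cost at most $(1+\epsilon)\,\cost(\Path_{i^*})$ for some $\Path_{i^*} \in \PathSetDiv$ whose index $i^*$ is bounded by the number of diverse paths $\edgeSet_{\mathrm{test}}$ actually blocks (plus one), so the solution degrades gracefully toward $\cost(\densePath^*)$. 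By contrast \algSP{} has no such margin: a single edge of its one predicted path lying in $\edgeSet_{\mathrm{test}}$ already forces $\emptyset$.

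First I would record the structure produced by Line~\ref{line:lego:diversity}. By Definition~\ref{def:diverse_path_set}, $\PathSetDiv = \{\Path_1,\dots,\Path_k\}$ with $\cost(\Path_1) \le \cdots \le \cost(\Path_k)$, where $\Path_i = \alg(\denseGraph \remove (\edgeSet^*_1 \cup \cdots \cup \edgeSet^*_i), \pp)$ uses no edge deleted in rounds $1,\dots,i$, and each $\edgeSet^*_i$ is a near-optimal worst-case $\ell$-edge cut of $\graph^{i-1}$, so every $\Path_j$ with $j<i$ meets $\edgeSet^*_i$. Next, by the loop invariant of Algorithm~\ref{alg:bottleneck_node_generation} (together with the bottleneck-edge-weight bound proved for \algBottleneck{}) and Lines~\ref{line:lego:iterate_start}--\ref{line:lego:bottleneck}, for each $i$ the set $\vertexSetLEGO$ contains a bottleneck set for $\Path_i$, hence $\sparseGraph \compose \vertexSetLEGO$ carries a path $\widehat\Path_i$ with $\cost(\widehat\Path_i) \le (1+\epsilon)\cost(\Path_i)$. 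So the roadmap embeds $k$ near-optimal copies of a family of paths that Algorithm~\ref{alg:diverse_pathset_generation} has certified to be mutually ``far'': moving off $\Path_{j-1}$ costs $\ell$ fresh edge deletions.

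The heart of the argument is a covering/pigeonhole step showing that an adversary spending $m \le m_0$ edges cannot hit all of $\widehat\Path_1,\dots,\widehat\Path_k$. If these paths were edge-disjoint this is immediate with $m_0 = k-1$. In general I would use the per-round budget $\ell$ to exhibit a system of private $\ell$-cuts separating the $\Path_i$, giving $m_0 = \Theta(k\ell)$; or, under the modelling assumption the diversity game is really designed for --- that the $\Path_i$ lie in distinct homotopy classes --- argue that a spatially localized obstacle intrusion blocks only the classes it physically occludes and therefore spares at least one $\widehat\Path_{i^*}$. Given a surviving $\widehat\Path_{i^*}$, feasibility is immediate, $\cost(\widehat\Path_{i^*}) \le (1+\epsilon)\cost(\Path_{i^*})$, and bounding $i^*$ by the count of blocked diverse paths plus one yields the cost clause.

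The main obstacle is exactly this last step: $\edgeSet_{\mathrm{test}}$ is \emph{not} the adversary Algorithm~\ref{alg:diverse_pathset_generation} optimized against, so ``the algorithm spent up to $k\ell$ edges separating its paths'' does not by itself imply ``no $m$-edge test perturbation kills all of them.'' Closing that gap requires a genuine disjointness / min-cut property of the greedily chosen $\edgeSet^*_i$ (which one would have to extract from the greedy set-cover analysis used for \algDiversity{}), or the homotopy-class assumption, or a retreat to an average-case model of mismatch. I expect the homotopy-class version to be the cleanest provable statement and the fully assumption-free worst-case version to be false in general --- consistent with this claim being stated qualitatively and supported empirically in \figref{fig:mismatch_result} rather than as a tight theorem.
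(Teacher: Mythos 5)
The paper does not prove this observation at all: O4 is an empirical claim, and its entire support is the corrupted-environment experiment in \figref{fig:mismatch_result} (success-rate curves on two corrupted $\real^2$ test sets, plus qualitative pictures showing \algSP{} sampling over the unexpected obstacles while \algLEGO{} still covers a surviving bottleneck). So your attempt to turn it into a theorem about a bounded test-time edge adversary is a genuinely different route from anything in the paper, and your closing self-assessment is exactly right: the claim is stated qualitatively and validated empirically, and the assumption-free worst-case version you would need is almost certainly false. The gap you flag yourself is the real one: Algorithm~\ref{alg:diverse_pathset_generation}'s greedy adversary certifies that \emph{its own} $\ell$-edge cuts separate consecutive shortest paths, but nothing in the construction (or in the set-cover near-optimality guarantee) prevents all $k$ diverse paths from sharing a single edge elsewhere, so a one-edge test perturbation can in principle kill the entire pathset. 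Without an explicit edge-disjointness or homotopy-class hypothesis, the pigeonhole step does not close, and the paper supplies no such hypothesis.

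There is also a second gap your formalization elides. Even granting a surviving $\Path_{i^*} \in \PathSetDiv$, the test-time roadmap is not $\sparseGraph \compose \vertexSetLEGO$: the set $\vertexSetLEGO$ is only the \emph{training target} fed to the CVAE, and the deployed roadmap consists of $N$ i.i.d.\ draws from the learned decoder $\decoder(\stateVar \mid \condVar, \latentVar)$ composed with $\sparseGraph$, conditioned on features of the (corrupted) test problem. Robustness of the deployed system therefore additionally requires the generative model to retain all modes of the diverse multimodal target distribution under a shifted conditioning vector --- precisely the kind of statement the paper can only support with the experiments in \figref{fig:mismatch_result} (where, notably, \algLEGO{} loses to \algHalton{} on the more heavily corrupted environment, i.e.\ the ``certain degree'' qualifier is doing real work). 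If you want a provable statement, the honest scope is the combinatorial one about $\sparseGraph \compose \vertexSetLEGO$ under an explicit disjointness assumption, clearly separated from the unprovable learning step.
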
\vspace{-0.5em}
\figref{fig:mismatch_result} shows the success rate of learners on a 2D test environment that has been corrupted. Environment 1 is less corrupted than environment 2. \figref{fig:mismatch_result}(a) shows that on environment 1, \algLEGO is still the best sampler. \algSP (\figref{fig:mismatch_result}(c)) ignores the corruption in the environment and fails. \algLEGO (\figref{fig:mismatch_result}(d)) still finds the correct bottleneck. \figref{fig:mismatch_result}(b) shows that all learners are worse than \algHalton. \algSP(\figref{fig:mismatch_result}(e)) densifies around a particular constrained region while \algLEGO(\figref{fig:mismatch_result}(e)) still finds a path due to the \algDiversity component sampling in multiple bottleneck regions. 



\section{Discussion}
\label{sec:discussion}
\vspace{-2mm}
We present a framework for training a generative model to predict roadmaps for sampling-based motion planning. We build upon state-of-the-art methods that train the CVAE using the shortest path as target input. We identify important failure modes such as complex obstacle configurations and train-test mismatch. Our algorithm \algLEGO directly addresses these issues by training the CVAE using \emph{diverse bottleneck nodes} as target input. We formally define these terms and provide provable algorithms to extract such nodes. Our results indicate that the predicted roadmaps outperform competitive baselines on a range of problems.

Using priors in planning is a double edged sword. While one can get astounding speed ups by focusing search on a tiny portion of C-space~\citep{ichter2017learning}, any problem not covered in the dataset can lead to catastrophic failures. This is symptomatic of the fundamental problem of \emph{over-fitting} in machine learning. While one could ensure the training data covers all possible environments~\citep{tobin2017domain}, an algorithmic solution is to explore regularization techniques for planning. We argue \algDiversity can be viewed as a form of regularization.

We can also include a more informed conditioning vector that captures the state of the search, e.g., the length of the current shortest path. This is similar to Informed RRT*~\citep{gammell2014informed}. Finally, we wish to scale to problems with varying workspace where a global planner guides the sampler to focus on relevant parts of the workspace~\citep{HsuLatMot99, zheng2016generating}.


\footnotesize{
\bibliographystyle{unsrtnat}
\bibliography{reference}

\clearpage

\fontsize{10}{12}\selectfont

\begin{appendices}

\section{CVAE Framework}
\label{sec:appendix_framework}
We refer the reader to \cite{doersch2016tutorial} for technical details and a comprehensive tutorial on CVAE. In \sref{sec:cvae_architecture} we describe the CVAE architecture implemented to train \algLEGO and \algSP algorithms. In Sections \ref{sec:cvae_latent_variable_dimension} and \ref{sec:cvae_regularization_parameter}, we study two parameters that determine the performance of the CVAE generative model.

\subsection{Architecture}
\label{sec:cvae_architecture}

The entire CVAE module (\figref{fig:cvae_arch}) takes as input the training samples $X$, which in case of \algLEGO are the samples in bottleneck regions and along diverse paths. Additionally, the CVAE takes as input a vector of external features $y$, upon which the generative model is also conditioned upon. In the problems we consider, these features include information regarding the environment such as the poses of the obstacles and the start-goal pair. A standard CVAE model consists of an encoder and a decoder, often represented by neural networks trained using the input samples and the external features.

During training, the encoder network takes as input the high-dimensional vector of features including the training sample and the other external features and encodes it into a low-dimensional latent variable vector. The latent variable is then fed into the decoder network along with the vector of external features as an input which outputs a sample in the configuration space. This sample output by the decoder is used to minimize an objective function which aims to fundamentally reduce the divergence between the probability distribution of the training samples and the learned generative model to be able to closely reconstruct the training samples set. During testing, only the decoder network is used to generate the required samples. The decoder takes as input a latent variable sampled from standard normal distribution as well as the vector of external features to generate useful samples. 

In our implementation of the CVAE, both encoder and decoder networks have two fully connected hidden layers with 512 units each. The specifics of the external features used in each of the planning problems considered in \sref{sec:results} are discussed in \sref{sec:appendix_exp_training}. The behavior of the generative model, in addition to the features used, also depends on certain parameters. We study the effect of these parameters and their design choices in our implementation in the following subsections.

\begin{figure}[!ht]
  \centering
  \includegraphics[width=\columnwidth]{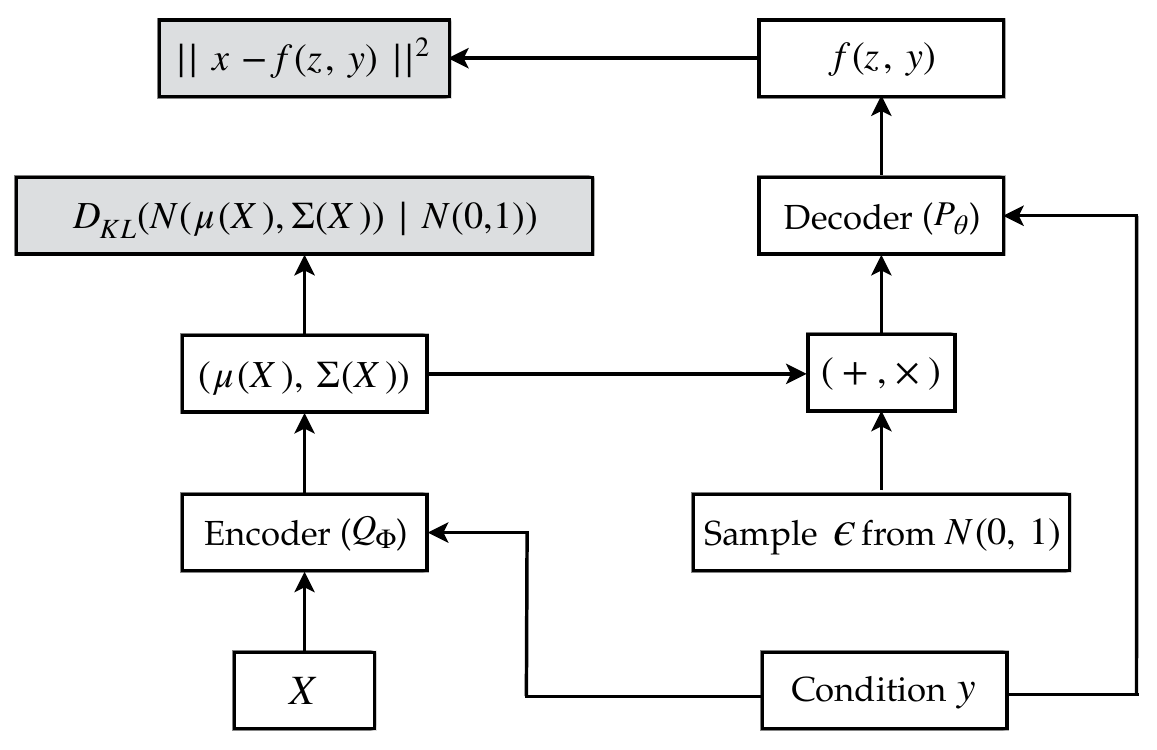}
  \caption{A simple illustration of the CVAE framework setup for training with $X$ and $y$ together denoting the input to the CVAE. \fullFigGap
  }
  \label{fig:cvae_arch}
  \vspace{2mm}
\end{figure}

\subsection{Dimensionality of the Latent Variable} 
\label{sec:cvae_latent_variable_dimension}
The latent variable captures the information available to the model through the training examples in a lower dimensional latent space. The dimensionality of the latent variable denotes how efficiently the model can capture the sources of variability required to regenerate data similar to the training examples. Theoretically, a model with larger latent dimension is at least as good as a model with lower latent dimension. However, in practice, when the latent variable dimension is high, it becomes computationally expensive for methods like stochastic gradient descent to reduce the KL divergence between the true and the approximated distributions over the latent variables conditioned on the training examples. \figref{fig:dimensionality_figs} shows the behaviors exhibited by the trained generative model for different latent variable dimensions. We choose latent variable dimension of 3 for $\real^2,~\real^5$ problems and 5 for $\real^7,~\real^8$ and $\real^9$ problems. 

\begin{figure}[!ht]
  \centering
  \begin{subfigure}[b]{0.32\linewidth}
    \centering
    \includegraphics[width=\linewidth]{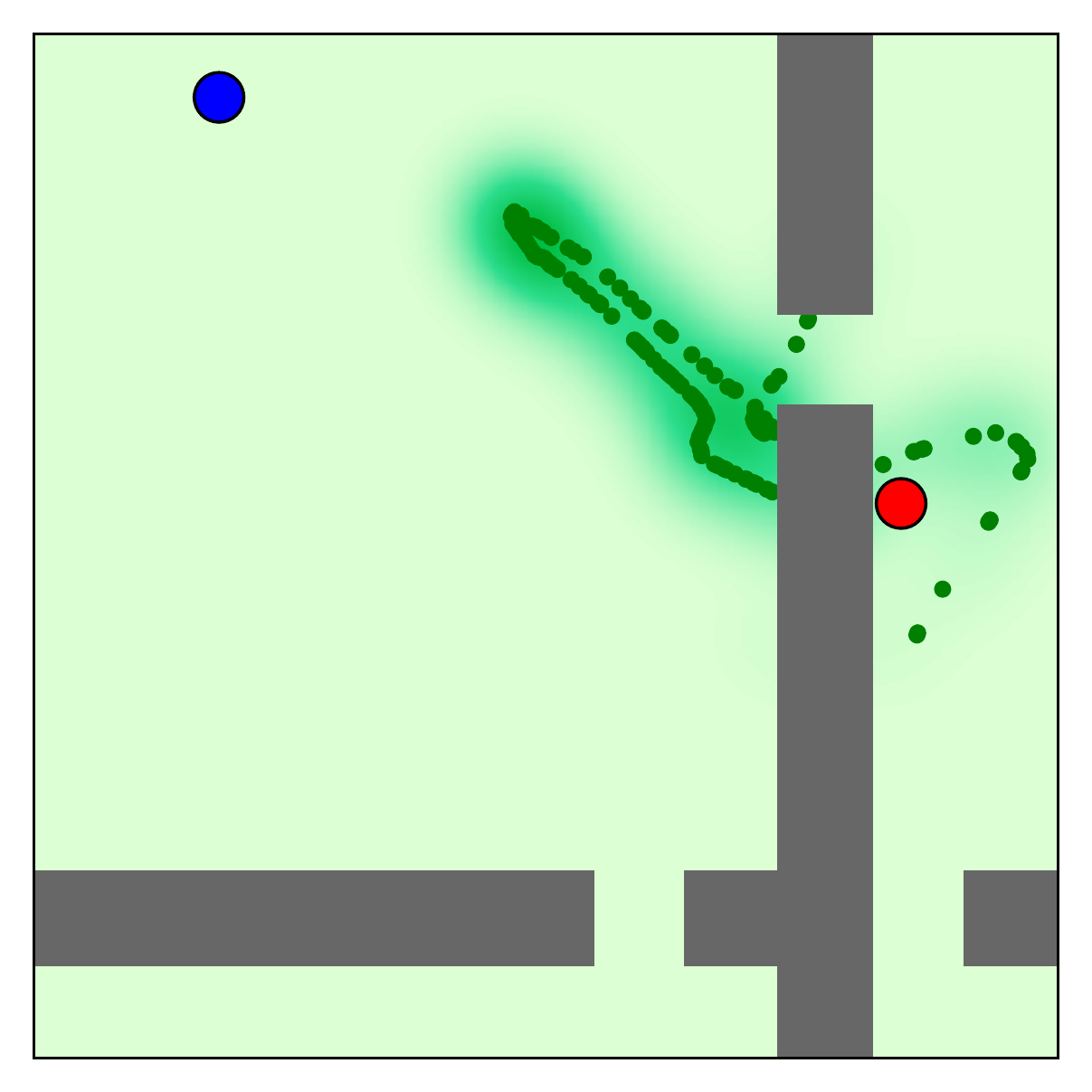}
    \caption{}
    \label{fig:regularization_a}
  \end{subfigure} \hfill
  \begin{subfigure}[b]{0.32\linewidth}
    \centering
    \includegraphics[width=\linewidth]{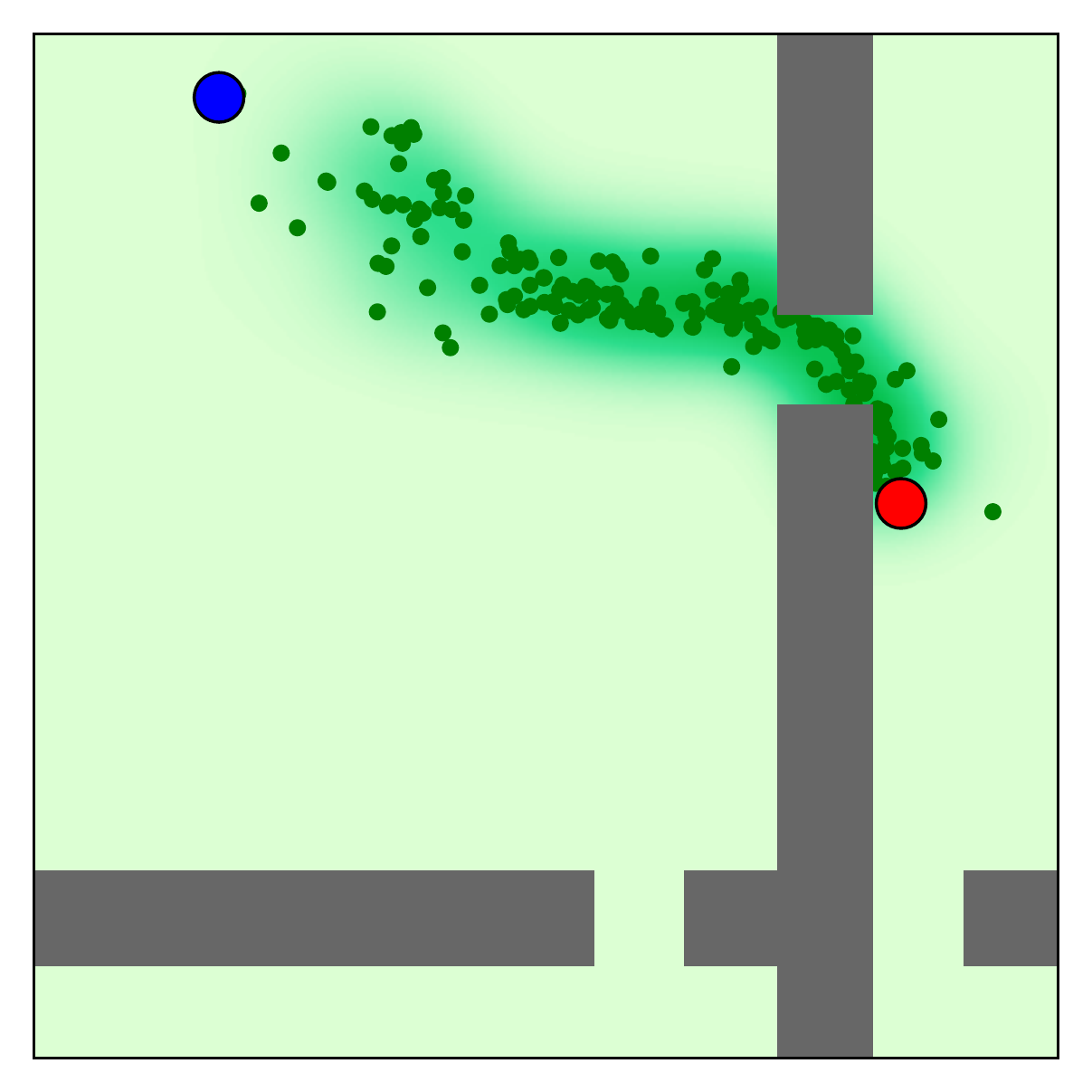}
    \caption{}
    \label{fig:regularization_b}
  \end{subfigure} \hfill
  \begin{subfigure}[b]{0.32\linewidth}
    \centering
    \includegraphics[width=\linewidth]{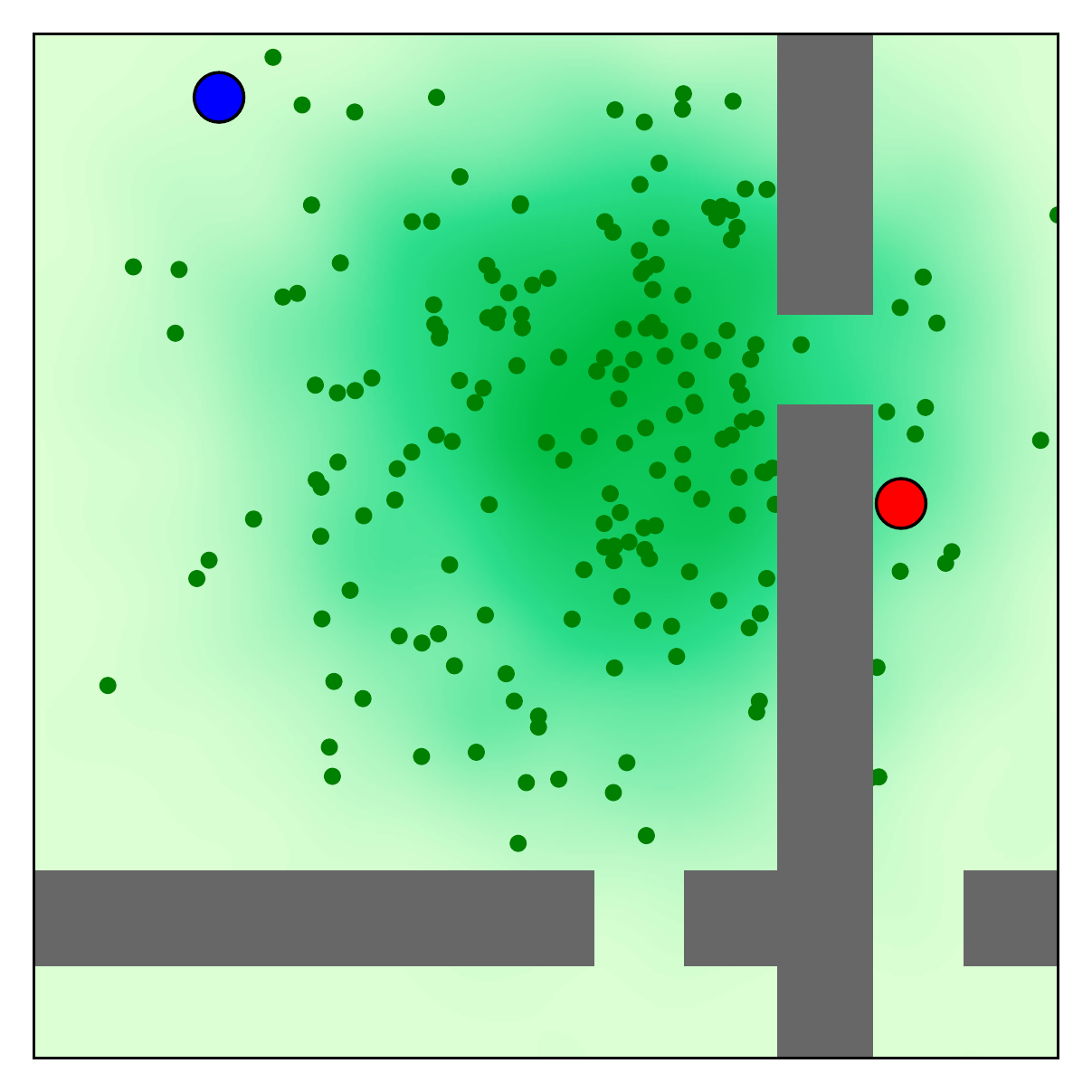}
    \caption{}
    \label{fig:regularization_c}
  \end{subfigure}
  \caption{Samples generated by CVAE trained with different latent variable dimensions (a) 1 (b) 3 and (c) 7. \fullFigGap
  }
  \label{fig:dimensionality_figs}
  \vspace{2mm}
\end{figure}

\subsection{Regularization Parameter} 
\label{sec:cvae_regularization_parameter}
Although VAEs are generally devoid of regularization parameters, one could introduce the parameter in modifying the objective function the CVAE aims to minimize when learning the generative model. The objective function in a CVAE is given by:
\begin{equation}
\texttt{Reconstruction Loss} + \lambda \times \texttt{KL Divergence}
\label{eqn:cvae_loss_function}
\end{equation}
The reconstruction loss ensures that the training data can be explained with the data generated by the model and therefore minimizing it ensures proper reconstruction of the training examples. On the other hand, the second term captures the divergence between the prior distribution over latent variable and the posterior given the training examples. Minimizing it ensures that the two distributions are similar. When the value of $\lambda$ is zero, the behavior of the corresponding VAE is similar to a traditional autoencoder in its capability to reconstruct the training examples. When the value of $\lambda$ is equal to 1, the objective function is as in a VAE. However this often leads to \emph{over-pruning} \cite{yeung2017tackling}  where many of the dimensions of the latent variable are ignored in an attempt to reduce the KL divergence. By tuning the value of $\lambda$ between 0 and 1, one could weigh the two objectives appropriately to obtain the desired generative model behavior (\figref{fig:regularization_figs}).

\begin{figure}[!ht]
  \centering
  \begin{subfigure}[b]{0.32\linewidth}
    \centering
    \includegraphics[width=\linewidth]{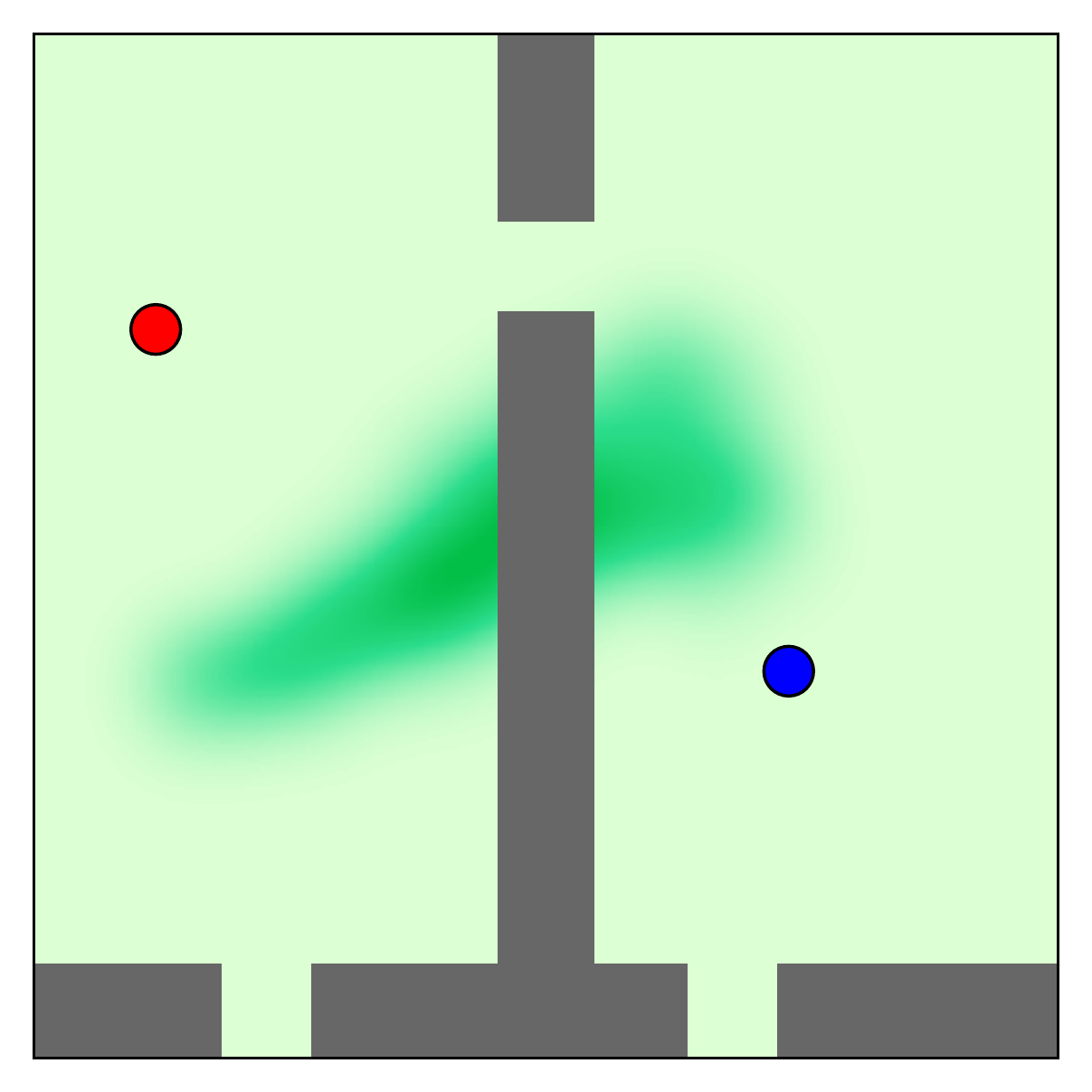}
    \caption{}
    \label{fig:regularization_a}
  \end{subfigure} \hfill
  \begin{subfigure}[b]{0.32\linewidth}
    \centering
    \includegraphics[width=\linewidth]{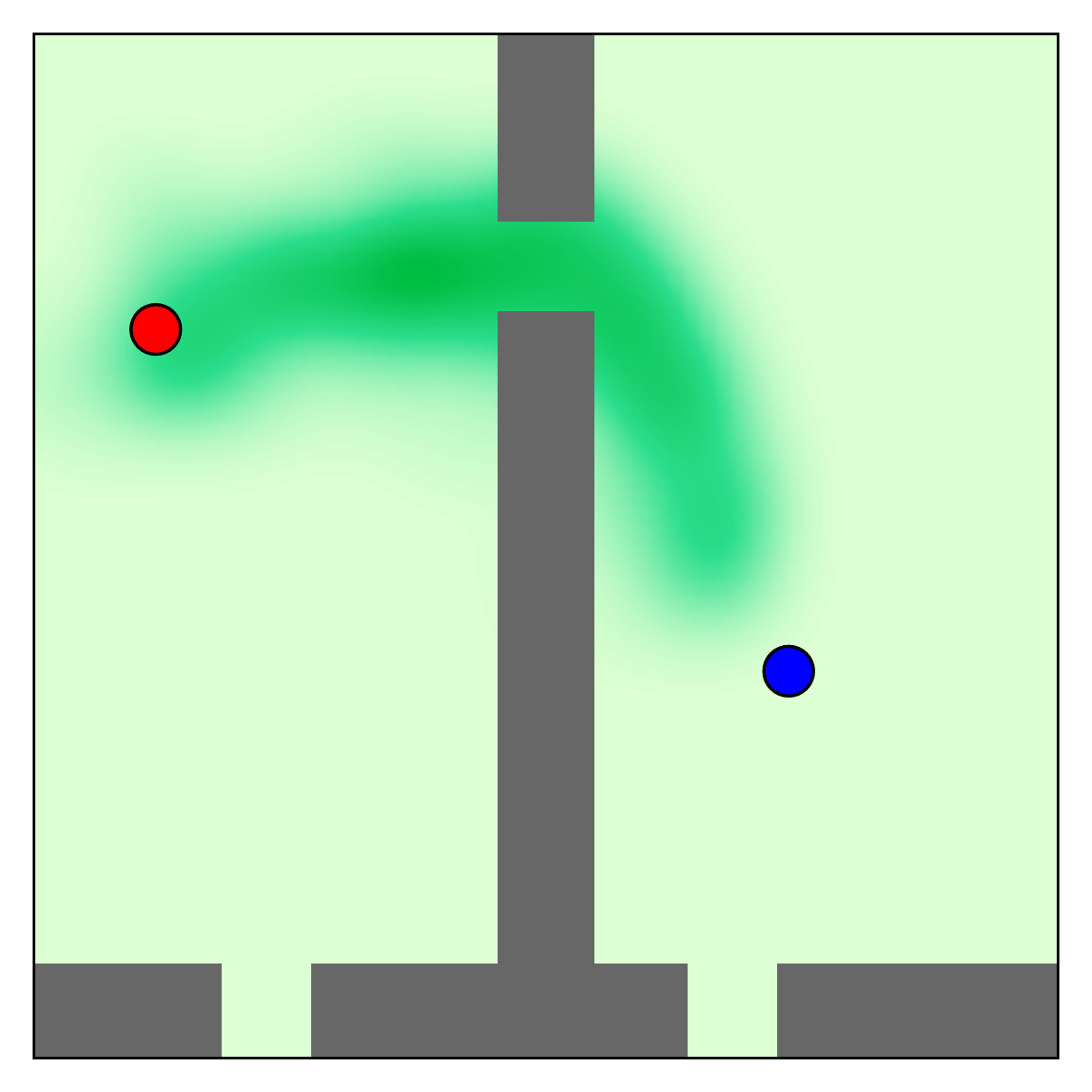}
    \caption{}
    \label{fig:regularization_c}
  \end{subfigure} \hfill
  \begin{subfigure}[b]{0.32\linewidth}
    \centering
    \includegraphics[width=\linewidth]{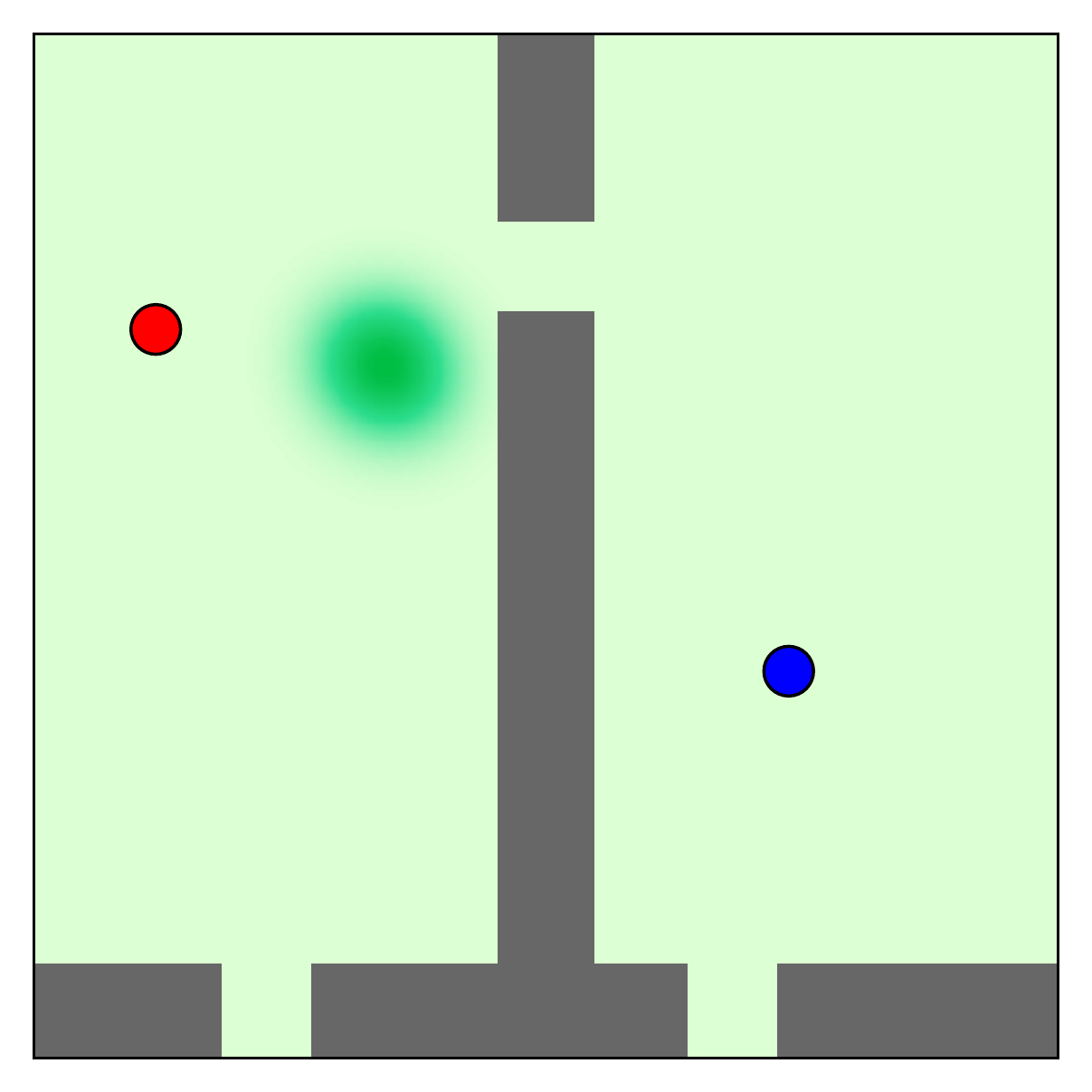}
    \caption{}
    \label{fig:regularization_b}
  \end{subfigure}
  \caption{
  Learned distributions for the narrow passage problem for different values of regularization parameter ($\lambda$), (a) $2 \times 10^{-8}$ (b) $2 \times 10^{-4}$ (chosen value of $\lambda$) (c) $2 \times 10^{-2}$. \fullFigGap
  }
  \label{fig:regularization_figs}
  \vspace{2mm}
\end{figure}

\section{Experiments}
\label{sec:appendix_experiments}

In this section, we discuss the offline computation involved in training the CVAE for different planning environments considered in \sref{sec:results}.

\subsection{Training Procedure}
\label{sec:appendix_exp_training}

\paragraph{2D Point Robot Planning} The training data consisted of 20 randomly generated environments as shown in \figref{fig:2d_environments} with 20 planning problems (start-goal pairs) in each of the environments. The environments were randomized in positions of the vertical and horizontal walls and the narrow passages through them. The CVAE was conditioned upon a vector of 102 features which included the start-goal pair (4 features) as well the $10 \times 10$ occupancy grid (100 features). The dataset generation took 4-5 hours while the training time was around 25 minutes. The CVAE was trained using samples from $\denseGraph$ with 3000 samples. The CVAE was trained to sample configurations (in $\real^2$) of the point robot.

\begin{figure}[!ht]
  \centering
  \begin{subfigure}[b]{0.32\linewidth}
    \centering
    \includegraphics[width=\linewidth]{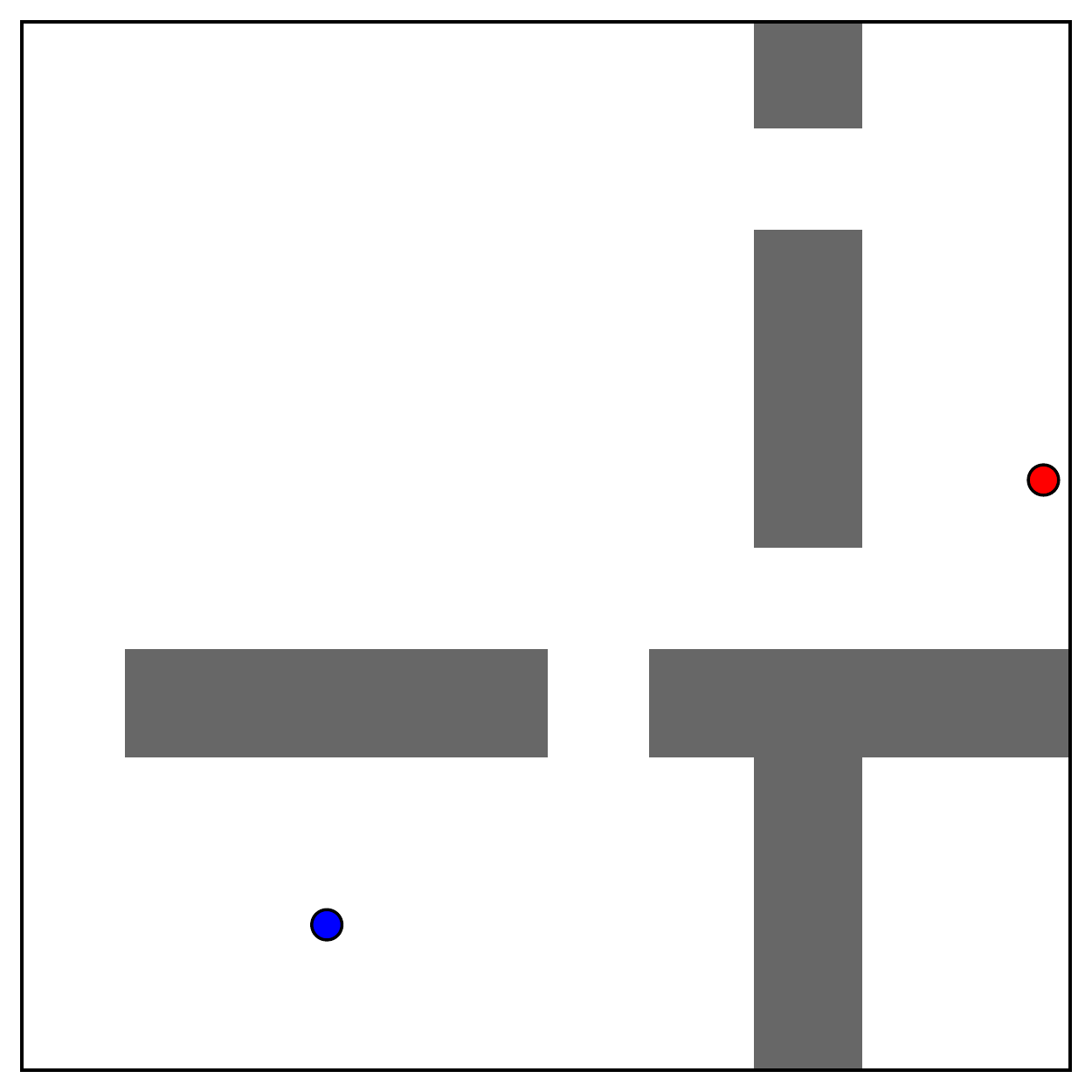}
    \caption{}
    \label{fig:regularization_a}
  \end{subfigure} \hfill
  \begin{subfigure}[b]{0.32\linewidth}
    \centering
    \includegraphics[width=\linewidth]{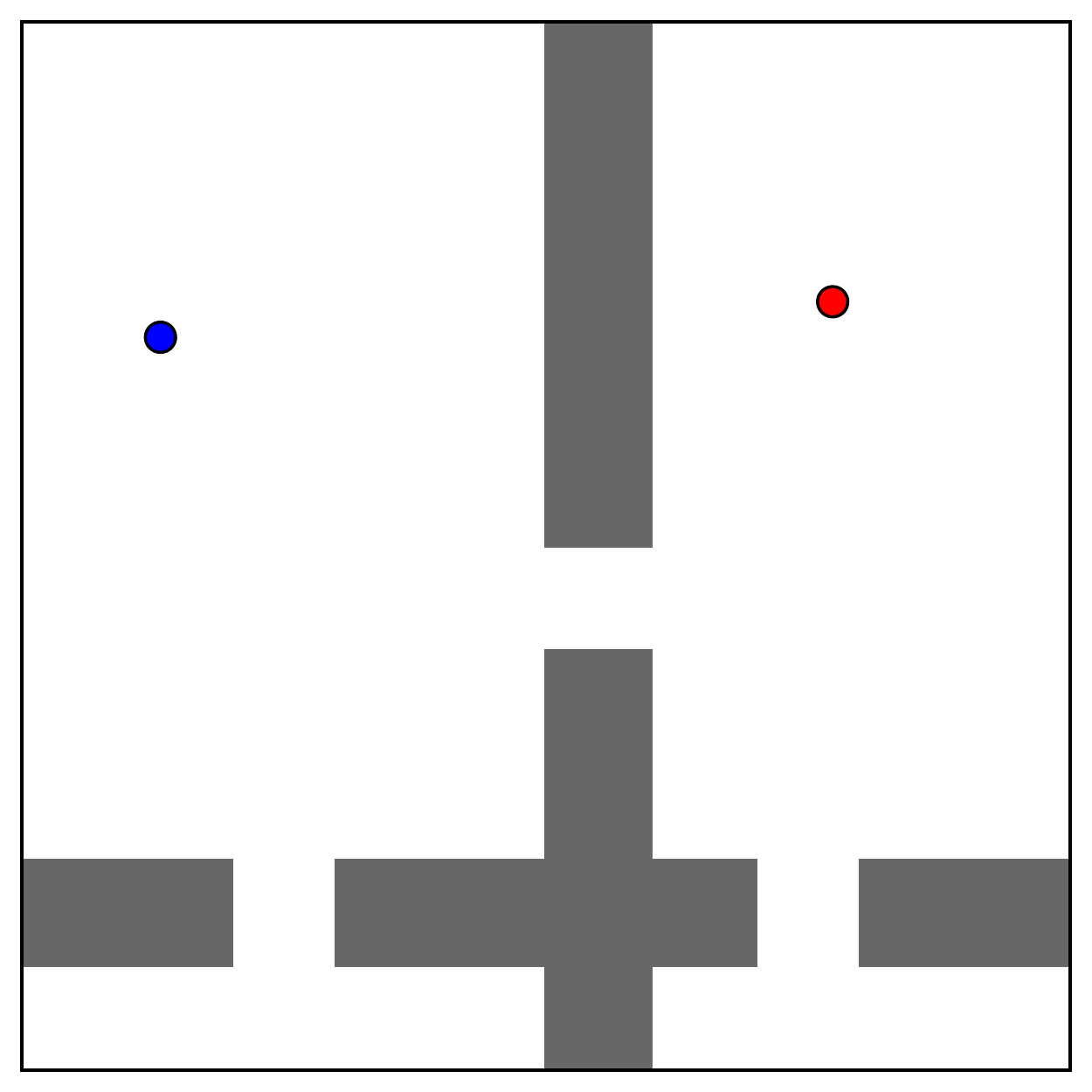}
    \caption{}
    \label{fig:regularization_b}
  \end{subfigure} \hfill
  \begin{subfigure}[b]{0.32\linewidth}
    \centering
    \includegraphics[width=\linewidth]{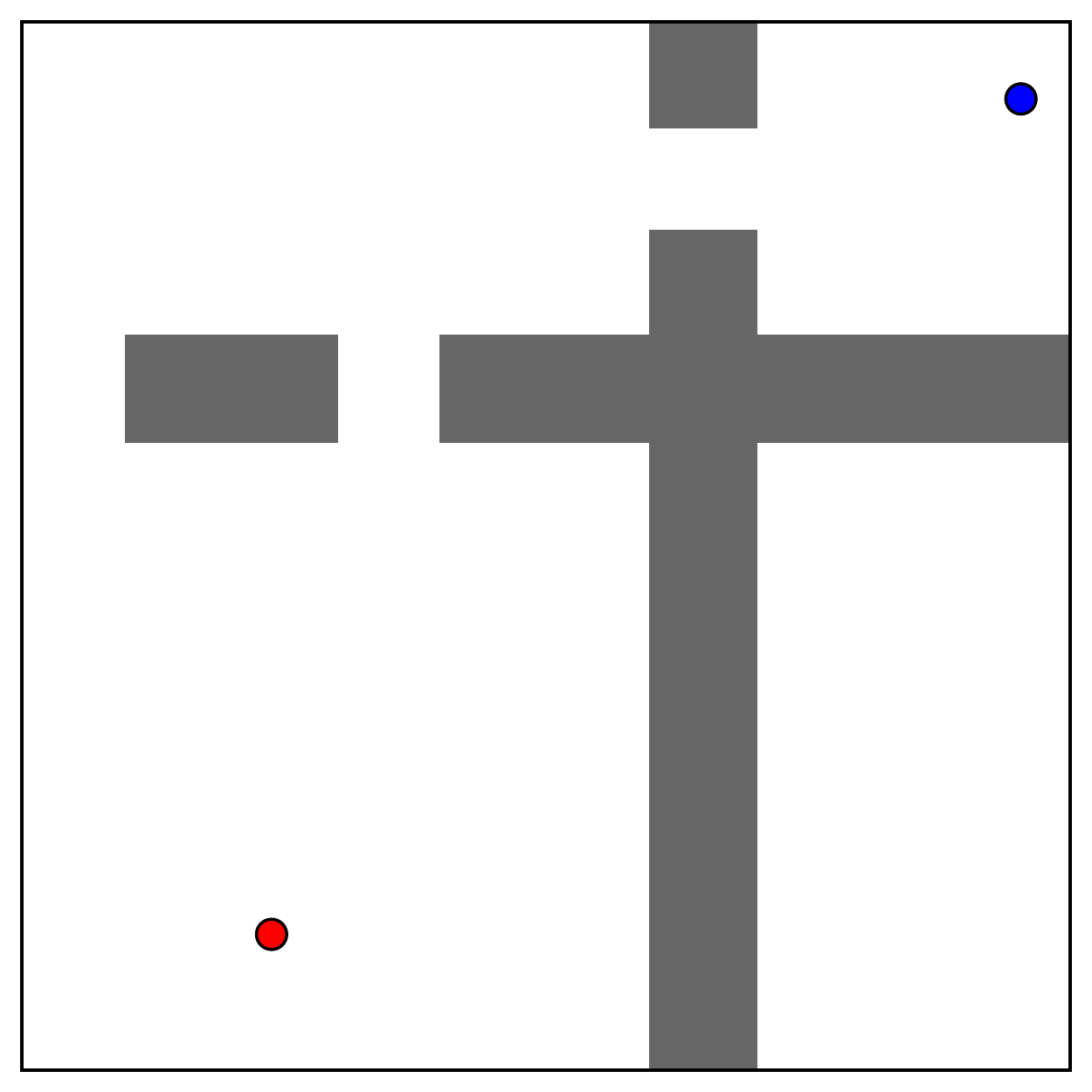}
    \caption{}
    \label{fig:regularization_c}
  \end{subfigure}
  \caption{
  Environments sampled in $\real^2$ to train the CVAE.\fullFigGap
  }
  \label{fig:2d_environments}
  \vspace{2mm}
\end{figure}

\paragraph{N-Link Arm Planning} The training procedure for the robot in $\real^3, ~\real^5$ consisted of a $\denseGraph$ with 6000 samples which was used to plan for 20 planning problems in each of 20 randomly generated 2D environments. \figref{fig:arm_environments} visualizes some of the environments sampled to train the CVAE. The red and blue positions show the start and goal states respectively. The environment has randomly placesd obstacles. The CVAE was conditioned on a vector of features which included the start-goal pair as well the $10 \times 10$ occupancy grid (100 features). The dataset generation took 6-7 hours while the training time was close to 30 minutes.

\begin{figure}[!ht]
  \centering
  \begin{subfigure}[b]{0.32\linewidth}
    \centering
    \includegraphics[width=\linewidth]{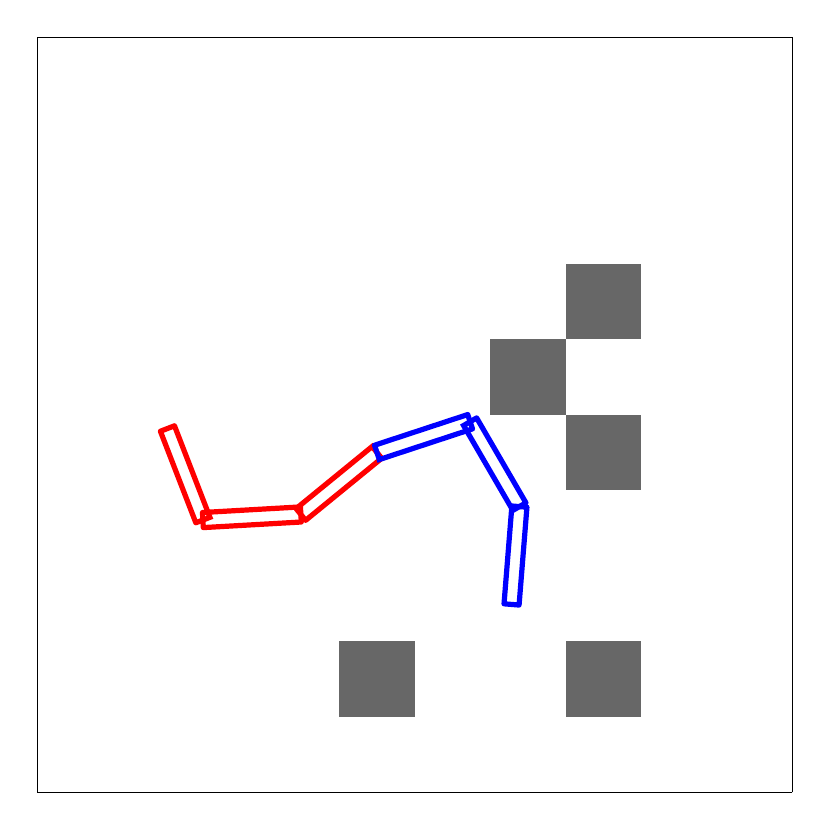}
    \caption{}
    \label{fig:arm_a}
  \end{subfigure} \hfill
  \begin{subfigure}[b]{0.32\linewidth}
    \centering
    \includegraphics[width=\linewidth]{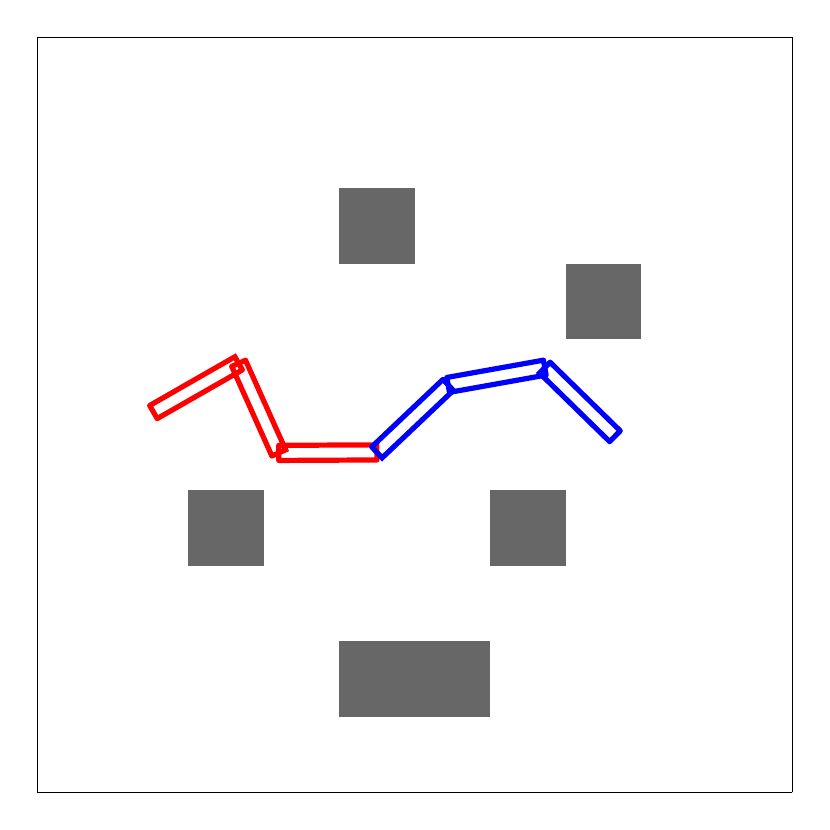}
    \caption{}
    \label{fig:arm_b}
  \end{subfigure} \hfill
  \begin{subfigure}[b]{0.32\linewidth}
    \centering
    \includegraphics[width=\linewidth]{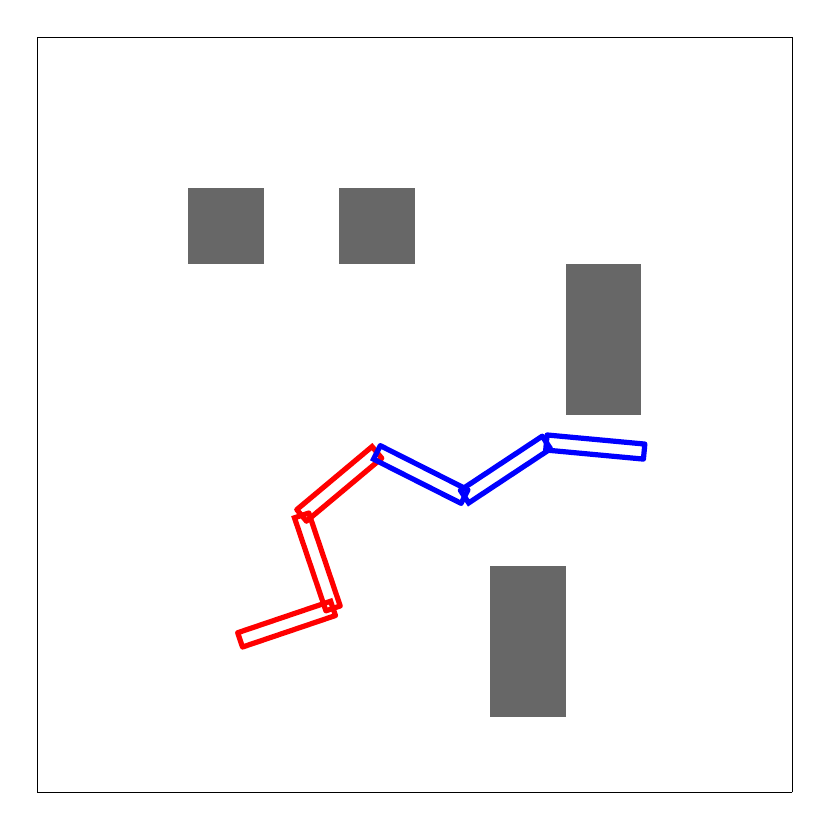}
    \caption{}
    \label{fig:arm_c}
  \end{subfigure}
  \caption{
  Environments sampled in $\real^3$ to train the CVAE.\fullFigGap
  }
  \label{fig:arm_environments}
  \vspace{2mm}
\end{figure}

\paragraph{Snake Robot Planning} For $\real^5$, the training procedure was similar to that in the $\real^2$ problems. The training procedure for the robot in $\real^9$ consisted of a $\denseGraph$ with 6000 samples which was used to plan for 20 planning problems in each of 20 randomly generated 2D environments. \figref{fig:snake_environments} visualizes some of the environments sampled to train the CVAE. The red and blue positions show the start and goal states respectively. The environments were modified in the wall being horizontal or vertical, the offset in its position, and the position of the narrow passage through it. The CVAE was conditioned on a vector of 118 features which included the start-goal pair (18 features) as well the $10 \times 10$ occupancy grid (100 features). The dataset generation took 6-7 hours while the training time was close to 30 minutes. The CVAE was trained to sample configurations of the snake robot that included the base location as well as the revolute joint angles between each of the links.

\begin{figure}[!ht]
  \centering
  \begin{subfigure}[b]{0.32\linewidth}
    \centering
    \includegraphics[width=\linewidth]{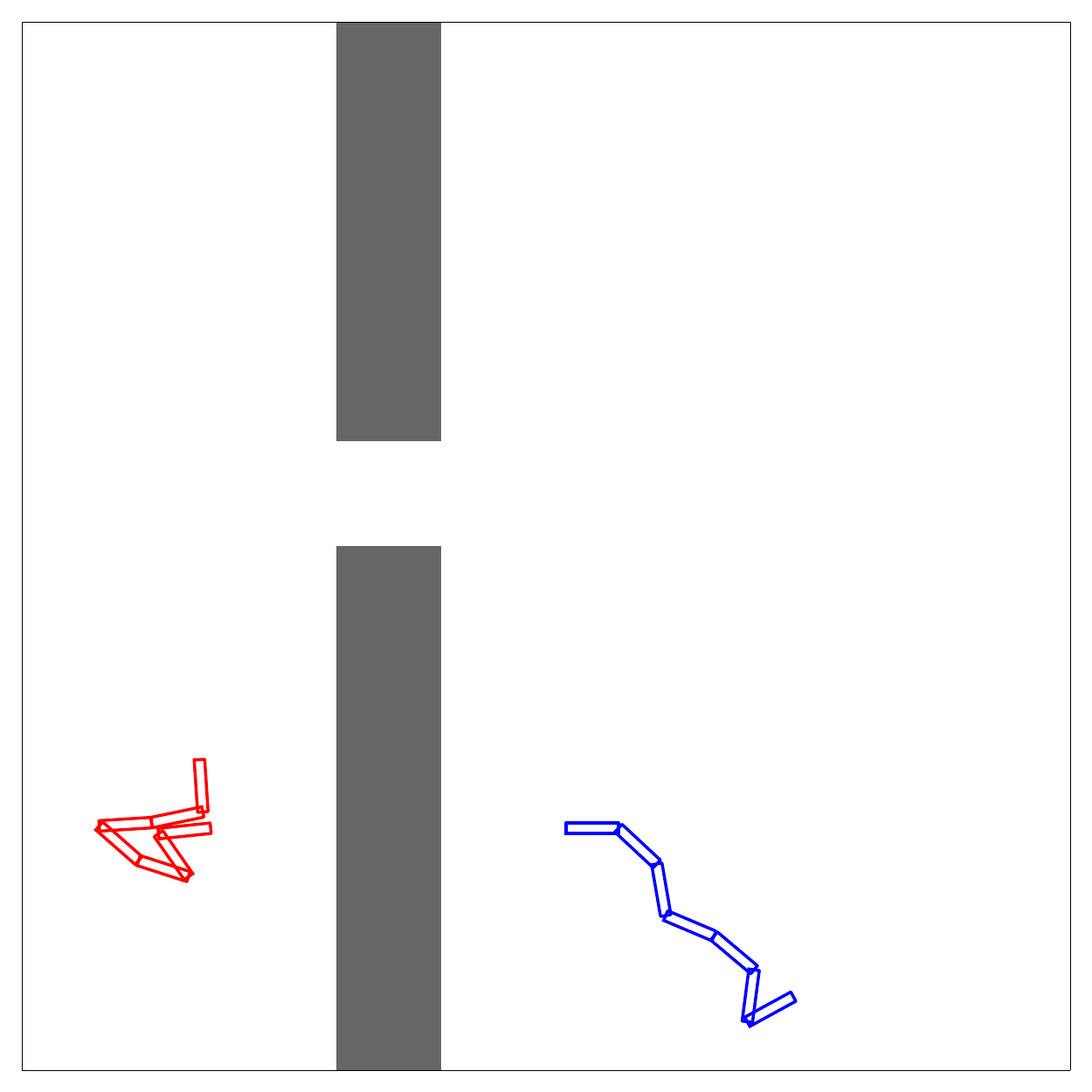}
    \caption{}
    \label{fig:snake_a}
  \end{subfigure} \hfill
  \begin{subfigure}[b]{0.32\linewidth}
    \centering
    \includegraphics[width=\linewidth]{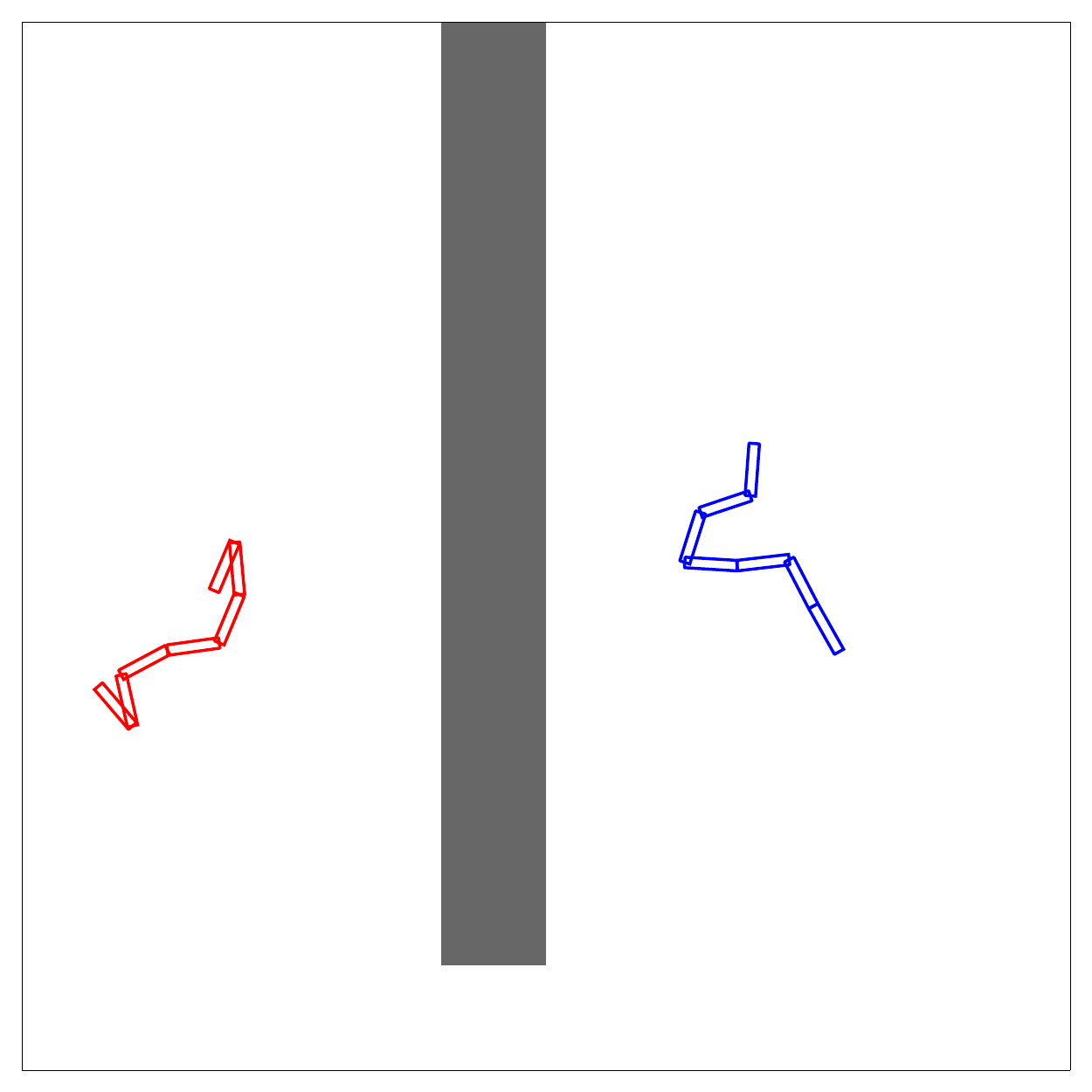}
    \caption{}
    \label{fig:snake_b}
  \end{subfigure} \hfill
  \begin{subfigure}[b]{0.32\linewidth}
    \centering
    \includegraphics[width=\linewidth]{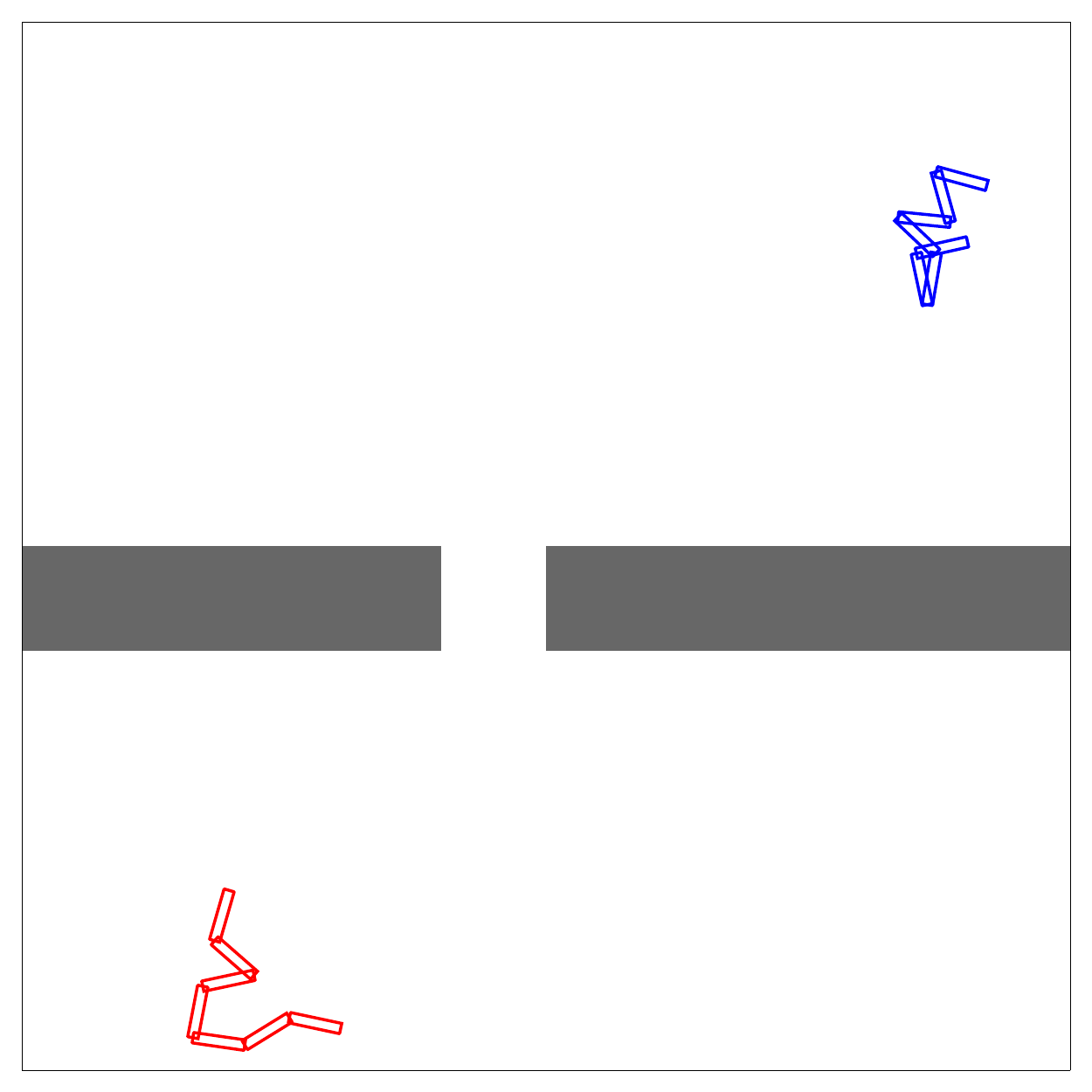}
    \caption{}
    \label{fig:snake_c}
  \end{subfigure}
  \caption{
  Environments sampled in $\real^9$ to train the CVAE.\fullFigGap
  }
  \label{fig:snake_environments}
  \vspace{2mm}
\end{figure}

\paragraph{Manipulator Arm Planning} The training data consisted of 20 random environments where the obstacles in the environment were arbitrarily repositioned. In each of the randomly generated environment, 50 planning problems were considered as an input to the train the CVAE model. \figref{fig:herb_environments} visualized three such environments, where the positions of the table and that of the obstacle on the table are modified along with start and goal configurations. The CVAE in the constrained problem was conditioned on a vector of 46\footnote{48 in the unconstrained problem since the configuration of the robot includes an additional degree of freedom.} features which included the start and goal configurations (14 features) and the poses of the table and the obstacle represented as $4 \times 4$ homogeneous matrices (32 features). The dataset was generated in 7-8 hours while the training took around an hour. Samples from a $\denseGraph$ with 30,000 configurations were used to train the CVAE. The CVAE learned to sample the robot configurations which included the joint angles at the seven revolute joints of the arm in the constrained example. The unconstrained $\real^8$ example consisted of an additional prismatic joint value denoting where the stick is held in the hand.

\begin{figure}[!ht]
  \centering
  \begin{subfigure}[b]{0.32\linewidth}
    \centering
    \includegraphics[width=\linewidth]{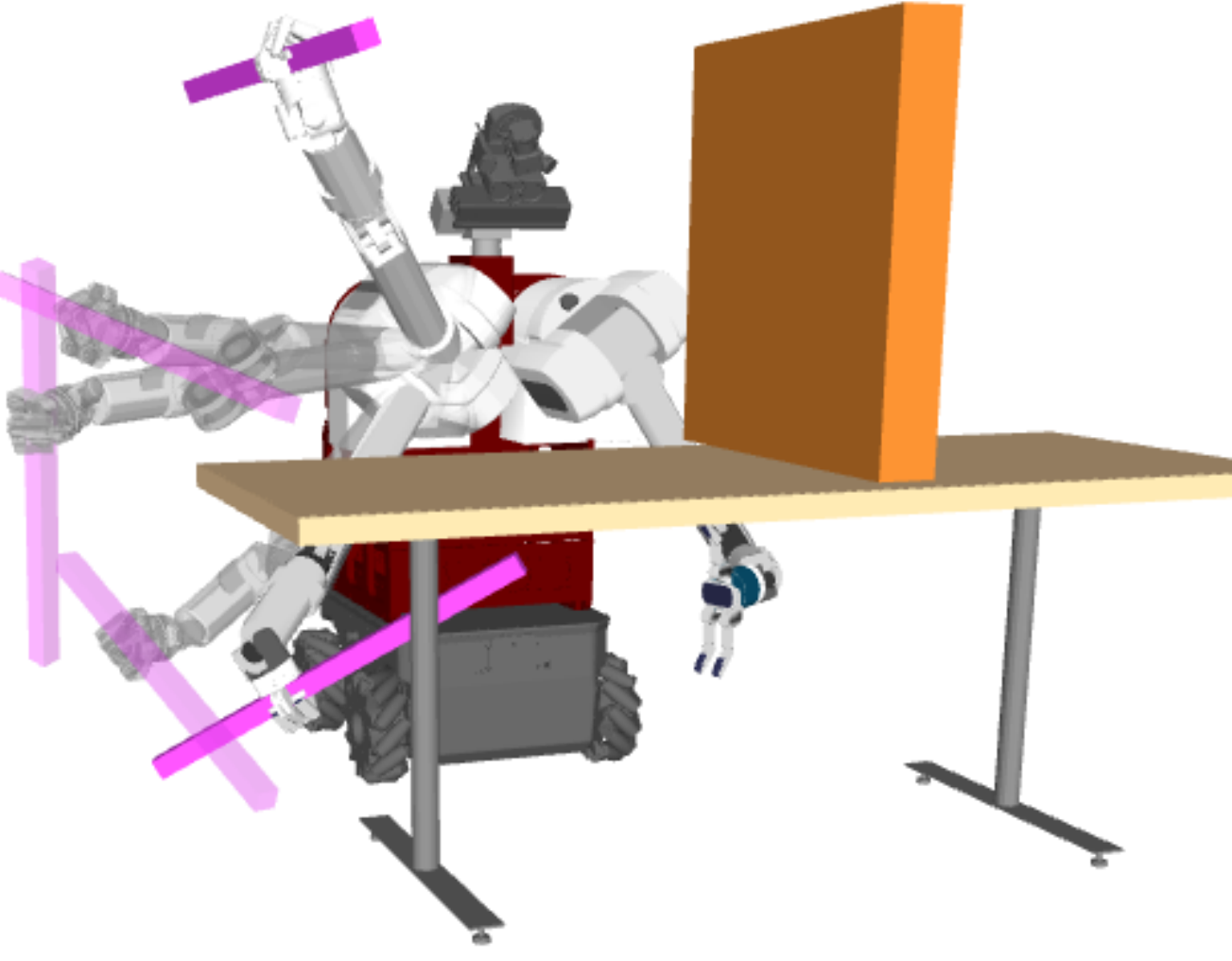}
    \caption{}
    \label{fig:herb_appendix_a}
  \end{subfigure} \hfill
  \begin{subfigure}[b]{0.32\linewidth}
    \centering
    \includegraphics[width=\linewidth]{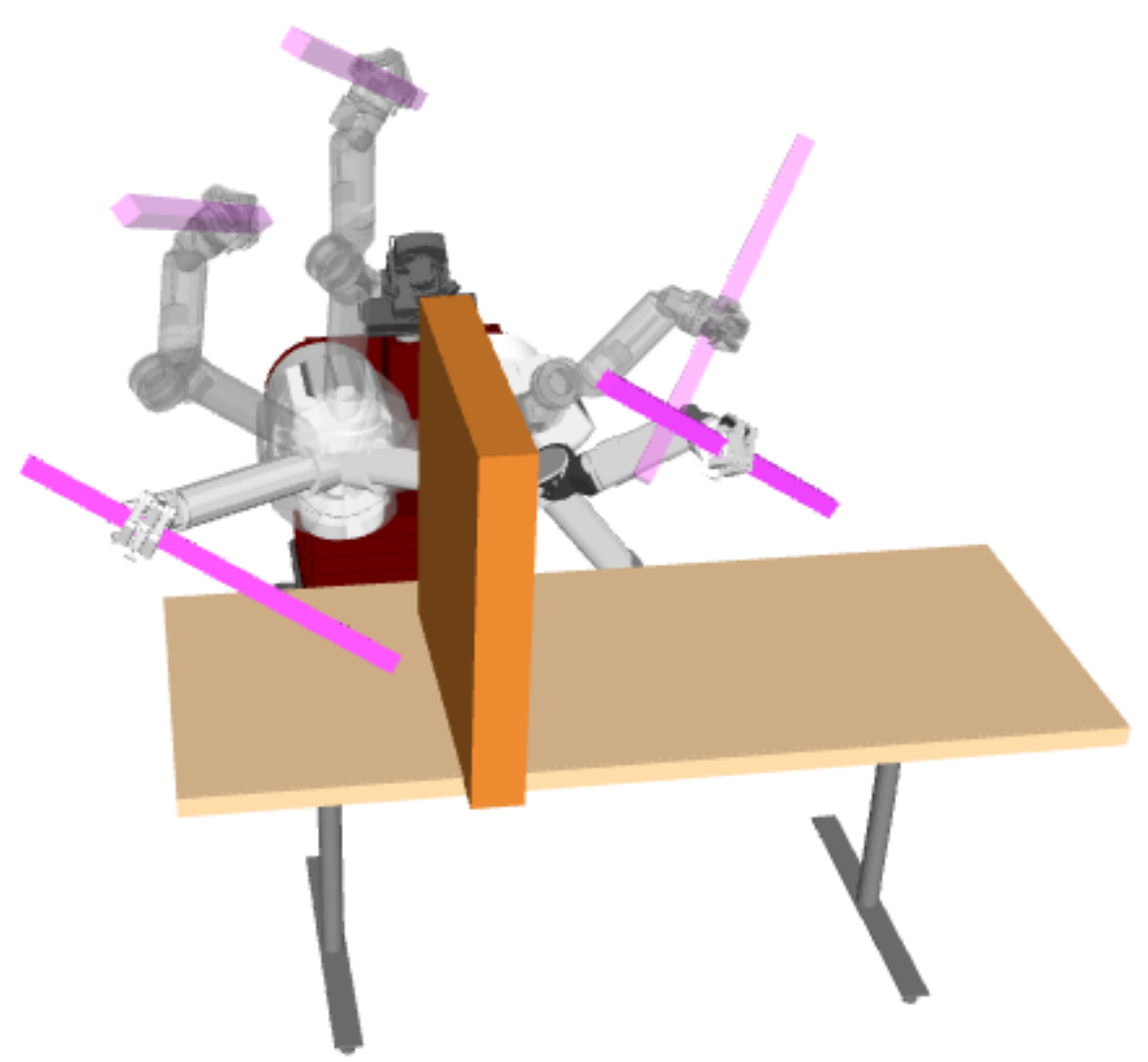}
    \caption{}
    \label{fig:herb_appendix_b}
  \end{subfigure} \hfill
  \begin{subfigure}[b]{0.32\linewidth}
    \centering
    \includegraphics[width=\linewidth]{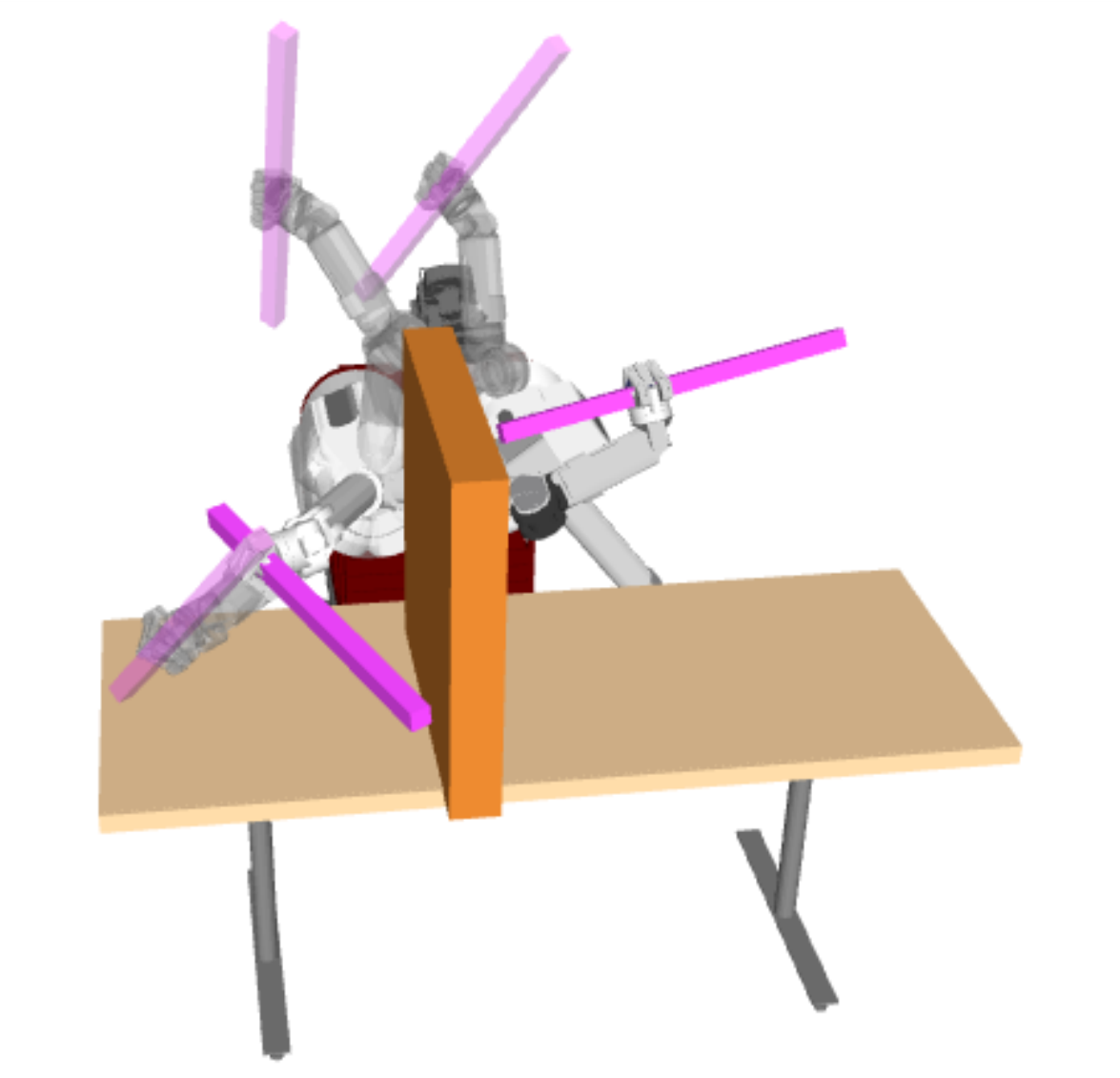}
    \caption{}
    \label{fig:herb_appendix_c}
  \end{subfigure}
  \caption{
  Manipulator arm environments sampled to train the CVAE. The solutions obtained using samples generated by \algLEGO are also visualized. \fullFigGap
  }
  \label{fig:herb_environments}
  \vspace{2mm}
\end{figure}

\subsection{Additional Experiment Results}

\paragraph{\algBottleneck and \algDiversity}

In addition to the qualitative observations presented in \sref{sec:results} (O1 and O2) and \figref{fig:bd_experiments}, we present here the analysis of the performance of the foundational algorithms of \algLEGO, namely \algBottleneck and \algDiversity when compared to \algSP. \figref{fig:bottleneck_success_rate} shows that on a $\real^2$ world, \algBottleneck has a significantly higher success rate that \algSP, almost converging to $1.0$ by $400$ samples. \figref{fig:bottleneck_path_length} shows that in terms of path length, \algSP is initially better but both are eventually comparable. This is expected because of the near-optimality objective of \algBottleneck~\eref{eq:bottleneck_nodes}. \figref{fig:diverse_success_rate} shows that \algDiversity has a better success rate. \figref{fig:diverse_path_length} shows that while both algorithms are comparable in terms of path length, \algDiversity has a smaller variance.

\begin{figure}[!h]
  \centering
  \begin{subfigure}[b]{0.49\linewidth}
    \centering
    \includegraphics[width=\linewidth]{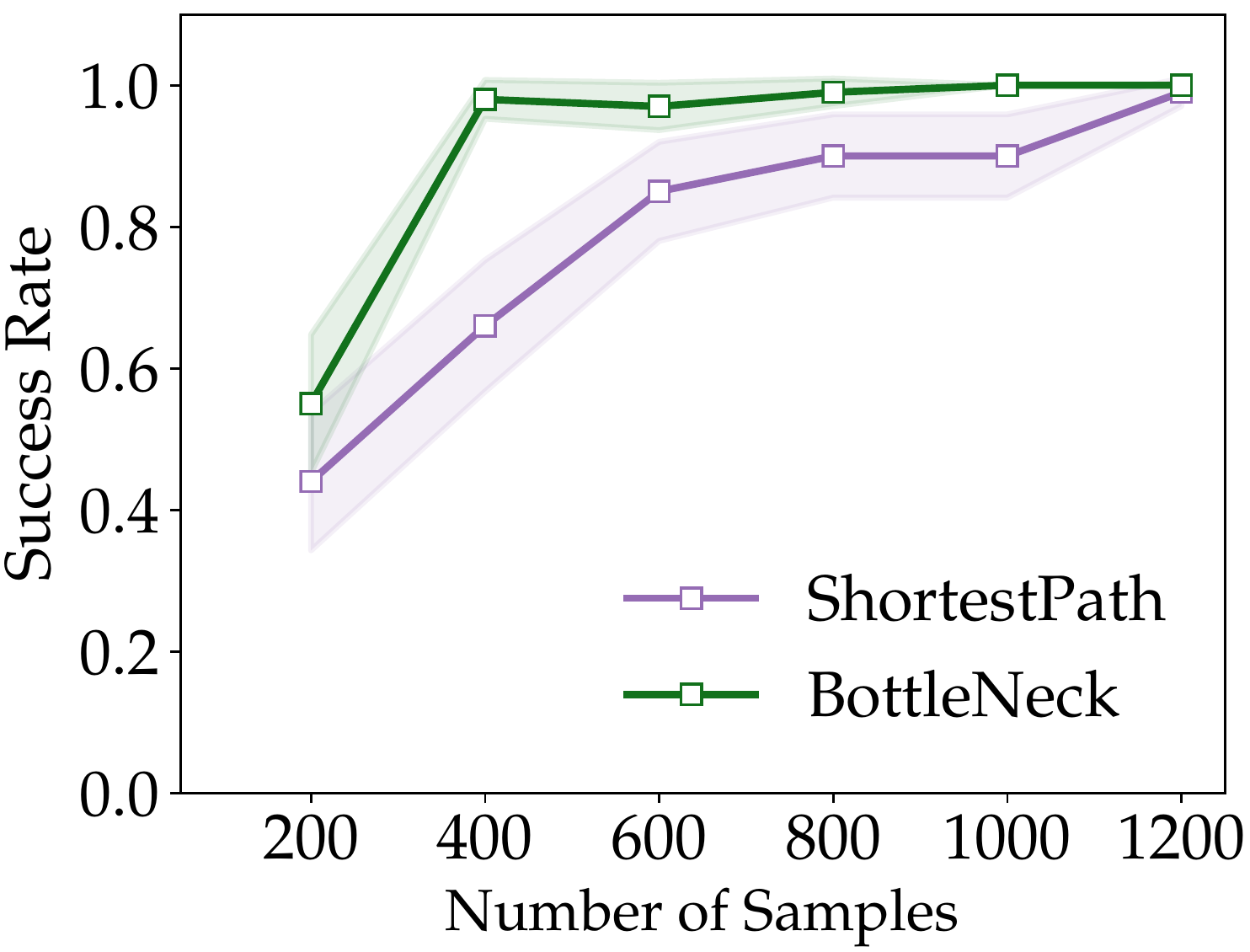}
    \caption{}
    \label{fig:bottleneck_success_rate}
  \end{subfigure}
  \begin{subfigure}[b]{0.49\linewidth}
    \centering
    \includegraphics[width=\linewidth]{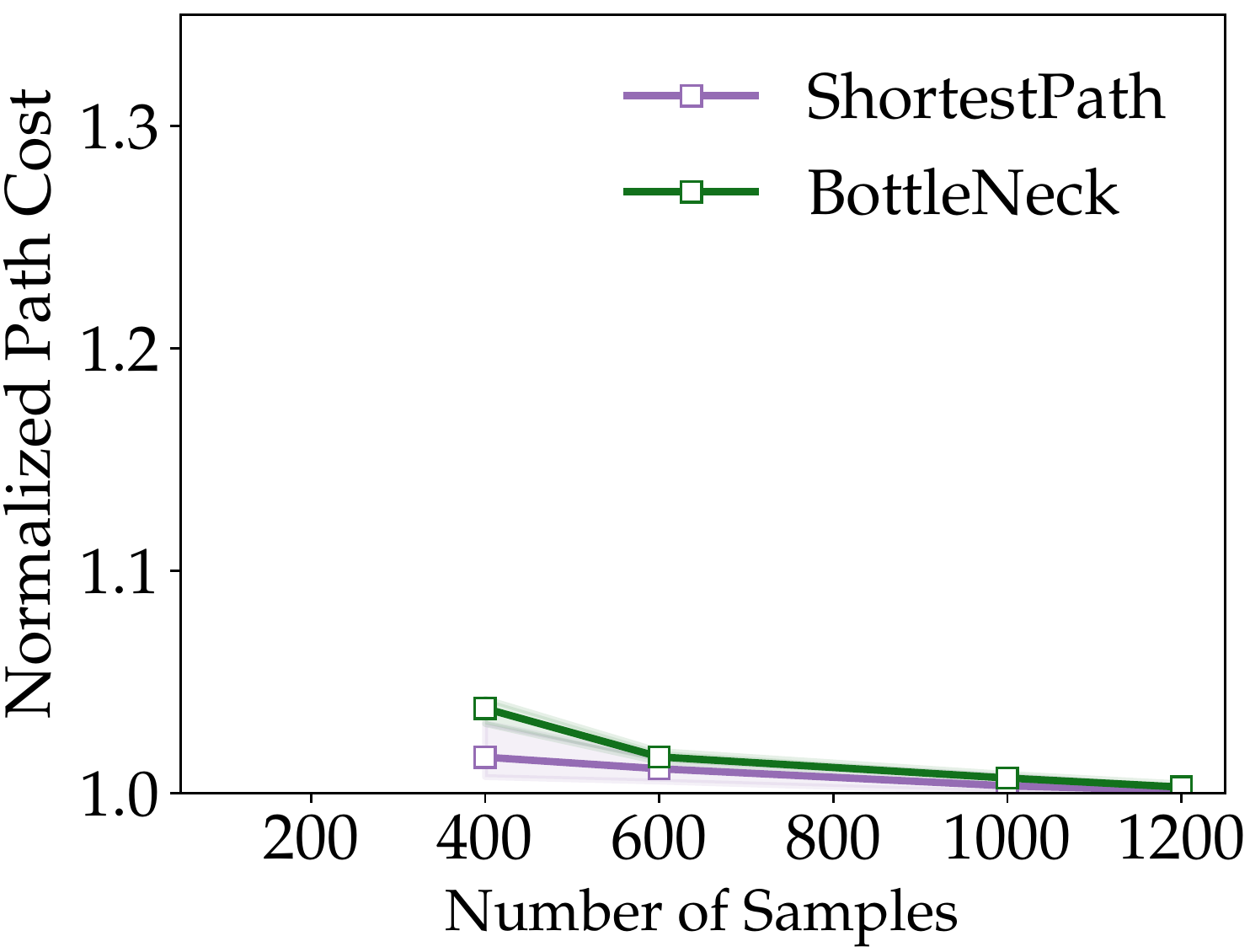}
    \caption{}
    \label{fig:bottleneck_path_length}
  \end{subfigure}
  \begin{subfigure}[b]{0.49\linewidth}
    \centering
    \includegraphics[width=\linewidth]{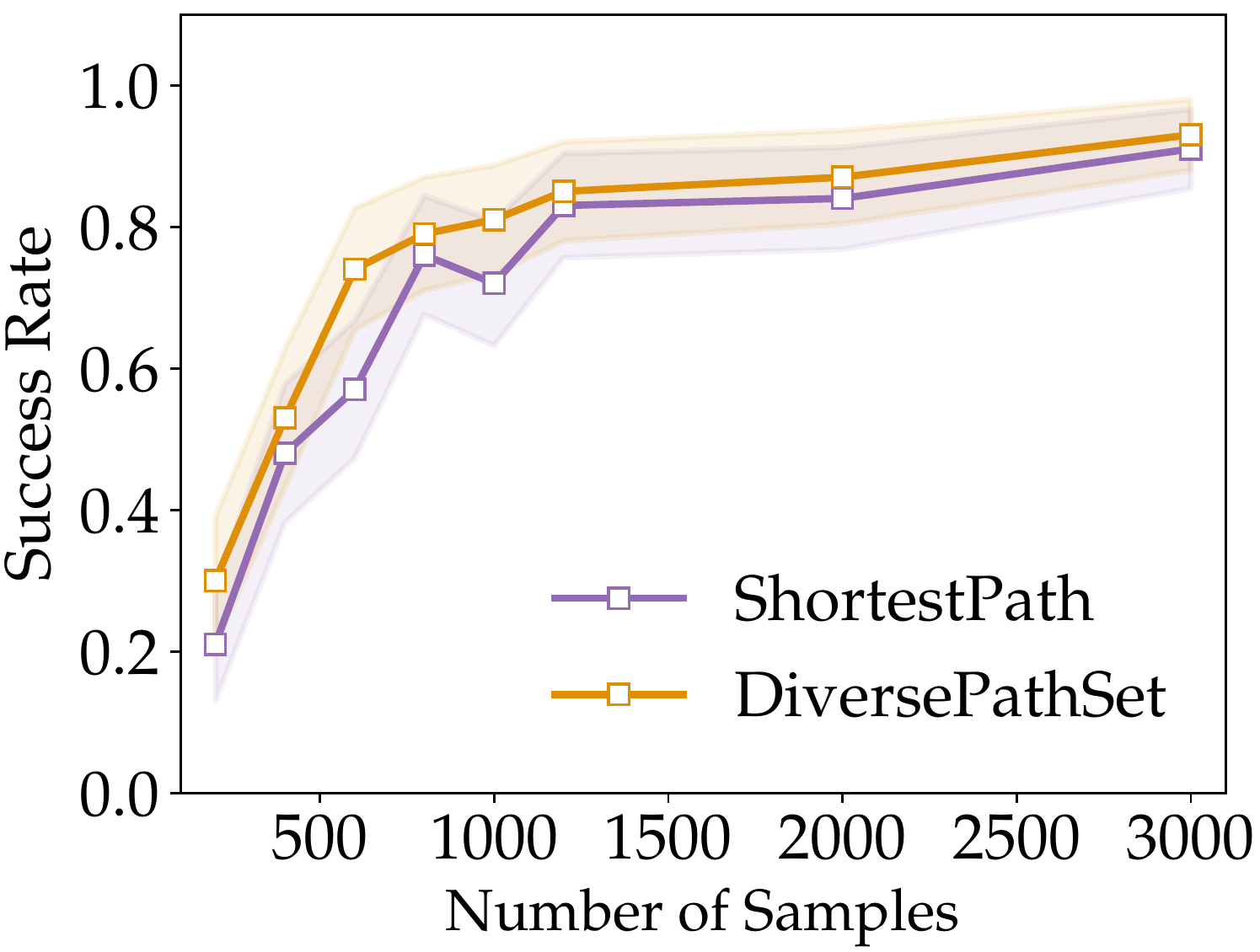}
    \caption{}
    \label{fig:diverse_success_rate}
  \end{subfigure}
  \begin{subfigure}[b]{0.49\linewidth}
    \centering
    \includegraphics[width=\linewidth]{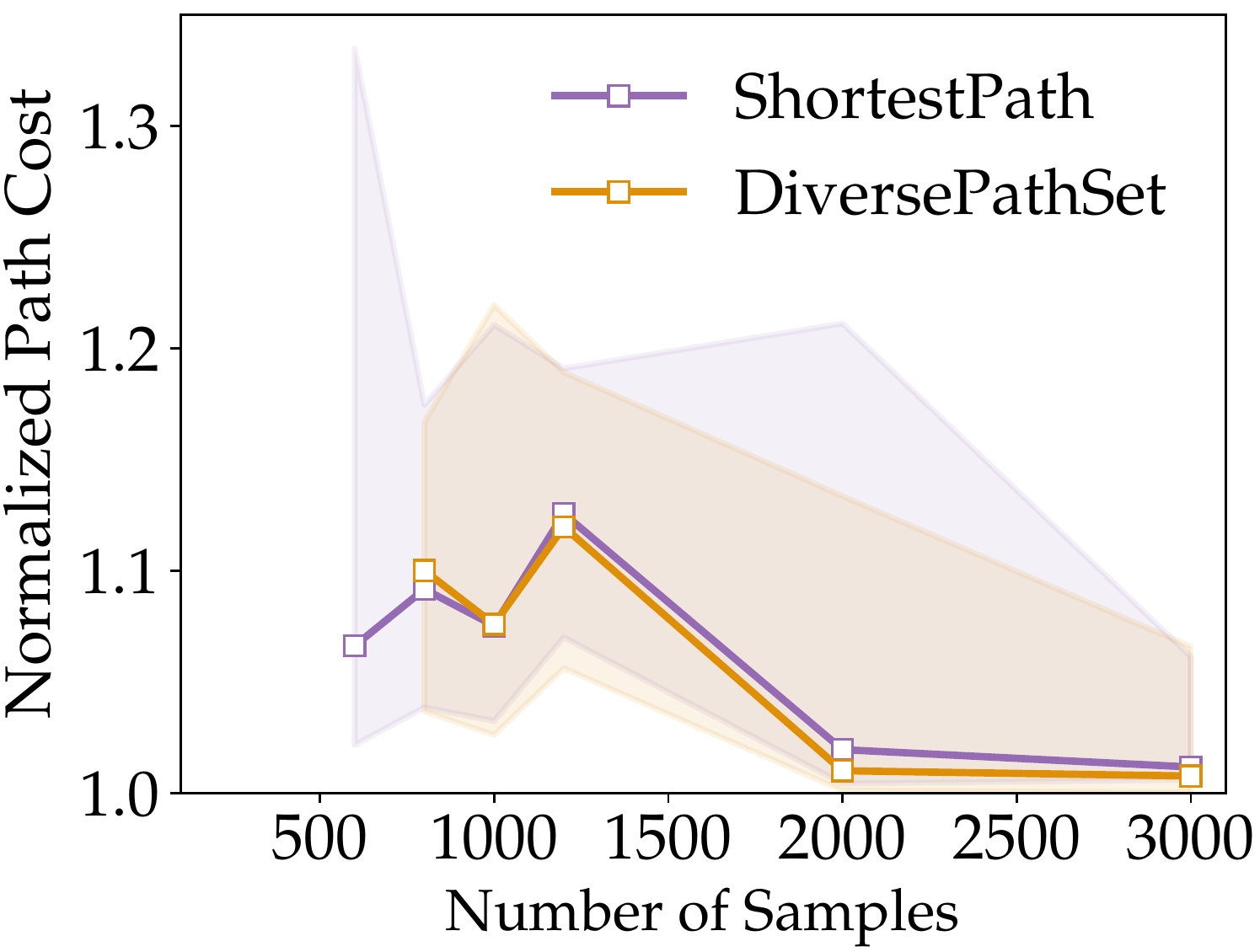}
    \caption{}
    \label{fig:diverse_path_length}
  \end{subfigure}
  \caption{
  Comparison of \algSP against \algBottleneck (top row) and \algDiversity (bottom row) on success rate (left column) and normalized path length (right column).\fullFigGap
  }
  \vspace{2mm}
  \label{fig:parameter_figs}
\end{figure}

\subsection{Roadmap Construction}

\begin{figure}[!h]
  \centering
  \begin{subfigure}[b]{0.49\linewidth}
    \centering
    \includegraphics[width=\linewidth]{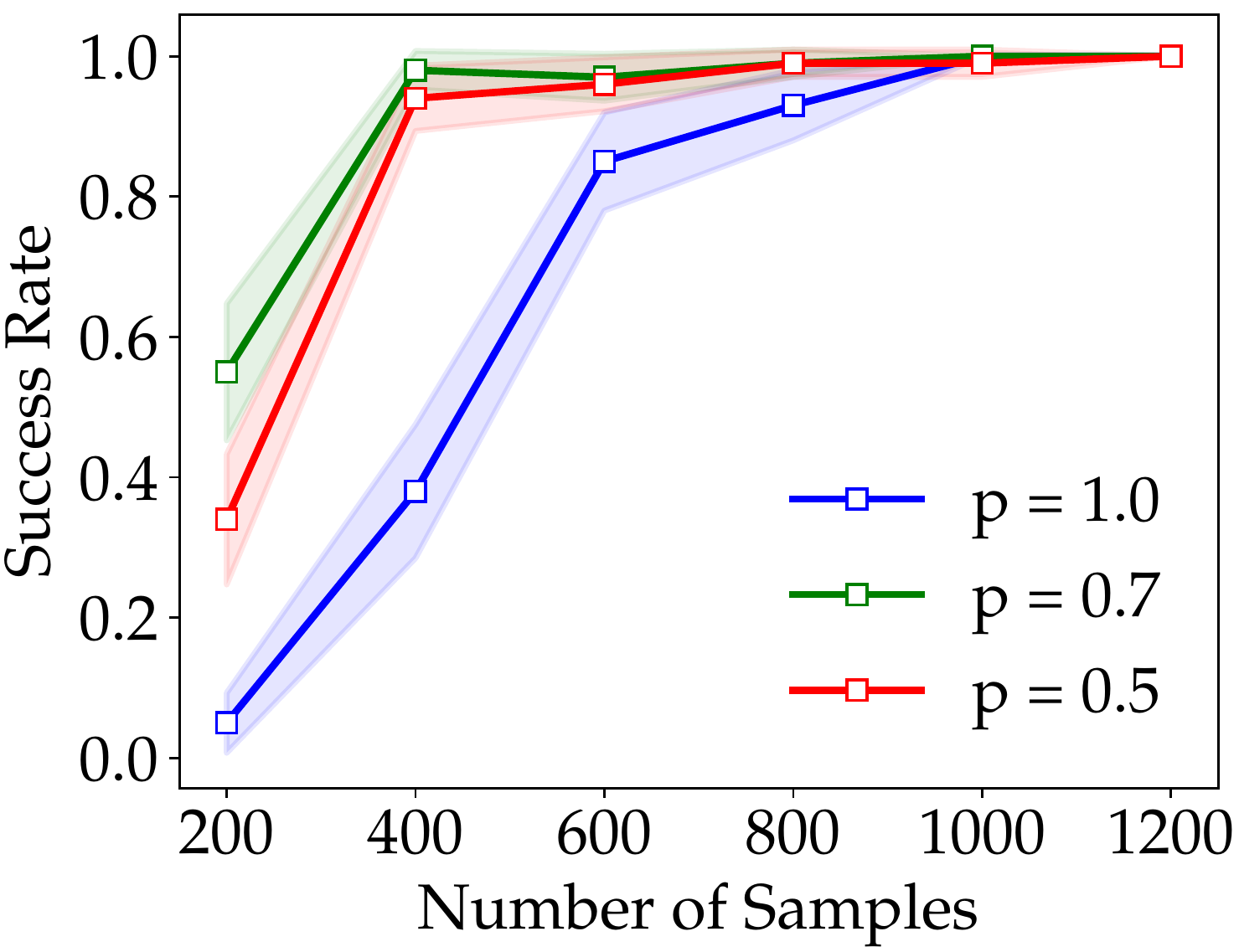}
    \caption{}
    \label{fig:parameter_sr}
  \end{subfigure}
  \begin{subfigure}[b]{0.49\linewidth}
    \centering
    \includegraphics[width=\linewidth]{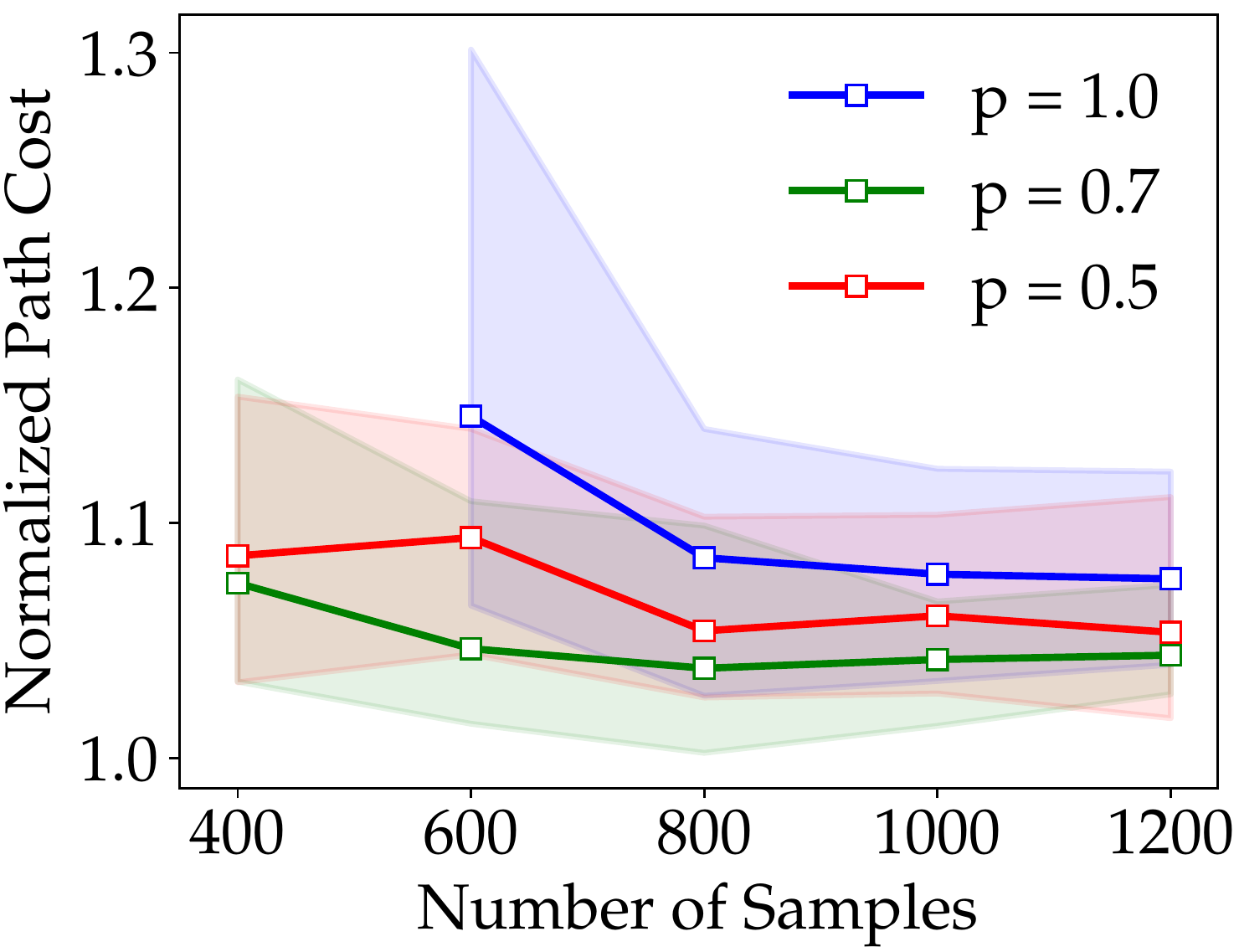}
    \caption{}
    \label{fig:parameter_pl}
  \end{subfigure}
  \caption{
  Performance of \algLEGO on different roadmaps. The parameter $p$ denotes the ratio of Halton samples to learned samples in the roadmap).\fullFigGap
  }
  \label{fig:parameter_figs}
  \vspace{2mm}
\end{figure}

To evaluate the performance of \algLEGO, we construct sparse roadmaps, $\sparseGraph$. The sparse graph consisted of 200 samples in $\real^2,~\real^5$ problems and 300 samples in case of $\real^7,~\real^8$ and $\real^9$ problems. Not however, that this sparse roadmap contains both the learned samples as well as samples generated from Halton sequence. While the learned samples are concentrated near the bottleneck regions and along diverse paths, Halton samples ensure the coverage over the free regions of the configuration space as well. We analyze different proportions of Halton samples and learned samples. \figref{fig:parameter_figs} shows the performance characteristics of \algLEGO on roadmaps constructed with different proportions of Halton and learned samples for the 2D point robot example. We observe that \algLEGO over a roadmap of 200 samples with just 30\% learned samples significantly outperforms \algLEGO over a Halton graph ($p = 1$). \figref{fig:herb_samples} visualizes the samples generated by \algLEGO represented by the end-effector positions (blue) in the workspace.

\begin{figure}[!ht]
  \centering
  \begin{subfigure}[b]{0.32\linewidth}
    \centering
    \includegraphics[width=\linewidth]{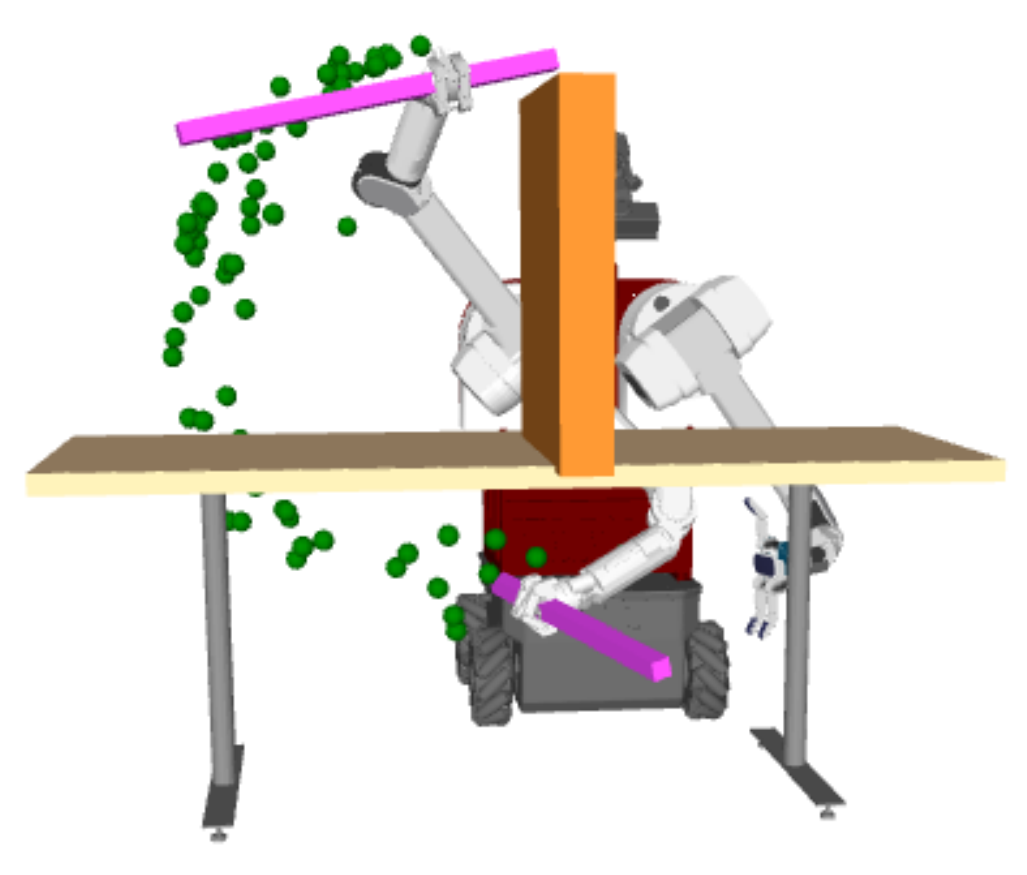}
    \caption{}
    \label{fig:herb_samples_a}
  \end{subfigure} \hfill
  \begin{subfigure}[b]{0.32\linewidth}
    \centering
    \includegraphics[width=\linewidth]{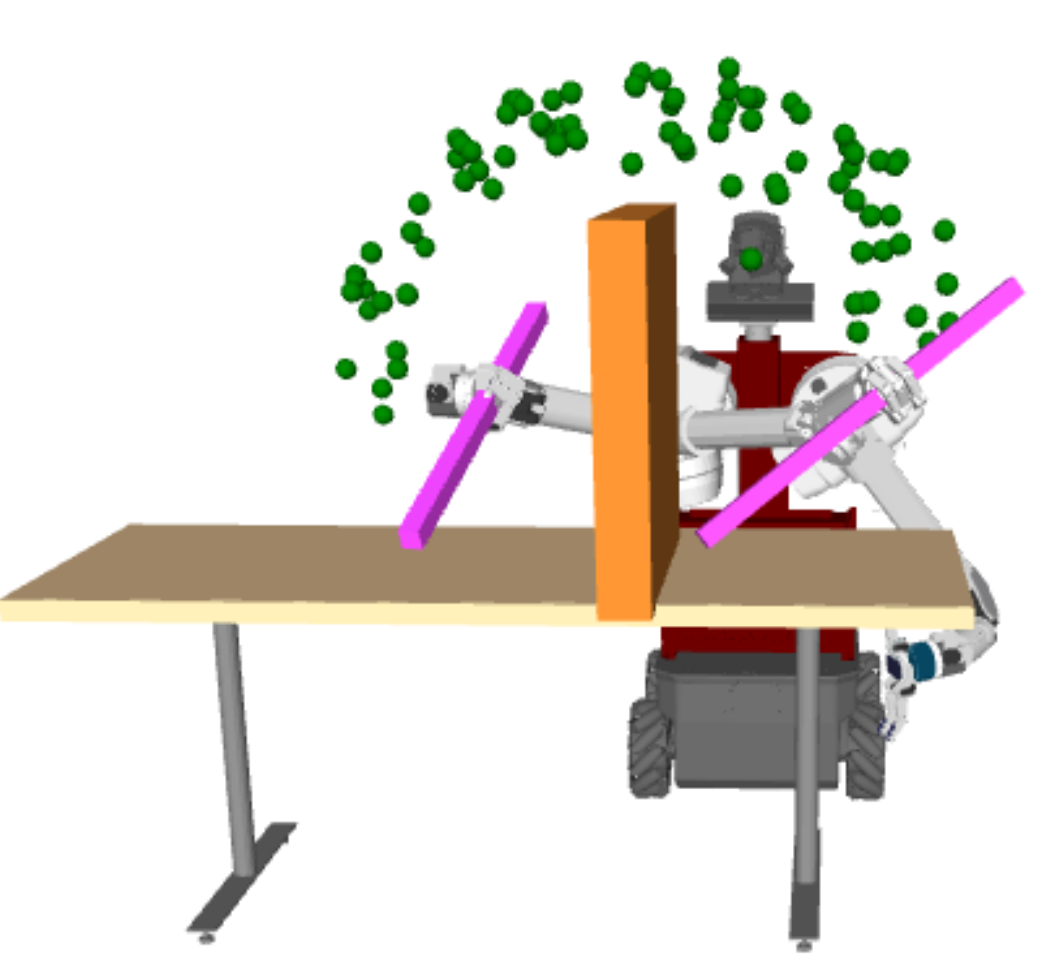}
    \caption{}
    \label{fig:herb_samples_b}
  \end{subfigure} \hfill
  \begin{subfigure}[b]{0.32\linewidth}
    \centering
    \includegraphics[width=\linewidth]{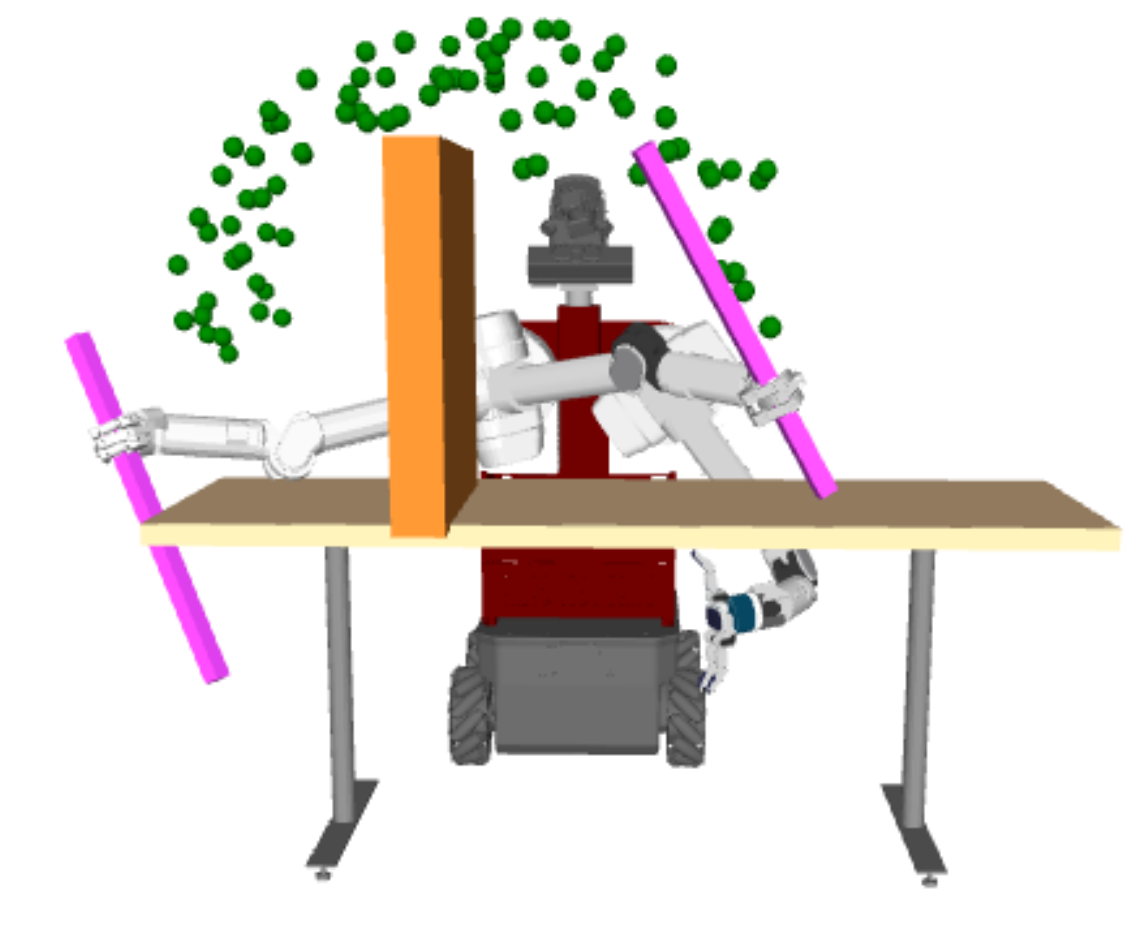}
    \caption{}
    \label{fig:herb_samples_c}
  \end{subfigure}
  \caption{
  Samples generated by \algLEGO for manipulator arm planning. The blue dots represent the end effector positions corresponding to the samples.\fullFigGap
  }
  \label{fig:herb_samples}
  \vspace{2mm}
\end{figure}

\end{appendices}

}

\end{document}